\RequirePackage{fix-cm}
\documentclass[]{article}
\usepackage{arxiv}
\usepackage[english]{babel}
\usepackage{lipsum}
\usepackage[export]{adjustbox}
\usepackage{graphicx} 
\usepackage{subcaption}
\usepackage{comment}
\usepackage{verbatim}
\usepackage{algorithm}
\usepackage{algpseudocode}
\usepackage{makecell}
\usepackage[numbers,sort&compress]{natbib}

\usepackage{placeins}
\usepackage[table]{xcolor}
\usepackage{bm}
\definecolor{mygray}{gray}{0.9} 
\usepackage{booktabs} 
\usepackage{csquotes}
\usepackage{graphicx}
\usepackage{multirow}
\usepackage{listings}
\usepackage{xcolor}
\definecolor{abstract_background}{RGB}{235,235,235}
\usepackage{float}
\usepackage{color, colortbl}
\definecolor{ao(english)}{rgb}{0.0, 0.5, 0.0}
\usepackage{xr-hyper}
\usepackage{hyperref}

\hypersetup{colorlinks,linkcolor={ao(english)},citecolor={ao(english)},urlcolor={red}} 
\usepackage{stackrel,amssymb}
\usepackage{amsfonts,amsmath,amstext,amsbsy,amsthm}

\theoremstyle{definition}
\newtheorem{prop}{Proposition}

\newtheorem{remark}{Remark}[section]
\usepackage{cleveref}
\usepackage{amsmath}
\definecolor{LightCyan}{rgb}{0.88,1,1}
\definecolor{LightRed}{rgb}{1,0.7,0.7}

\def\bu{\bm{u}}

\def\bx{\bm{x}}

\newcommand{\R}{\mathbb{R}}

\DeclareMathOperator*{\argmin}{argmin}


\title{Next Generation Equation-Free Multiscale Modelling of Crowd Dynamics via Machine Learning}

\author{ \textbf{Hector Vargas Alvarez\textcolor{teal}{$^{1}$},} \textbf{Dimitrios G. Patsatzis\textcolor{teal}{$^{1}$},}
\textbf{Lucia Russo\textcolor{teal}{$^{2}$},}
\textbf{Ioannis Kevrekidis\textcolor{teal}{$^{3}$},}
\textbf{Constantinos Siettos\textcolor{teal}{$^{4,}$}\thanks{Corresponding author, email: \texttt{constantinos.siettos@unina.it}}\hspace{0.1cm}} \\
{}\\
\textcolor{teal}{$^{(1)}$}Modelling Engineering Risk and Complexity, \emph{Scuola Superiore Meridionale}, Naples 80138, Italy \hspace{1cm}\\
\textcolor{teal}{$^{(2)}$}Institute of Science and Technology for Energy and Sustainable Mobility, \emph{Consiglio Nazionale delle} \\ \hspace{0.39cm}\emph{Ricerche}, Naples 80125, Italy 
\hspace{1cm}\\
\textcolor{teal}{$^{(3)}$}Department of Chemical and Biomolecular Engineering, Department of Applied Mathematics and \\
\hspace{0.39cm}Statistics \& Department of Urology, \emph{Johns Hopkins University}, Baltimore, MD 21218, USA 
\hspace{1cm}\\
\textcolor{teal}{$^{(4)}$}Department of Mathematics and Applications ‘‘Renato Caccioppoli", \emph{Universit\`a degli Studi di Napoli}\\ \hspace{0.39cm}\emph{Federico II,} Naples 80126, Italy }

\usepackage{amsopn}
\DeclareMathOperator{\diag}{diag}




\begin{document}

\maketitle

\begin{abstract}
\colorbox{abstract_background}{\begin{minipage}{1\linewidth}
Bridging the microscopic/individual and macroscopic/emergent modelling scales in crowd dynamics constitutes an important, open challenge for systematic numerical analysis, optimization, and control. Here, we propose a manifold-informed machine learning approach to learn the discrete evolution operator for the emergent/collective crowd dynamics in latent spaces from high-fidelity individual/agent-based simulations. The proposed framework builds upon our previous works on the development of \textit{next-generation Equation-free algorithms} for dealing with the ``curse of dimensionality" when learning surrogate models for high-dimensional and multiscale systems. The proposed approach is a four-stage one, \textit{explicitly conserving the mass} of the reconstructed dynamics in the high-dimensional space. In the first step, we derive continuous macroscopic fields (densities) from discrete microscopic data (pedestrians' positions) using Kernel Density Estimation. In the second step, based on manifold learning, we construct a map from the density-field space into an appropriate latent space parametrized by a few coordinates based on Proper-Orthogonal Decomposition (POD) of the corresponding density distributions. The third step involves learning reduced-order surrogate models in the latent space using machine learning techniques, particularly Long Short-Term Memory networks and Multivariate Autoregressive models. Finally, we reconstruct the crowd dynamics in the high-dimensional space with POD, demonstrating that the POD reconstruction conserves the mass. Thus, with this ``embed $\rightarrow$ learn in latent space $\rightarrow$ lift back to the high-dimensional space'' pipeline, we create an effective solution operator of the unavailable (at the macroscopic scale) Partial Differential Equation for the evolution of the density distribution. For our illustrations, we used the Social Force Model to generate data in a corridor with an obstacle, imposing periodic boundary conditions in two scenarios: (i) an unidirectional flow, and (ii) a counterflow. The numerical results demonstrate high accuracy, robustness, and generalizability, thus allowing for fast and accurate modelling/simulation of crowd dynamics from agent-based simulations.
\end{minipage}
}
\end{abstract}

\keywords{Crowd Dynamics, Multiscale modelling, Machine Learning, Numerical Analysis, Conservation laws, Reduced Order Models}

\section{Introduction}
\label{sec:intro}
An important open challenge in the modelling of human crowd dynamics lies in bridging across the different modelling scales. Typically, models are formulated at three distinct levels: the microscopic (individual), mesoscopic (kinetic), and macroscopic (hydrodynamic) scales. Microscopic models describe the detailed behavior of individual pedestrians using agent-based approaches, such as the Social Force Model (SFM) \cite{helbing1995social,Helbing2000} and its variants (see e.g. \cite{corbetta2018physics,yang2014guided}), cellular automata \cite{nishidate1996cellular,blue1999cellular,dietrich2014bridging}, etc. The mesoscopic scale serves as an intermediary between the microscopic and macroscopic approaches~\cite{aylaj2020unified,bellomo2008modelling,cristiani2014multiscale,bellomo2015multiscale,Bellomo2011,dogbe2012modelling,kim2019kinetic,hoogendoorn2000gas,aylaj2020crowd,bellomo2023human}. At the macroscopic scale, modelling describes the emergent/collective behavior of crowds. For instance, in the seminal work of Hughes \cite{HUGHES2002507}, the author models crowd dynamics by coupling a conservation law with an eikonal equation to represent how individuals choose paths based on a potential field. Moreover, in \cite{colombo2012nonlocal} the authors proposed nonlocal conservation laws for traffic and crowd dynamics, where the flux depends not only on local density but also on averaged information from the surrounding region. In \cite{Bellomo2011}, a framework for macroscopic crowd models was presented based on balance laws and kinetic theory, emphasizing how individual-level interactions give rise to complex collective behavior. More recently, \cite{bellomo2023human} highlights new trends such as multiscale coupling, behavioral heterogeneity, and nonlocal interaction kernels.

A major objective, when high-fidelity microsocpic/individual/agent-based simulators and/or detailed experimental data are available, is to derive mesoscopic or macroscopic models that can then be used for numerical analysis, optimization and control of the emergent/collective dynamics, for instance, in the optimal design of spaces, the reduction of evacuation times, and the control of the flows \cite{cristiani2014multiscale,Bellomo2011,cristiani2015modeling,albi2016invisible,albi2020mathematical,auletta2022herding,panagiotopoulos2022control,gong2023crowd}. In this context, kinetic models serve as an intermediate bridge. They are typically derived analytically using statistical mechanics techniques, starting from microscopic interaction rules. These kinetic formulations, in turn, facilitate the derivation of macroscopic (continuum) models through asymptotic methods \cite{bellomo2015multiscale,bellomo2023human}.

The transition from high-dimensional agent-based representations to macroscopic or hydrodynamic descriptions is commonly achieved through closure techniques. These methods establish relationships between higher-order moments of the microscopic distributions and a limited, smaller set of lower-order moments that capture the observable emergent behavior \cite{lee2023learning,Psarellis2024,fabiani2024task}. However, such closures inherently rely on assumptions that introduce biases both for modelling and analysis tasks. Typically, these assumptions include infinitely large populations, homogeneous agents, and uniform and/or local interaction networks, but also unknown ``behavioral'' characteristics, which can significantly impact the accuracy and validity of collective-level analyses.

Recently, data-driven computational approaches based on physics-free machine learning (ML) have been employed for modelling, numerical analysis, and control of pedestrian, crowd, and traffic dynamics. In this context, a wide range of ML architectures has been explored, including deep neural networks (DNNs) \cite{song2018data}, Long Short-Term Memory (LSTM) networks \cite{alahi2016social,yu2023multi}, Generative Adversarial Networks (GANs) \cite{gupta2018social,minartz2024understanding}, Convolutional Neural Networks (CNNs) \cite{nikhil2018convolutional,tripathi2019convolutional,zamboni2022pedestrian}, and reinforcement learning methods \cite{yao2019data,yoshida2025visual}. For a comprehensive review of these and related ML approaches, see \cite{huang2024review}. These approaches have proven effective in forecasting individual and group trajectories within the framework of time-series modelling and forecasting, without relying on physics-based formulations. However, at the same time, such approaches lack interpretability and, importantly, do not incorporate information of the global spatio-temporal emergent dynamics, such as density fields, that are essential in modelling crowd dynamics.

An alternative data-driven paradigm that bypasses the need to explicitly learn surrogate models is the Equation-free multiscale framework \cite{kevrekidis2003equation}. By appropriately coupling on-demand microscopic simulations with coarse-graining, or ``restriction", and subsequent fine-graining, or ``lifting", this approach builds a coarse time-stepper for the evolution at the macroscopic scale, thus bypassing the need for analytically derived continuum equations \cite{corradi2012equation,marschler2014equation,Patsatzis2023,panagiotopoulos2023continuation,Chin2024}. Although the equation-free framework enables powerful multiscale numerical analysis and control, it is fundamentally based on the on-demand construction of local discrete maps rather than on deriving a global macroscopic model of the crowd or collective dynamics (for example, a mass-conserving partial differential equation (PDE)). That reliance on local maps reduces interpretability and physical insight, and it hinders reliable generalization in long-term simulations.

To address the aforementioned challenges, a few recent studies have utilized physics-informed neural networks (PINNs) \cite{Raissi2018} including physics-informed neural operators (NOs) \cite{goswami2023physics,fabiani2025enabling} for solving the inverse problem, i.e., that of constructing hybrid models in the form of ``grey-box'' PDE models from data \cite{lee2023learning,fabiani2024task,fabiani2025enabling,Lee2020,Galaris2022}. Such an approach, combines ML with underlying knowledge of physical principles, such as conservation laws \cite{jagtap2020conservative} and closures \cite{Lee2023}, to create hybrid models that can both predict the spatio-temporal dynamics and ensure physical consistency, even under data scarcity. In the context of crowd dynamics, PINNs have been employed to construct both gray-box hydrodynamic PDEs \cite{guo2024analysis} as well as for discovering the physics/rules governing interactions and motion at the microscopic level \cite{zhang2022physics}. However, while PINNs provide an effective way to incorporate physical constraints and knowledge into ML models, they can also introduce biases due to their reliance on predefined forms of PDEs. Moreover, a key physical property in crowd modeling, mass conservation, is typically not enforced explicitly, but rather imposed as a soft constraint through the loss function \cite{jagtap2020conservative,pang2019fpinns,ahmadi2024ai}. This indirect enforcement can limit the generalizability and physical reliability of PINN-based models, particularly in heterogeneous or unforeseen scenarios where the true dynamics deviate from the assumed PDE forms.

For learning classes of PDEs that incorporate conservation laws--such as those found in crowd dynamics--the use of black-box DNNs and black-box NOs \cite{fabiani2024task, ahmadi2024ai, li2020fourier,li2023deep,azizzadenesheli2024neural,zappala2024learning} is inherently problematic. These models often lack physical consistency, particularly in preserving conservation laws, which are usually enforced only as soft constraints in the loss function. In addition, their training suffers from low approximation accuracy due to the ``curse of dimensionality", potentially leading to unphysical predictions and numerical instabilities during long-term simulations. Thus, to deal with that problem, RiemannONets were recently presented in \cite{peyvan2024riemannonets}, which are NOs based on DeepONets to solve Riemann problems encountered in compressible flows, thus enforcing the preservation of conservation laws. Towards solving the inverse problem, we have recently proposed GoRINNs \cite{patsatzis2025gorinns}, neural networks coupled with Godunov numerical schemes and Riemann solvers for learning hyperbolic PDEs that explicitly respect the conservation laws by construction. 

It is important to note, however, that despite the long-standing tradition of modeling crowd dynamics through (hydrodynamic) PDEs, such formulations inherently rely on an ``infinite-size" assumption. In realistic settings involving crowds of moderate or finite-size, this assumption often breaks down. Finite-size effects, such as spatial heterogeneity, individual interactions, and boundary influences, can dominate the dynamics, rendering continuum PDEs insufficient for accurate, quantitative predictions of real-world crowd behavior.

Here, inspired by the Equation-free multiscale framework \cite{kevrekidis2003equation}, and building on our previous recent works, on what we call \textit{next-generation Equation-free algorithms} based on both manifold learning and machine learning \cite{fabiani2024task,Patsatzis2023, fabiani2025enabling, della2024learning,papaioannou2022time,gallos2024data}, we present a computational framework, to learn the discrete evolution operator of the emergent dynamics in latent spaces from high-fidelity microscopic simulations of crowd dynamics. The proposed framework bridges the microscopic and macroscopic modelling scales, while explicitly preserving the underlying conservation law of mass in the ambient space. The methodology is deployed in four main stages. First, we map discrete microscopic distributions to continuous macroscopic representations (i.e., density fields). Second, we project the macroscopic fields into a low-dimensional latent space using Proper Orthogonal Decomposition (POD). Third, we learn surrogate reduced-order models (ROMs) in this latent space using discrete autoregressive maps: both linear Multivariate Autoregressive models (MVARs) and nonlinear LSTM models. These autoregressive models with lags implicitly implement a delay coordinate embedding, aligning with Takens'/Whitney's embedding theorems \cite{takens2006detecting,sauer1991embedology} for reconstructing the phase-space structure (here in the latent space). The concept of the approach (i.e., using POD embeddings and neural networks taking as inputs delayed time series for learning the dynamics of PDEs in latent spaces) can be traced back to the early '90s \cite{krischer1993model,shvartsman1998low} (see also results and discussion in \cite{papaioannou2022time,gallos2024data,shvartsman1998low,kemeth2022learning,axaas2023model}). In crowd dynamics, behavioral characteristics such as pedestrian target paths and decision-making, or path optimization constitute the ``unobservable" states, driving the observable positions/velocities and therefore the density field of the crowd; delay embeddings (as used implicitly in autoregressive models) have the potential, due to the Takens'/Whitney's embedding theorems \cite{takens2006detecting,sauer1991embedology}, to reconstruct the effective phase space containing these (unobservable) latent dynamics. Finally, we lift on-demand the learned latent-space dynamics back to the high-dimensional macroscopic space via the POD basis. Crucially, this configuration ensures explicit mass conservation-—a key physical constraint for any effective continuum description. For our numerical experiments, we employ the SFM \cite{helbing1995social,Helbing2000} to simulate pedestrian flow within a rectangular corridor containing an obstacle, under periodic boundary conditions. We particularly consider a unidirectional pedestrian flow and a more challenging, dense counterflow of two pedestrian groups. The performance and generalization ability of the proposed framework are assessed on unseen initial conditions.


\section{Methodology}
\subsection{Problem Statement}
In this framework, we consider the problem of bridging the scales in crowd dynamics with particular focus on deriving ROMs from detailed, high-dimensional, agent-based simulations of pedestrian flows. This derivation involves reconstructing on demand the spatio-temporal dynamics under the strict constraint of mass conservation.

We utilize a detailed simulator that models the dynamics of a human crowd, consisting of $N$ pedestrians moving within a domain $\Omega \subset \mathbb{R}^2$. The state of each pedestrian $i \in\{ 1, \ldots,N\}$ is characterized by its position $\mathbf{x}_i=\mathbf{x}_i(t)= (x_i(t), y_i(t))$ and velocity $\mathbf{v}_i= \mathbf{v}_i(t) =(v_{ix}(t), v_{iy}(t))$, and also a set of ``behavioral'' variables $\bu_i= \bu_i(t) \in \mathbb{R}^b$, which may include phenomena such as decision-making strategies, herding behavior, but also panic and irrationality  \cite{albi2016invisible,rassia2010escape,bellomo2015toward,bellomo2016human,haghani2019panic}, where $b\in \mathbb{N}$ is the number of behavioral traits modeled for each pedestrian. Additionally, we introduce an auxiliary variable $\bm{z}_i\in \mathbb{R}^b$ to denote the rate of change of the behavioral variables $\bu_i$. Then, the evolution of the position, velocity and behavior of each pedestrian is modelled by an individualistic/agent-based model, inspired by pseudo-Newtonian/molecular/Brownian dynamics formulations \cite{bellomo2023human,bellomo2020towards}, expressed in the general form:
\begin{align}
&\frac{d \bm{u}_i}{dt} =  \bm{z}_i,
\quad \frac{d \bm{z}_i}{d t} = \sum_{j\in\Omega_i}  \, \bm{\psi}_i(\mathbf{x}_i, \mathbf{v}_i, \bm{u}_i, \mathbf{x}_j,\mathbf{v}_j, \bm{u}_j), \label{swarm-crowd}\\
&\frac{d\mathbf{x}_{i}}{dt} =  \mathbf{v}_i, 
\quad \frac{d\mathbf{v}_{i}}{d t} =  \sum_{j \in\Omega_i} \boldsymbol{\phi}_i(\mathbf{x}_i, \mathbf{v}_i, \bm{u}_i, \mathbf{x}_j,\mathbf{v}_j, \bm{u}_j), \label{eq:microGen}
\end{align}
where $j$ refers to all pedestrians within the interaction domain of the $i$-th pedestrian $\Omega_i$, and $\bm{\psi}_i(\cdot), \boldsymbol{\phi}_i(\cdot)$ denote pseudo-acceleration terms incorporating interaction rules. The majority of models in this category are based on the celebrated social force model (SFM) for pedestrian dynamics~\cite{helbing1995social,bellomo2023human}, which was proposed in the mid-1990s. 

In contrast, at the macroscopic scale, crowd dynamics are typically described using phenomenological continuum models, most often formulated as PDEs. These models capture the collective behavior of large-scale pedestrian flows and are grounded in the theories of hydrodynamics and continuum mechanics. Well‑known examples include the so‑called Hughes model \cite{HUGHES2002507} and multi‑population nonlocal models \cite{colombo2012nonlocal}. However, such modelling approaches rely on several simplifying assumptions in order to derive algebraic closure relations that link individual-level motion with density‑scale dynamics. These assumptions often include, for example, that pedestrian speed depends solely on the local total density, that individuals aim to minimize travel time while avoiding congestion, or that their motion is guided by a potential field directing them toward their destination.

To address the limitations of phenomenological continuum models, several data-driven computational methods have been introduced to construct macroscopic hybrid models in the form of ``black-box'' or ``grey-box'' PDE models \cite{lee2023learning, fabiani2024task, Lee2020,Galaris2022,ahmadi2024ai,li2020fourier,li2023deep,azizzadenesheli2024neural,zappala2024learning}. These methods are predominantly based on DNNs/NOs, thus suffering from the ``curse of dimensionality'' that is inherent in their training. To circumvent this challenge, learning ROMs in well-constructed latent spaces is crucial, since it significantly lowers computational cost while capturing the essential dynamics. These ROMs must also inherently preserve fundamental principles like mass conservation, ensuring physical accuracy in the reconstructed space. Towards this goal, we aim at bridging high-fidelity microscopic/agent-based simulations and/or experimental/real-world data with macroscopic scale modelling, using ROMs that preserve mass conservation, within the framework of learning discrete operators/coarse-timesteppers \cite{kevrekidis2003equation}. 

The proposed methodology, presented in detail in the following section, is based on our previous efforts to develop what we call the \textit{next generation Equation-free (EF) framework} \cite{fabiani2024task,Patsatzis2023,della2024learning, papaioannou2022time,gallos2024data}. This framework deals with the ``curse of dimensionality'' by learning surrogate ML models in appropriate latent spaces emerging from the complex spatio-temporal dynamics of high-dimensional systems. It originates from the \emph{Equation-free} approach \cite{kevrekidis2003equation}, which enables high-dimensional simulations to perform system-level numerical analysis tasks on an appropriate latent space. This is achieved via the construction of a \emph{lifting operator} $\mathcal{L}$ and a \emph{restriction operator} $\mathcal{R}$ bridging the gap between states in a high-dimensional space, say $\mathbf{u}(t) \in \mathbb{R}^{M}$, and states in a latent space, say $\mathbf{y}_d(t) \in \mathbb{R}^d$, with $d \ll M$, as:
\begin{equation}
\mathbf{u}(t) = \mathcal{L}(\mathbf{y}_d(t)) \quad \text{and} \quad \mathbf{y}_d(t) = \mathcal{R}(\mathbf{u}(t)).
\end{equation}
These operators are used to construct a coarse-timestepper, the discrete evolution operator at the latent space, $F_{\delta t}: \mathbb{R}^d\to \mathbb{R}^d$ defined as:
\begin{equation}
    \mathbf{y}_d(t+\delta t) = \mathcal{R}\Bigl( {\Phi_{{\delta} t}}\big(\mathcal{L}(\mathbf{u}(t))\big) \Bigr):=F_{\delta t}\big(\mathbf{y}_d(t)\big),
\end{equation}
where $\Phi_{{\delta} t}: \mathbb{R}^M\to \mathbb{R}^M$ is the evolution operator at the high-dimensional space. In this work, we learn the discrete evolution operator $F_{\delta t}$ using autoregressive ROMs, thus effectively learning $\Phi_{{\delta} t}$ 
as:

\begin{equation}
    \mathbf{u}(t+\delta t) = \mathcal{L}\left(\mathbf{y}_d(t+\delta t) \right), \quad \mathbf{y}_d(t+\delta t) = F_{\delta t}\big(\mathbf{y}_d(t),\ldots,\mathbf{y}_d(t-w\delta t)\big),
    \label{eq:LER_basic}
\end{equation}
where the learned ROMs account for $w$ delayed latent points $\mathbf{y}_d(t-j \delta t)$ for $j=1,\ldots,w$. The latent points are obtained 
from previous predictions of $F_{\delta t}$ resulting in a closed-loop/recursive forecasting operator requiring only the initial $j=1,\ldots,w$ embeddings $\mathcal{R}(\mathbf{u}(j \delta t))$. This formulation enables efficient on-demand reconstruction of the high-dimensional space, as $F_{\delta t}$ evolves in latent space and lifting is performed only when a high-dimensional reconstruction is demanded.    


A key distinction between our proposed \textit{next generation EF}  and the traditional EF framework is that, in the latter, one constructs local  (numerical) maps and computes the quantities needed for bifurcation analysis on demand via short bursts of microscopic simulations to bridge detailed simulations and emergent dynamics (captured usually by a few moment distributions). In contrast, our proposed method learns a ``global dynamical model" in Eq.~\eqref{eq:LER_basic} via ML from long-term, high-fidelity simulations spanning the entire 
phase space, in low-dimensional latent spaces. This global dynamical-systems-based perspective enables the use of both linear (such as POD) and nonlinear manifold-learning techniques (such as Diffusion maps \cite{coifman2005geometric,Coifman2006}), and techniques for the solution of the pre-image problem, such as geometric harmonics \cite{COIFMAN200631,dietrich2015manifold} or autoencoders \cite{hinton2006reducing,kingma2019introduction}, to construct maps between the high-dimensional and the latent space. The methodology is described in detail in the following section.

The key advantage of our \textit{next generation EF framework} is its ability to circumvent the need for explicit derivation or closure assumptions in PDEs—a process that inherently introduces biases in modelling crowd density dynamics. Instead, it directly learns the ``actual'' evolution operator in Eq.~\eqref{eq:LER_basic}–-the map that advances observed macroscopic distributions (here the density) forward in time. This learned operator, embedded within computationally efficient linear or nonlinear autoregressive ROMs (MVARs or LSTMs, respectively), effectively encodes the complex, often analytically intractable, dynamics of the crowd. By focusing on learning how densities evolve (the evolution operator) rather than learning governing equations in the form of PDEs, the method circumvents the significant challenges of the ``curse of dimensionality'' associated with DNNs/NOs training of such PDEs. The resulting ROMs thus act as practical, data-driven surrogates that implicitly contain the action of the unknown PDE's solution operator, enabling accurate prediction of future crowd states solely from observed density snapshots. Importantly, we demonstrate that for the proposed configuration, the POD approach (used for identifying the latent space) conserves explicitly the mass of the reconstructed density field resulting from the ROMs. An explicit mass conservation of the continuum representations is an essential physical constraint for any credible surrogate model of crowd dynamics.

\subsection{The proposed ML framework}
The proposed methodology consists of four main steps. First, we map the discrete distributions of pedestrian positions into continuous density fields defined on a spatial grid using KDE \cite{Botev2010}. Second, we construct a \emph{restriction} operator to project the resulting density fields into an appropriate latent space. For this purpose, we employ the POD \cite{sirovich1987turbulence} method on normalized density profiles across the computational domain, ensuring that mass conservation is preserved in the reconstructed fields. Third, within the POD latent space, we train surrogate autoregressive ROMs using MVAR and LSTM time-series forecasting models to capture the complex dynamics present in the data. Finally, in the fourth step, we lift the latent ROM dynamics back to the macroscopic density ambient space by solving the pre-image problem. In the case of POD, this \emph{lifting} step is computationally inexpensive, as it reduces to a simple linear projection onto the ambient space using the stored orthogonal basis. A schematic overview of the methodology is shown in Fig. \ref{fig:Schematic}.
\begin{figure}[!b]
    \centering
    \includegraphics[width=0.85\linewidth]{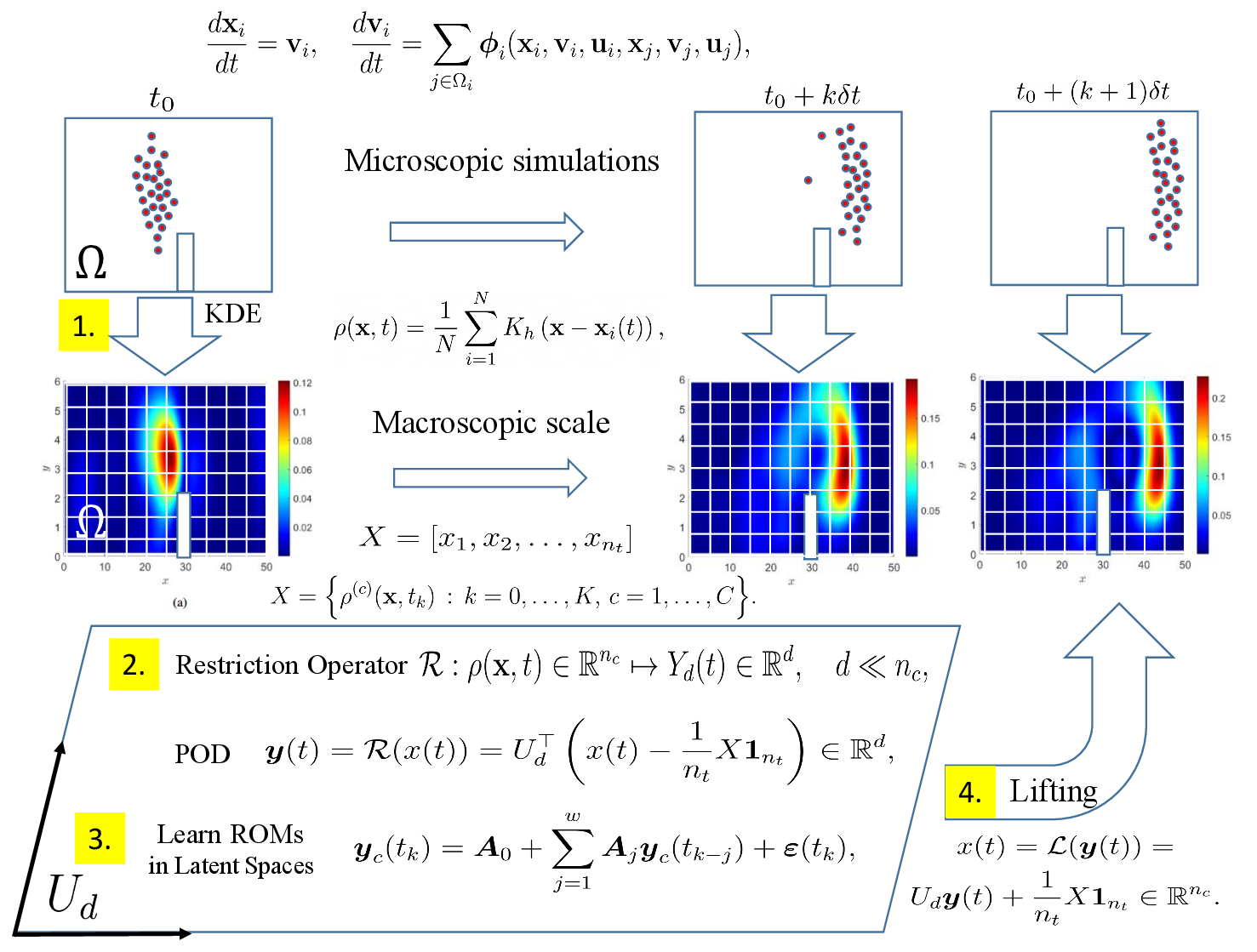}
    \caption{Schematic of the proposed methodology. First, microscopic distributions of pedestrian positions are mapped to macroscopic density fields (Step 1) and embedded into a latent space using POD (Step 2). Second, surrogate reduced-order models (MVARs and LSTMs) are trained in this latent space to capture and forecast the complex dynamics (Step 3). Finally, the learned dynamics are lifted back to the high-dimensional density fields via linear projection with the constructed POD basis (Step 4).}
    \label{fig:Schematic}
\end{figure}

\subsubsection{From discrete pedestrian positions to continuous density fields}
\label{sbsb:pos2den}
The initial step in our framework involves constructing a continuous density field from the discrete positions of pedestrians. Given microscopic observations, obtained by agent-based simulators in the form of Eq. (\ref{swarm-crowd}, \ref{eq:microGen}) or experimental data, macroscopic continuous quantities are typically derived via local averaging at each point of the spatial domain where the crowd moves. Here, we start from $N$ pairs of pedestrian positions $\{\mathbf{x}_i(t)\}_{i=1}^{N} \subset \Omega$ in a rectangular domain $\Omega = [a, b] \times [c, d] \subset \mathbb{R}^2$, as obtained by the SFM microscopic simulator, detailed in ~\ref{app:sfm}. We next employ KDE \cite{devroye1983equivalence} to estimate the continuous density fields along points on the spatial grid $\mathbf{x} \in \Omega$, as:  
\begin{equation}
    \rho(\mathbf{x},t) = \frac{1}{N} \sum_{i=1}^{N} K_h\left(\mathbf{x} - \mathbf{x}_i(t)\right), \label{eq:pos2den}
\end{equation}
where $K_h$ is a smooth non-negative kernel function satisfying $\int_{\Omega} K(\mathbf{x})\, d\mathbf{x} = 1$. Using a standard uniform spatial discretization $\delta x = (b-a)/n_x$, $\delta y = (d-c)/n_y$ with $n_x$/$n_y$ grid points along the $x$/$y$-axis of the domain $\Omega$, the resulting from Eq.~\eqref{eq:pos2den} density fields are computed as $\rho(\mathbf{x},t)=[\rho((x_1,y_1),t),\ldots,\rho((x_{n_x},y_{n_y}),t)]^\top\in\mathbb{R}^{n_x\times n_y}$, where $\rho((x_i,y_j),t)$ corresponds to the density computed at the grid point $(x_i,y_j)$, for $i = 1,\dots,n_x$,  $j = 1,\dots,n_y$. Then, the total mass in the domain is given by $ S(t) = \sum_{i=1}^{n_x} \sum_{j=1}^{n_y} \rho((x_i,y_j),t) \, \delta x\, \delta y$. The above procedure can be applied in multiple groups of pedestrians to derive (normalized) group macroscopic density fields. While KDE can be used for extracting continuous macroscopic fields, such as momentum and energy, from any microscopic discrete quantities, here we only construct density fields. Additional details for the discretization of the computational domain $\Omega$, the choice of kernel $K_h(\cdot)$, the consideration of obstacles in $\Omega$ (where a binary mask function is employed to consider zero density at the grid points of the obstacle), and the treatment of periodic boundary conditions are provided in ~\ref{app:kde}. 

\subsubsection{The Restriction and Lifting Operators}
\label{sbsb:RestLiftOPs}
Having obtained the macroscopic density fields, we now proceed to the second and fourth key elements of the approach, which, in analogy to the EF framework, constitute the construction of an ``embedding/restriction'' and ``lifting'' operators to and from an appropriate low-dimensional latent space.

The restriction operator, say, $\mathcal{R}$,  \cite{kevrekidis2003equation}, is a map from density distributions $\rho(\mathbf{x},t)$ to low-dimensional latent space coordinates, say $Y_d(t)$, i.e.,: 
\begin{equation}
\mathcal{R}:
 \rho (\mathbf{x},t)\in\mathbb{R}^{n_c}  \mapsto Y_d(t) \in \mathbb{R}^d, \quad d \ll n_c,
\label{eq:restrictionoperator} 
\end{equation}
where $n_c=n_x \times n_y$. To construct the restriction operator, we apply POD \cite{sirovich1987turbulence}  to a dataset consisting of macroscopic density ``solutions'' in time, obtained by microscopic simulations via Eq.~\eqref{eq:pos2den}, for different initial conditions; see ~\ref{app:intial_conditions} for details. Denoting the densities by $\rho^{(c)}(\mathbf{x},t_0+k\delta t)\equiv\rho^{(c)}(\mathbf{x},t_k)\in\mathbb{R}^{n_c}$, where $\delta t$ is a microscopic inner simulation step/observation sampling time of an experiment/real data, $t_0$ is the initial time, and the superscript $(c)$ denotes the case $c=1,\ldots,C$ of the different initial conditions considered, the training dataset generated by microscopic runs is represented by the matrix $X\in \mathbb{R}^{n_c\times (C\cdot K)}$:
\begin{equation}
X = \Bigl\{\rho^{(c)}(\mathbf{x}, t_k) \,:\, k=1,\ldots, K, \, c=1,\ldots, C \Bigr\}. \label{Dataset}
\end{equation}
Compactly, we represent the above data set as: 
\begin{equation}
    X=[x_1, x_2,\dots,x_{n_t}] \, \in \mathbb{R}^{n_c\times n_t}, \label{Dataset2}
\end{equation}
with columns, say $x_k=\rho^{(c)}(\mathbf{x}, t_k)/S(t_k)\in \mathbb{R}^{n_c}$, being the normalized density fields along cases and time steps $k=1,\ldots,n_t=C\cdot K$.

To complete the restriction operator at the second step, we employ POD on $X$, implemented via Singular Value Decomposition (SVD) on the centered matrix:
\begin{equation}
    \bar{X}=XH=U \Sigma V^\top, \quad H=I_{n_t} - \frac{1}{n_t}\mathbf{1}_{n_t} \mathbf{1}_{n_t}^\top,
\end{equation}
where $U\in\mathbb{R}^{n_c \times n_c}$ is the matrix collecting the left-singular vectors, $\Sigma\in\mathbb{R}^{n_c \times n_t}$ is the matrix collecting the singular values, $V\in\mathbb{R}^{n_t \times n_t}$ is the matrix collecting the right-singular vectors, $I_{n_t}$ is the $n_t$-dim. unity matrix and $\mathbf{1}_{n_t}$ is the $n_t$-dim. unit column vector. The latent space is spanned by the first $d$ left-singular vectors forming $U_d$. Thus, the projection of $X$ on the latent space provides the low-dimensional embedding:
\begin{equation}
    Y_d=U_d^\top\bar{X}, \, \,Y_d \in \mathbb{R}^{d\times n_t}.
    \label{eq:PODproj}
\end{equation}

For the lifting operator that maps $Y_d$ to the space of density profiles, we demonstrate that the POD reconstruction (aka the POD lifting operator) preserves the original total density. This aligns with the general characteristic of POD in maintaining symmetries \cite{aubry1993preserving}. 
Here, in particular, we demonstrate the following proposition.
\begin{prop}
\label{prop:1} Let us assume a matrix $X \in \mathbb{R}^{n_c\times n_t}$, with columns being normalized density fields. If the density is preserved along time steps, i.e., if: 
\begin{align}
    \begin{bmatrix}
      \sum_{i=1}^{n_c} x_{i,1}&  \sum_{i=1}^{n_c} x_{i,2}&\dots&\sum_{i=1}^{n_c} x_{i,n_t}  
    \end{bmatrix} =\mathbf{1}^\top_{n_t},
\end{align} 
then, the reconstructed density field, computed by the POD as: 
\begin{equation}
\tilde{X} = U_d U_d^\top \bar{X} + X(I_{n_t}-H), \label{eq:PODrecon}
\end{equation}
where $U_d=[u_1,u_2,\dots u_{d}]\in\mathbb{R}^{n_c\times d}$, is the orthonormal basis formed by the first $d$ left-singular vectors computed by the SVD, is also preserved, i.e.,:
 \begin{align}
    \begin{bmatrix}
      \sum_{i=1}^{n_c} \tilde{x}_{i,1}&  \sum_{i=1}^{n_c} \tilde{x}_{i,2}&\dots&\sum_{i=1}^{n_c} \tilde{x}_{i,n_t}  
    \end{bmatrix} =\mathbf{1}^\top_{n_t}. 
\end{align}
\end{prop}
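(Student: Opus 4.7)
The plan is to split $\tilde{X}$ into a mean (column-average) part and a centered-projection part, and to verify the column-sum identity for each term separately. First I would rewrite the second summand using $I_{n_t}-H=\tfrac{1}{n_t}\mathbf{1}_{n_t}\mathbf{1}_{n_t}^\top$, which gives $X(I_{n_t}-H)=\bar{x}\,\mathbf{1}_{n_t}^\top$, where $\bar{x}=\tfrac{1}{n_t}\sum_{k=1}^{n_t} x_k$ is the mean column. Since by hypothesis every column of $X$ sums to $1$, so does $\bar{x}$; hence $\mathbf{1}_{n_c}^\top \bigl(\bar{x}\,\mathbf{1}_{n_t}^\top\bigr)=\mathbf{1}_{n_t}^\top$, and this rank-one summand already accounts for the entire target mass.

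The remaining task is to show that $\mathbf{1}_{n_c}^\top U_d U_d^\top \bar{X}=\mathbf{0}^\top_{n_t}$. My key observation here is that each column of $\bar{X}=XH$ is a centered column $x_k-\bar{x}$ and therefore has zero sum, i.e.\ $\bar{X}^\top \mathbf{1}_{n_c}=\mathbf{0}_{n_t}$. This says $\mathbf{1}_{n_c}$ lies in the left null space of $\bar{X}$, which is exactly the orthogonal complement of $\mathrm{col}(\bar{X})$. Since the left-singular vectors associated with non-zero singular values span $\mathrm{col}(\bar{X})$, each column of $U_d$ is orthogonal to $\mathbf{1}_{n_c}$, i.e.\ $U_d^\top \mathbf{1}_{n_c}=\mathbf{0}_d$. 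Transposing and post-multiplying by $U_d^\top \bar{X}$ then immediately yields $\mathbf{1}_{n_c}^\top U_d U_d^\top \bar{X}=\mathbf{0}^\top_{n_t}$, so the projection summand contributes nothing to the column sums.

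Adding the two contributions gives $\mathbf{1}_{n_c}^\top \tilde{X}=\mathbf{1}_{n_t}^\top$, which is the desired identity. The hard part, such as it is, lies entirely in the geometric observation of the second paragraph: mass conservation survives the POD because centering removes the only mode that carries mass (the constant vector $\mathbf{1}_{n_c}$), and the retained POD basis is orthogonal to it. I expect the one subtle point to be an implicit assumption that the retained modes correspond to strictly positive singular values, i.e.\ $d\le\mathrm{rank}(\bar{X})$; any ``padded'' left-singular vectors chosen merely to complete an orthonormal basis of $\mathbb{R}^{n_c}$ are in general \emph{not} orthogonal to $\mathbf{1}_{n_c}$. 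In any meaningful POD truncation this condition is automatic, so no additional hypothesis is needed.
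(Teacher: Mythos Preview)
Your proof is correct and follows essentially the same route as the paper's: both show that $\mathbf{1}_{n_c}^\top \bar{X}=\mathbf{0}^\top_{n_t}$, deduce $\mathbf{1}_{n_c}^\top U_d=\mathbf{0}^\top_d$ from orthogonality to $\mathrm{col}(\bar{X})$, and handle the mean term $X(I_{n_t}-H)$ directly. Your explicit remark that the argument tacitly requires $d\le\mathrm{rank}(\bar{X})$ is a welcome clarification; the paper makes the same assumption (writing ``$d=1,2,\dots,r$'') without dwelling on it.
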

\begin{proof}
By construction, since the sum of each column of $X$ is equal to one (that is $\mathbf{1}_{n_c}^\top X=\mathbf{1}_{n_t}^\top$), we have that:
\begin{equation}
   \mathbf{1}_{n_c}^\top \bar{X}= \mathbf{1}_{n_c}^\top XH=\mathbf{1}_{n_t}^\top H=\mathbf{1}_{n_t}^\top-\frac{1}{n_t}\mathbf{1}_{n_t}^\top\mathbf{1}_{n_t}\mathbf{1}_{n_t}^\top=\mathbf{1}_{n_t}^\top-\mathbf{1}_{n_t}^\top=\boldsymbol{0}^\top_{n_t}.
   \label{eq:orthog1}
\end{equation}
Therefore, the sum of each column of the centered matrix $\Bar{X}$ is equal to zero.

This implies that $\mathbf{1}_{n_c}$ is orthogonal to the column space of $\bar{X}$, and therefore is orthogonal to the left singular vectors 
of $\bar{X}$, which provide an orthogonal basis for the column space, i.e.,: 
\begin{equation} \label{eq:ortho1a}
    \mathbf{1}_{n_c}^\top U_d = \boldsymbol{0}^\top_d,
\end{equation}
where $d=1,2,\dots r$ with $r=rank(X)$.
Then, the multiplication of the reconstructed matrix in Eq.~\eqref{eq:PODrecon}  by $\mathbf{1}_{n_c}^\top$ implies:
\begin{equation}
    \mathbf{1}_{n_c}^\top \tilde{X} = \mathbf{1}_{n_c}^\top U_d U_d^\top \bar{X} + \mathbf{1}_{n_c}^\top X(I_{n_t}-H) = \mathbf{0}_{n_t}^\top + \frac{1}{n_t}\mathbf{1}_{n_t}^\top\mathbf{1}_{n_t}\mathbf{1}_{n_t}^\top = \mathbf{1}_{n_t}^\top, 
\end{equation} 
thus getting:
\begin{align}
    \mathbf{1}_{n_c}^\top \tilde{X} = \begin{bmatrix}
    \sum_{i=1}^{n_c} \tilde{x}_{i,1}&  \sum_{i=1}^{n_c} \tilde{x}_{i,2}&\dots&\sum_{i=1}^{n_c} \tilde{x}_{i,n_t}  
    \end{bmatrix} =\mathbf{1}^\top_{n_t}.
\end{align}
Hence, the POD reconstruction preserves the total original density.
\end{proof}

We next consider a more challenging case of two interacting pedestrian groups in the domain $\Omega$, a typical scenario in dense counterflow settings. As in the one-group case, we collect the density profiles $\rho^{(c)}_l(\mathbf{x}, t_k)\in \mathbb{R}^{n_{c}}$ of each group $l=1,2$ via Eq.~\eqref{eq:pos2den}, and represent them in the snapshot matrices:
\begin{equation}
X^{(1)}=\Big[x_{1}^{(1)}, \ldots, x_{n_t}^{(1)} \Big] \in \mathbb{R}^{n_c \times n_t}, \quad
X^{(2)}=\Big[x_{1}^{(2)}, \ldots, x_{n_t}^{(2)} \Big] \in \mathbb{R}^{n_c \times n_t},
\end{equation}
with columns $x^{(l)}_k=\rho^{(c)}_l(\mathbf{x}, t_k)/S_l(t_k)\in \mathbb{R}^{n_c}$ for $k=1,\ldots,n_t=C\cdot K$. Applying the single‑group POD (Proposition~\ref{prop:1}) separately to $X^{(1)}$ and $X^{(2)}$ yields reconstructions that preserve each group’s density, but ignore any statistical coupling between the groups. To retain mass‑preservation while also incorporating dependencies among groups, we construct enriched projection bases that combine the individual POD modes with cross‑covariance modes, as shown in the following proposition.

\begin{prop} \label{prop:2}
Let us assume the snapshot matrices $X^{(1)},\, X^{(2)} \in\mathbb{R}^{n_c \times n_t}$ with columns being normalized density fields of two interacting population groups. Furthermore, let $W\in \mathbb{R}^{n_c \times m}$ and $T \in \mathbb{R}^{n_c \times m}$ be the left and right singular vectors of the cross-covariance matrix $C^{(12)}=\frac{1}{n_t}\bar X^{(1)} \left(\bar X^{(2)}\right)^\top \in \mathbb{R}^{n_c\times n_c}$, and consider the augmented orthonormal projection bases: 
\begin{equation} \label{eq:Aug_Basis}
    \tilde U^{(1)} = \begin{bmatrix} \frac{1}{\sqrt{n_c}}\mathbf{1}_{n_c} & U_{d_1}^{(1)} & \hat W \end{bmatrix} \in \mathbb{R}^{n_c \times (d_1+m+1)}, \quad
\tilde U^{(2)} = \begin{bmatrix} \frac{1}{\sqrt{n_c}}\mathbf{1}_{n_c} & U_{d_2}^{(2)} & \hat T \end{bmatrix} \in \mathbb{R}^{n_c \times (d_2+m+1)},
\end{equation} 
where $U_{d_1}^{(1)} \in \mathbb{R}^{n_c\times d_1}$ and $U_{d_2}^{(2)} \in \mathbb{R}^{n_c\times d_2}$ are the first $d_1$ and $d_2$ left singular vectors from the SVD of $\bar X^{(1)}$ and $\bar X^{(2)}$, respectively, and the terms $\hat W = W_\perp (W_\perp^\top W_\perp)^{-1/2}$ and $\hat T = T_\perp (T_\perp^\top T_\perp)^{-1/2}$ are computed by the orthogonal projections $W_\perp = \left(I_{n_c} - \mathbf{1}_{n_c}\mathbf{1}^{\top}_{n_c}/n_c - U_{d_1}^{(1)} (U_{d_1}^{(1)})^\top\right)W \in \mathbb{R}^{n_c\times m}$ and $T_\perp = \left(I_{n_c} - \mathbf{1}_{n_c}\mathbf{1}^{\top}_{n_c}/n_c- U_{d_2}^{(2)} (U_{d_2}^{(2)})^\top\right)T \in \mathbb{R}^{n_c\times m}$. If the original densities are mass-preserving along time steps, i.e., if $\mathbf{1}^{\top}_{n_c}X^{(1)}=\mathbf{1}^{\top}_{n_{t}}$ and $\mathbf{1}^{\top}_{n_c}X^{(2)}=\mathbf{1}^{\top}_{n_{t}}$, then the reconstructed densities computed by projecting $X^{(1)}$ and $X^{(2)}$ onto the augmented bases as: 
\begin{equation}
    \tilde X^{(1)}=\tilde {U}^{(1)} \left(\tilde {U}^{(1)}\right)^{\top} \bar X^{(1)}+X^{(1)} (I_{n_t}-H), \quad \tilde X^{(2)}=\tilde {U}^{(2)} \left(\tilde {U}^{(2)}\right)^{\top} \bar X^{(2)}+X^{(2)} (I_{n_t}-H),
    \label{Eq:Reconstruction_agumented}
\end{equation}
are also mass-preserving, i.e., $\mathbf{1}^{\top}_{n_c}\tilde X^{(1)}=\mathbf{1}^{\top}_{n_{t}}$ and $\mathbf{1}^{\top}_{n_c} \tilde X^{(2)}=\mathbf{1}^{\top}_{n_{t}}$.
\end{prop}
\begin{proof}
Assume the original densities are mass-preserving for each group. From the properties of the single-population POD (see Eq.~\eqref{eq:orthog1} and Eq.~\eqref{eq:ortho1a}), it follows that:
\begin{equation} \label{eq:singlePop_ortho}
   \mathbf{1}_{n_c}^\top \bar{X}^{(l)}=\boldsymbol{0}^\top_{n_t}, \quad \mathbf{1}_{n_c}^\top U^{(l)}_{d_l} = \mathbf{0}_{d_l}^\top, 
\end{equation}
where $\bar{X}^{(l)}=X^{(l)}H$ is the centered data matrix and $U_{d_l}^{(l)}\in \mathbb{R}^{n_c\times d_l}$ are the first $d_l$ left singular vectors from the SVD of $\bar X^{(l)}$ for the $l=1,2$ groups. Let's consider now the SVD of the cross-covariance matrix: 
\begin{equation}
    C^{(12)} = \frac{1}{n_t}\bar X^{(1)} \left(\bar X^{(2)}\right)^\top= W \Sigma_{C^{(12)}} T^{\top}, 
    \label{Eq:Cross-cov_SVD}
\end{equation}
with $W \in \mathbb{R}^{n_c \times m}$, $\Sigma_{C^{(12)}} \in \mathbb{R}^{m \times m}$ and $T \in \mathbb{R}^{n_c \times m}$, where $m$ denotes the number of retained cross-covariance modes. Each column of $C^{(12)}$ is a linear combination of the columns of $\bar X^{(1)}$, and each row a linear combination of the rows of $\bar X^{(2)}$. Consequently:
\begin{equation}
\mathcal{R}(W)\equiv \mathcal{R}\big(C^{(12)}\big) \subseteq \mathcal{R}(\bar{X}^{(1)}), 
\quad 
\mathcal{R}(T)\equiv\mathcal{R}\big((C^{(12)})^\top\big) \subseteq \mathcal{R}\big(\bar{X}^{(2)}\big). 
\end{equation}

To obtain interaction modes orthogonal to both the unity vectors $\mathbf{1}_{n_c}$ and the individual POD bases $U^{(l)}_{d_l}$ for $l=1,2$, we project the $W$ and $T$ onto the complementary subspace: 
\begin{equation}
    W_\perp = \left(I_{n_c} - \frac{1}{n_c}\mathbf{1}_{n_c}\mathbf{1}^{\top}_{n_c} - U_{d_1}^{(1)} (U_{d_1}^{(1)})^\top\right)W \in \mathbb{R}^{n_c\times m}, \quad T_\perp = \left(I_{n_c} - \frac{1}{n_c} \mathbf{1}_{n_c}\mathbf{1}^{\top}_{n_c}- U_{d_2}^{(2)} (U_{d_2}^{(2)})^\top\right)T \in \mathbb{R}^{n_c\times m},
\end{equation}
and define their orthonormalized versions as:
\begin{equation}
    \hat W = W_\perp (W_\perp^\top W
    _\perp)^{-1/2}, \quad  \hat T = T_\perp (T_\perp^\top T_\perp)^{-1/2}.
\end{equation}
By construction of $\hat{W}$ and $\hat{T}$, we obtain:
\begin{equation} \label{eq:cross_ortho}
    \mathbf{1}_{n_c}^\top \hat{W}= \mathbf{0}^\top_m, \quad (U_{d_1}^{(1)})^\top \hat{W}=0\in \mathbb{R}^{d_1 \times n_c}, \quad
    \mathbf{1}_{n_c}^\top \hat{T}= \mathbf{0}^\top_k, \quad (U_{d_2}^{(2)})^\top \hat{T}=0\in \mathbb{R}^{d_2 \times n_c}.
\end{equation}
The augmented orthonormal bases (Eq.~\eqref{eq:Aug_Basis}) are therefore:
\begin{equation} \label{eq:Aug_Basis_proof}
    \tilde U^{(1)} = \begin{bmatrix} \frac{1}{\sqrt{n_c}}\mathbf{1}_{n_c} & U_{d_1}^{(1)} & \hat W \end{bmatrix} \in \mathbb{R}^{n_c \times (d_1+m+1)}, \quad
\tilde U^{(2)} = \begin{bmatrix} \frac{1}{\sqrt{n_c}}\mathbf{1}_{n_c} & U_{d_2}^{(2)} & \hat T \end{bmatrix} \in \mathbb{R}^{n_c \times (d_2+m+1)},
\end{equation} 
according to which, the reduced coordinates of the snapshots can be obtained as:
\begin{equation}
Y^{(1)} = (\tilde{U}^{(1)})^\top \bar X^{(1)} \in \mathbb{R}^{(d_1+m+1) \times n_t}, \qquad
Y^{(2)} = (\tilde{U}^{(2)})^\top \bar X^{(2)} \in \mathbb{R}^{(d_2+m+1) \times n_t}.
\end{equation}

Multiplying the augmented basis in Eq.~\eqref{eq:Aug_Basis_proof} by $\mathbf{1}_{n_c}^\top$ from the left, and using Eq.~\eqref{eq:cross_ortho} together with the orthogonality $\mathbf{1}_{n_c}^\top U^{(l)}_{d_l} = \mathbf{0}_{d_l}^\top$ in Eq.~\eqref{eq:singlePop_ortho} gives:
\begin{equation}
    \mathbf{1}_{n_c}^\top \tilde U^{(1)} = \sqrt{n_c} \mathbf{e}_1^\top, \quad
    \mathbf{1}_{n_c}^\top \tilde U^{(2)} = \sqrt{n_c} \mathbf{e}_1^\top,
\end{equation}
where $\mathbf{e}_1=[1,0,\ldots,0]^\top$. Hence, for the projected centered data, we get:
\begin{equation} \label{eq:cross_ortho1}
    \mathbf{1}_{n_c}^\top \tilde U^{(l)} (\tilde{U}^{(l)})^\top \bar X^{(l)} = \sqrt{n_c} \mathbf{e}_1^\top (\tilde{U}^{(l)})^\top \bar X^{(l)} = \mathbf{1}_{n_c}^\top \bar X^{(l)} = \mathbf{0}_{n_t}^\top,\quad l=1,2.
\end{equation}
The last equality uses the mass-preservation property for the original densities $ \mathbf{1}_{n_c}^\top \bar{X}^{(l)}=\boldsymbol{0}^\top_{n_t}$ in Eq.~\eqref{eq:singlePop_ortho}.

Finally, using the reconstruction formula in Eq.~\eqref{Eq:Reconstruction_agumented}, and the identities derived in Eqs.~\eqref{eq:singlePop_ortho} and \eqref{eq:cross_ortho1}, we obtain:
\begin{align}
\mathbf{1}^{\top}_{n_c}\tilde X^{(l)}=\mathbf{1}_{n_c}^\top \tilde U^{(l)} (\tilde{U}^{(l)})^\top \bar X^{(l)}+\mathbf{1}^{\top}_{n_c}  X^{(l)} (I_{n_t}-H)=\mathbf{1}^{\top}_{n_c}  X^{(l)}(I_{n_t}-H)=\mathbf{1}_{n_t},\quad l=1,2. 
\end{align}
Thus, the reconstructions $\tilde X^{(1)}$ and $\tilde X^{(2)}$, obtained via the projection of $X^{(1)}$ and $X^{(2)}$ onto the augmented basis in Eq.~\eqref{eq:Aug_Basis_proof}, retain the mass-preservation property of the original data.
\end{proof}

In summary, given a density distribution $\rho(\mathbf{x},t)$ obtained via Eq.~\eqref{eq:pos2den} for a single population, we obtain the normalized density along the $n_c$ grid points of $\Omega$ as:
\begin{equation}
    x(t) = \dfrac{1}{S(t)} \big [\rho((x_1,y_1),t),\ldots,\rho((x_{n_x},y_{n_y}),t)\big]^\top \in \mathbb{R}^{n_c},
    \label{eq:denNorm}
\end{equation}
where $S(t)$ is the total mass on time $t$, and define the \emph{restriction} operator $\mathcal{R}$ by the projection on the POD basis of Eq.~\eqref{eq:PODproj} as:
\begin{equation}
    \boldsymbol{y}(t) = \mathcal{R}(x(t)) = U_d^\top \left(x(t) -\frac{1}{n_t} X\mathbf{1}_{n_t}\right) \in \mathbb{R}^d,
    \label{eq:Rest}
\end{equation}
where $X$ is the dataset matrix in Eq.~\eqref{Dataset2} and $U_d$ is the matrix containing the first $d$ left-singular vectors. On the other hand, the \emph{lifting} operator $\mathcal{L}$ that maps latent coordinates, say $\boldsymbol{y}(t)$, back to the density distributions is defined via the POD reconstruction as:
\begin{equation}
    x(t) = \mathcal{L}(\boldsymbol{y}(t)) = U_d \boldsymbol{y}(t) + \frac{1}{n_t} X\mathbf{1}_{n_t} \in \mathbb{R}^{n_c}.
    \label{eq:Lift}
\end{equation}

In the case of two interacting populations with normalized densities $x^{(l)}(t)$ for $l=1,2$, we consider the augmented bases $\tilde U^{(l)}$ in Proposition~\ref{prop:2} and adjust the restriction operator $\mathcal{R}$ to:
\begin{equation}
     \boldsymbol{y}(t) = [\boldsymbol{y}^{(1)}(t), \boldsymbol{y}^{(2)}(t)]^\top \in\mathbb{R}^{d_1+d_2+2m+2}, \quad \boldsymbol{y}^{(l)}(t) = \mathcal{R}^{(l)}(x^{(l)}(t)) = (\tilde{U}^{(l)})^\top \left(x^{(l)}(t) -\frac{1}{n_t} X^{(l)}\mathbf{1}_{n_t}\right) \in \mathbb{R}^{d_l+m+1},
    \label{eq:Rest2}
\end{equation}
and the \emph{lifting} operator $\mathcal{L}$ to:
\begin{equation}
    x(t) = [x^{(1)}(t),x^{(2)}(t)]^\top\in\mathbb{R}^{2n_c},  \quad x^{(l)}(t)=\mathcal{L}(\boldsymbol{y}^{(l)}(t)) = \tilde{U}^{(l)} \boldsymbol{y}^{(l)}(t) + \frac{1}{n_t} X^{(l)}\mathbf{1}_{n_t} \in \mathbb{R}^{n_c}.
    \label{eq:Lift2}
\end{equation}

In both cases, the above definitions of the restriction and lifting operators guarantee that the reconstructed normalized density profiles preserve the total per-population density within the domain $\Omega$, as per Propositions~\ref{prop:1} and \ref{prop:2}. We highlight here that while the operators are constructed based on the dataset $X$ (or $X^{(l)}$ for $l=1,2$), their employment applies to any seen or unseen observation $x(t)$ obtained from Eq.~\eqref{eq:denNorm} and $\boldsymbol{y}(t)$. Hereafter, for conciseness of the presentation, we will carry the single-population dimensions of $x(t)$ and $\boldsymbol{y}(t)$, unless otherwise specified.

\begin{remark}
Mass preservation and element-wise positivity in the reconstructed field are distinct properties. Standard POD/SVD truncation does not guarantee non-negativity in reconstruction: negative pointwise values may appear even when the original snapshots are positive. Since the reconstructed field is obtained by adding back the original mean, small negative entries---when they occur---are expected to appear in the low-density tails of the distribution. In such cases, we post-process the corresponding snapshots \textit{only for observation purposes}, by setting these values to zero, and renormalizing the reconstruction by the total mass of the nonnegative part. This ensures positivity while preserving total mass, 
with normalization factors remaining near 1.

\end{remark}
\begin{remark} 
The reconstruction step is only observational in our pipeline, as density fields are reconstructed via $\mathcal{L}$ in Eq.~\eqref{eq:Lift} or \eqref{eq:Lift2}, while the ROM evolves entirely in the latent space (see Eq.~\eqref{eq:LER_basic}). Since learning is performed with the original snapshots (which are nonnegative by construction), any post-process non-negativity enforcement does not influence the latent representation learning. Constrained reconstruction approaches, such as nearest-neighbor based methods commonly used in Diffusion Maps latent spaces \cite{Patsatzis2023,Chin2024}, can guarantee non-negativity by construction but require higher computational cost; integrating such techniques into our pipeline is straightforward, although a detailed comparative study is out of the scope in the current work.
\end{remark}

\subsubsection{Learning the dynamics in latent spaces using MVAR and LSTM}
\label{sbsb:ROMs}
With the restriction operator, which maps density fields to latent coordinates, constructed, the third step of the proposed approach involves training ROMs to learn the latent dynamics from sequential data. We note that while the restriction and lifting operators $\mathcal{R}$ and $\mathcal{L}$ are, in principle, defined over continuous time inputs, we apply them at a discrete set of uniformly spaced time instances $t_k = t_0 + k \delta t$ for $k=1,2,\ldots,K$, consistent with the temporal sampling of our data; see Eq.~\eqref{Dataset}.
To capture temporal dependencies via delayed embeddings (in line with Takens'/Whitney's embedding theorems \cite{takens2006detecting,sauer1991embedology}), we employ autoregressive ROMs to learn the dynamics of the latent embeddings $\boldsymbol{y}(t_k)$. The simplest such model is MVAR, which provides an efficient, linear baseline with globally optimal, least squares solutions. For a nonlinear comparison, we employ LSTM networks \cite{hochreiter1997long}, whose gating mechanisms effectively model complex temporal patterns; though numerous alternatives exist (Transformers \cite{hao2023gnot}, Neural Ordinary Differential Equations (ODEs) \cite{rubanova2019latent}, etc.), LSTMs represent state-of-the-art in nonlinear sequential modeling. A single ROM is trained in both single- and two-population cases; in the latter, this supports coupled latent dynamics that capture temporal interactions between groups. For compactness in what follows, we denote the dimension of the concatenated two-population latent coordinates in Eq.~\eqref{eq:Rest2} by $d=d_1+d_2+2(m+1)$, to match the single-population dimensions.


Let an MVAR model of order $w$ (i.e., with $w$ time lags) without intercept be expressed as:

\begin{equation}
\boldsymbol{y}_c(t_k) = \sum_{j=1}^{w}  \boldsymbol{A}_j \boldsymbol{y}_c(t_{k-j}) + \boldsymbol{\varepsilon}(t_k),
\label{eq:mvar}
\end{equation}

where $\boldsymbol{y}_c(t_k)$ is the latent variable vector, evaluated at discrete times $t_k$ for each of the $c=1,\ldots,C$ cases of different initial condition considered, and $\boldsymbol{A}_j \in \mathbb{R}^{d \times d}$ are the regression coefficient matrices for each lag $j$. The predictor vectors $\boldsymbol{y}_c(t_{k-j}) \in \mathbb{R}^d$ include the $w$ past lag latent variables evaluated at discrete times $t_{k-1},\ldots,t_{k-w}$. Since the latent variables $\boldsymbol{y}_c(t_{k})$ are zero-mean (a consequence of the mean-centered POD projection) the intercept term is excluded from Eq.~\eqref{eq:mvar}. The error term $\boldsymbol{\varepsilon}(t_k) \in \mathbb{R}^d$ is assumed to follow a multivariate white noise process:
\begin{align}
    \mathbb{E}[\boldsymbol{\varepsilon}(t_k)] = \mathbf{0}, \qquad
\mathbb{E}[\boldsymbol{\varepsilon}(t_k) \boldsymbol{\varepsilon}(t_k)^{\top}] = \Sigma \quad \text{and} \quad
\mathbb{E}[\boldsymbol{\varepsilon}(t_k) \boldsymbol{\varepsilon}(t_r)^{\top}] = \mathbf{0} \quad \forall k \neq r,
\end{align}
for every $k,r=1,\ldots,K$. 

Given sequential (time series) data for $\{\boldsymbol{y}_c(t_k)\}$, the unknown parameters of the MVAR in Eq.~\eqref{eq:mvar}, i.e., and the coefficient matrices $\boldsymbol{A}_j$, can be computed by the solution of the following least-squares problem:
\begin{equation}
\argmin_{ \{\boldsymbol{A}_j\}_{j=1}^w} \mathcal{L}_{\text{MVAR}}\left(\{ \boldsymbol{y}_c(t_k) \}, \{\hat{\boldsymbol{y}}_c(t_k)\}; \{\boldsymbol{A}_j\}_{j=1}^w\right),
\label{eq:mvar_ls}
\end{equation}
where the MVAR loss function is:
\begin{equation}
\mathcal{L}_{\text{MVAR}}\left(\{ \boldsymbol{y}_c(t_k) \}, \{\hat{\boldsymbol{y}}_c(t_k)\}; \{\boldsymbol{A}_j\}_{j=1}^w\right) = \sum_{c=1}^C\sum_{k=w+1}^{K}
\Big \lVert
\boldsymbol{y}_c(t_k) - \hat{\boldsymbol{y}}_c(t_k)
\Big \rVert_2^2,
\label{eq:mvar_loss}
\end{equation}
with the predictions of the MVAR model given by:
\begin{equation}
    \hat{\boldsymbol{y}}_c(t_k) = \sum_{j=1}^{w} \boldsymbol{A}_j \boldsymbol{y}_c(t_{k-j}),
    \label{eq:MVARpred}
\end{equation}
for $c=1,\ldots,C$ and $k=1,\ldots,K$. The optimization problem in Eq.~\eqref{eq:mvar_ls} can be solved using regularized least squares methods \cite{hoerl1970ridge}. We note that MVARs require the trivial assumption of time series stationarity, which was the case for our illustrations. If the latent coordinate time series is non-stationary, pre-processing steps like differencing can be applied to achieve stationarity before solving the optimization problem. We highlight that MVAR model training further requires the selection of an optimal lag window size $w$, which determines the length of the time series history used to model the temporal dependencies. Here, we used two information-theoretic criteria for determining $w$, the details for which are provided in \ref{app:bic_aic}.

As a non-linear autoregressive model alternative to linear MVARs, we further consider LSTM networks \cite{hochreiter1997long}, a type of Recurrent Neural Networks (RNNs) characterized by gating mechanisms, which allow the network to retain and propagate information over long sequences selectively, but also effectively handle vanishing gradients; a common challenge in traditional RNNs \cite{graves2012long,Chandra2022,Kyongmin2019}. Let an LSTM model with past $w$ latent states $\boldsymbol{y}_c(t_{k-j})$ for $j=1,\ldots,w$ be expressed as: 
\begin{align}
(\mathbf{h}_\tau, \mathbf{c}_\tau) &= F_\mathrm{LSTM} \big(\boldsymbol{y}_c(\tau), \mathbf{h}_{\tau-1}, \mathbf{c}_{\tau-1}; \boldsymbol{\theta}\big), \quad \text{for} \ \tau=t_{k-w},\ldots, t_{k-1},  \label{eq:LSTMgates}\\
\hat{\boldsymbol{y}}_c(t_k) &= \mathbf{W}_y\,\mathbf{h}_{t_{k-1}} + \mathbf{b}_y, \label{eq:LSTMout}
\end{align}
where $\mathbf{h}_{t_k}\in \mathbb{R}^{N_h}$ and $\mathbf{c}_{t_k}\in \mathbb{R}^{N_h}$ are the evaluated at discrete times $t_k$, hidden and memory cell states, respectively, $N_h$ corresponds to the number of hidden units and $\mathbf{W}_y\in\mathbb{R}^{d\times N_h},\mathbf{b}_y\in \mathbb{R}^d$ define an output layer mapping the last hidden state $\mathbf{h}_{t_{k-1}}$ to the predicted latent variable vector $\hat{\boldsymbol{y}}_c(t_k)\in \mathbb{R}^d$. The gating mechanism of the LSTM is included in Eq.~\eqref{eq:LSTMgates} at $F_\mathrm{LSTM}(\cdot)$, which we avoid here for conciseness of the presentation; a detailed presentation of the gating mechanism in $F_\mathrm{LSTM}(\cdot)$ and the related parameters $\boldsymbol{\theta}$ can be found in \cite{hochreiter1997long,graves2012long}. Here, we use a vanilla LSTM with 4 gates, containing the parameters $\boldsymbol{\theta} \in \mathbb{R}^{4N_h(d+N_h+1)}$. 

Similarly to the sequence-to-one learning of MVARs, given sequential data for $\{\boldsymbol{y}_c(t_k)\}$ for a fixed lag window of width $w$, we train the LSTM model by solving the optimization problem:
\begin{equation}
\argmin_{\boldsymbol{\theta}, \mathbf{W}_y, \mathbf{b}_y} \mathcal{L}_{\text{LSTM}}\left(\{\boldsymbol{y}_c(t_k)\}, \{\hat{\boldsymbol{y}}_c(t_k)\}; \boldsymbol{\theta}, \mathbf{W}_y, \mathbf{b}_y \right),
\label{eq:lstm_ls}
\end{equation}
where the LSTM loss function is the mean squared error (MSE) between the true value of the latent variable vector $\boldsymbol{y}_c(t_k)$ and the predicted one $\hat{\boldsymbol{y}}_c(t_k)$, as provided by Eqs.~(\ref{eq:LSTMgates}, \ref{eq:LSTMout}), that is:
\begin{equation}
\mathcal{L}_{\text{LSTM}}\left(\{\boldsymbol{y}_c(t_k)\}, \{\hat{\boldsymbol{y}}_c(t_k)\}; \boldsymbol{\theta}, \mathbf{W}_y, \mathbf{b}_y \right) = \dfrac{1}{C (K-w)} \sum_{c=1}^C\sum_{t_k=w+1}^{K}
\Big \lVert \boldsymbol{y}_c(t_k)-\hat{\boldsymbol{y}}_c(t_k) \Big \rVert^2_2.
\label{eq:lstm_loss}
\end{equation}
To solve the optimization problem in Eq.~\eqref{eq:lstm_loss}, one typically uses gradient-based methods. Here, we employ the Adam optimizer \cite{kingma2015adam}, which is a stochastic variant of gradient descent, known for its adaptive learning-rate capabilities. 

We highlight that while MVAR models are theoretically simple and efficient, their strict reliance on stationary data poses a significant limitation, requiring explicit pre‑processing that may distort dynamics or lose information \cite{Brooks_2019}. In contrast, LSTMs as dynamical models can inherently handle non‑stationarity through their gated memory cells and adaptive hidden states. This allows LSTM to learn complex, long‑range temporal dependencies and nonlinear relationships directly from raw data. However, this flexibility comes at the cost of needing much larger datasets, greater computational resources, and often lacking interpretability compared to MVAR models. 

Finally, the predicted latent coordinates $\hat{\boldsymbol{y}}_c(t_k)$ obtained from the trained MVARs or LSTMs models are subsequently mapped back to the high-dimensional density space using the lifting operator defined in Eqs.~\eqref{eq:Lift} or \eqref{eq:Lift2} for the single or two-population case, respectively.


\section{Case studies and methodology implementation}
We considered two representative benchmark problems in crowd dynamics: a unidirectional crowd flow moving through a corridor containing an obstacle, and a counterflow of two interacting populations in the same domain. 
The configurations of both case studies are described below, along with the implementation of the proposed methodology.

\subsection{Configurations}
\label{sb:conf}
The two crowd dynamics configurations simulate  pedestrians moving in a corridor of dimensions $48\,\text{m} \times 12\,\text{m}$ ($x$: length, $y$: width) in the presence of an obstacle, as shown in Fig.~\ref{fig:Pedestrians}. 
\begin{figure}[hbt!]
    \centering
    \begin{subfigure}[b]{0.47\textwidth}
        \includegraphics[width=\textwidth]{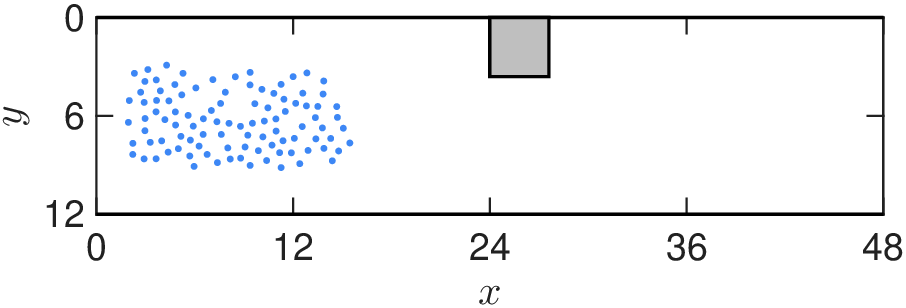}
        \caption{Unidirectional flow}
    \end{subfigure}
    \hfill
    \begin{subfigure}[b]{0.47\textwidth}
        \includegraphics[width=\textwidth]{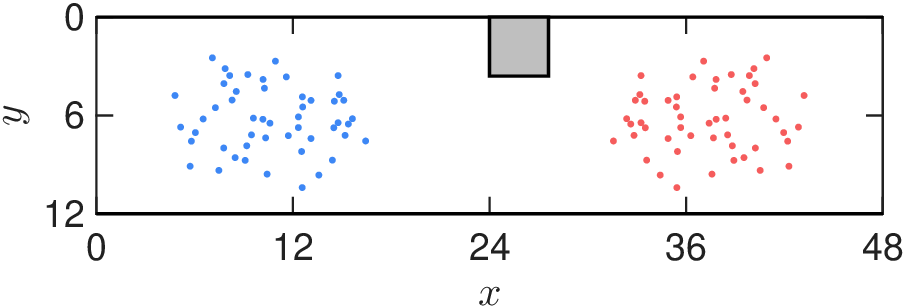}
        \caption{Counterflow}
    \end{subfigure}
    \caption{Crowd dynamics configurations of SFM simulating pedestrians moving in a corridor past an obstacle. In the unidirectional flow configuration in Fig.~\ref{fig:Pedestrians}(a), pedestrians move from the left to the right of the corridor. In the counterflow configuration in Fig.~\ref{fig:Pedestrians}(b), two pedestrian populations move in opposite directions; group~1 (blue) moves from the left to the right, while group~2 (red) follows the opposite direction.}
    \label{fig:Pedestrians}
\end{figure}
The square obstacle of area $12.96\,\text{m}^2$ is centered at $(25.8\, \text{m}, 1.8\, \text{m})$, with the pedestrians navigating around it towards reaching their destination (right or left corridor's boundary). Hence, the computational domain is $\Omega\backslash \Gamma$, with the corridor occupying $\Omega=[0, 48]\times [0,12]$ and the obstacle $\Gamma=[24,27.6] \times [0,3.6]$.

For modelling the pedestrians' movement, we used the SFM \cite{helbing1995social,Helbing2000} as a microscopic agent-based simulator. SFM falls within the general form of Eqs.~(\ref{swarm-crowd}, \ref{eq:microGen}) and the idea behind it is that the motion of individuals is governed by ``virtual forces''. These forces include acceleration toward the desired destination, as well as repulsive forces ``exerted" by obstacles, borders, and nearby individuals to maintain a safe distance. Additionally, one may also consider an attractive force due to mimetic behavior, which can be observed in scenarios such as emergency evacuations or collective movement toward objects of common interest \cite{helbing1995social}. The derivation of SFM is based on the Langevin-type equation that contains a frictional term, a deterministic force term, and a random force term. In the SFM, these terms translate into a goal-directed force (playing the role of friction), social and repulsive (from obstacles) deterministic forces, and a random term, reading:
\begin{equation}
m_i \frac{d {\bm{\dot{x}}}_i}{dt}= \underbrace{-\frac{1}{\tau_i}({v_0}_i\bm{e}_i-\bm{v}_i)}_{\text{goal-directed force}} + \underbrace{\sum_{j\neq i}\mathbf{F}_{ij}(\bm{x}_i,\bm{x}_j)}_{\text{social force}} + \underbrace{\sum_{\text{obstacles}}\mathbf{F}_{iB}(\bm{x}_i)}_{\text{repulsive force}} + \mathbf{F}_{rand},
\label{eq:SFMshort}
\end{equation}
where $m_i$ is the mass of the $i$-th pedestrian, ${v_0}_i$ is their maximum walking speed, $\bm{e}_i$ represents the desired direction (serving as a simplistic auxiliary/behavioral variable), and $\tau_i$ is a characteristic relaxation time. Here, we avoid the inclusion of random forces, since we are interested in the macroscopic behavior that often emerges from the deterministic pedestrian interactions \cite{helbing1995social}. Details on the SFM forces included in Eq.~\eqref{eq:SFMshort} and the related parameters that were considered in this work are provided in ~\ref{app:sfm}. 

\begin{remark}
Here, we employed the Social Force model purely as a convenient baseline to demonstrate and validate the proposed methodology. Naturally, the same workflow is not tied to this specific choice: it can be coupled just as well with more sophisticated agent-based approaches—e.g., models that include explicit route planning, richer interaction rules, heterogeneous behaviors, or learned decision-making—depending on the fidelity required by the application.
\end{remark}

In the unidirectional flow configuration (see Fig.~\ref{fig:Pedestrians}(a), we simulated a crowd of $N = 100$ pedestrians moving through a corridor, with initial spatial distributions ranging from homogeneous dispersions to clustered configurations (for more details, see~\ref{app:intial_conditions}). Pedestrians are directed to move from the left to the right of the corridor. Since the SFM governs pedestrians' motion through target-directed forces without intrinsic path planning, we employ a sequential two-target scheme to induce realistic obstacle-avoidance behavior. Pedestrians are initially guided towards an intermediate target zone below the obstacle at coordinates $x=25\,\text{m}$ and $y\in [6, 7.8]\,\text{m}$. Upon reaching this zone, their final destination is updated to $x=48\,\text{m}$ with $y$-coordinates remaining at the values each agent had upon reaching the intermediate zone. To maintain a constant population density, we implemented periodic boundary conditions by re-spawning agents at the left boundary at $x = 0\,\text{m}$, when agents reach the right boundary at $x = 48\,\text{m}$.

In the counterflow configuration (see Fig.~\ref{fig:Pedestrians}(b), we simulated a crowd of $N=100$ pedestrians divided equally into two groups with opposing flow directions: group~1 moves from left to right, and group~2 from right to left. We used the SFM in Eq.~\eqref{eq:SFMshort} with the same parameters as in the unidirectional case, and initial spatial distributions that are symmetric for each group along $x = 24\,\text{m}$; group~1 follows the same initial distribution as in the unidirectional case. Employing a two-target scheme to induce realistic obstacle-avoidance behavior, we prescribed distinct preferred zones for each group. Group~1 is initially guided to the zone $x=25\,\text{m}$ and $y\in [5.4, 7.8]\,\text{m}$, while group~2 to $x=25\,\text{m}$ and $y\in [9.6, 11.4]\,\text{m}$. Upon reaching these zones, their final destination is updated to $x=48\,\mathrm{m}$ for group~1 and $x=0\,\mathrm{m}$ for group~2, while the $y$-coordinates remain fixed at the values each agent had upon reaching the intermediate zone. This separation promotes smooth obstacle avoidance and organized counterflow without introducing explicit route planning. As in the unidirectional case, periodic boundary conditions are applied at both ends of the corridor by re-spawning agents at the left/right boundary upon exiting at the right/left for groups 1/2, respectively.

In both configurations pedestrians are prohibited from passing the corridor's walls or entering the obstacle area by introducing strong wall forces in the SFM microscopic simulator. Such a configuration ensures a controlled pedestrian flow through the simulated environment and  a realistic obstacle-avoidance behavior. 

\subsection{Data generation of microscopic distributions}
\label{sb:data_gen}
We generated synthetic data of pedestrian trajectories by numerically integrating the SFM with a first-order explicit Euler scheme using a small fixed time step of $\delta t_{micro} = 0.025 \, \text{s}$ to ensure stability and accuracy. Each simulation has a time span of $250\,\text{s}$. To reduce computational storage costs while retaining essential temporal features, we subsample the resulting pedestrian trajectories at intervals of $\delta t = 0.25 \, \text{s}$. As already discussed, we generate pedestrian trajectories from various initial spatial distributions modelling diverse crowd scenarios. In particular, we consider $C=20$
different initializations with zero velocities and positions distributed via: (i) uniform distributions to provide a homogeneous baseline scenario, (ii) Gaussian clusters 
to model localized groups of pedestrians, (iii) ``double bell-shaped" Gaussian distributions to capture the formation of two distinct crowd groups and (iv) cosinusoidal distributions to generate periodic macroscopic behavior
. Details on the form and parameters of the above initializations are provided in \ref{app:intial_conditions}, distinguishing between those used for training and testing the proposed framework in Tables~\ref{tab:Microscopic_distributions} and \ref{tab:Microscopic_distributions_test}, respectively. 

For the unidirectional flow, the resulting datasets are comprised of 20 cases with different initializations, each containing a matrix $\mathcal{X}^{(c)}=\{\bx_i(t_k), i=1,\ldots,N, k=1,\ldots,K \}\in \mathbb{R}^{200\times 1000}$, where $2N=200$ corresponds to the pedestrians' positions and $K=1000$ time steps correspond to the samples recorded from each trajectory. Thus, the total datasets over all cases are $\mathcal{X}_{tr}, \mathcal{X}_{ts} \in \mathbb{R}^{200\times (20\cdot 1000)}$ used for training and testing, respectively. 

For the counterflow, we generated data from the same set of $C=20$ initializations; see \ref{app:intial_conditions}. Here, the initialization for group~1 is identical to that of the unidirectional case (with half the agents), while group~2 is initialized symmetrically to group~1 along $x=24\, \text{m}$, preserving the $y$-positions. Each initialization case now produces two separate data matrices $\mathcal{X}^{(1, c)}$ for group~1 and $\mathcal{X}^{(2, c)}$ for group~2, both of dimension $\mathbb{R}^{100\times 1000}$. Consequently, the complete training and testing datasets for the counterflow are $\{\mathcal{X}_{tr}^{(1)}, \mathcal{X}_{tr}^{(2)}\}$ and $\{\mathcal{X}_{ts}^{(1)}, \mathcal{X}_{ts}^{(2)}\}$, respectively.

\subsection{Extraction of continuous density fields}
\label{sb:den_extr}
For the first step of our framework, we derive macroscopic density fields from pedestrian positions via KDE, as described in Section~\ref{sbsb:pos2den}. To achieve this, we discretize the corridor computational domain $\Omega$ into $n_x \times n_y = 80 \times 20$ control volumes with uniform spacing $\delta x = \delta y = 0.6\,\mathrm{m}$; this resolution reflects a typical personal space for each pedestrian, allowing for capturing emergent collective behavior, while minimizing artificial grid effects. For the KDE implementation on the cell centroids, we choose Gaussian kernels in Eq.~\eqref{eq:pos2den} tuned with bandwidths $\mathbf{H}=diag(h_x, h_y)=(3,2)$ for suppressing noise and preserving features of the microscopic data, while ensuring that local structures remain clear without introducing spurious artifacts. To account for the physical obstacle in $\Gamma\subset\Omega$, we applied a binary mask to the KDE-estimated density field, enforcing zero density in $\Gamma$. This is necessary because KDE can produce non-zero density estimates near obstacle boundaries when pedestrians are in close proximity. To account for the periodic boundary conditions, we further perform domain augmentation for the KDE support. All the details regarding KDE implementation are provided in \ref{app:kde}.

Using the above procedure, in the unidirectional flow case, we derived the density fields $\rho^{(c)}\big(x_i,y_j,t_k\big)$ ($c=1,\ldots,C$, $i=1,\ldots,n_x$, $j=1,\ldots,n_y$ and $k=1,\ldots,K$) from the pedestrian positions included in the training $\mathcal{X}_{tr}$ and testing $\mathcal{X}_{ts}$ data sets. For conforming to the assumptions of Proposition~\ref{prop:1}, we normalize each density field following Eq.~\eqref{eq:denNorm}, thus constructing the normalized density matrices $X_{tr},X_{ts} \in \mathbb{R}^{n_c \times n_t}$ in the form of Eq.~\eqref{Dataset2}, where $n_c=n_x\times n_y=1600$ and $n_t=C\cdot K = 20\cdot 1000$. In the counterflow case, we followed the same procedure to extract and normalize the density fields of each group separately, as it is required by Proposition~\ref{prop:2}, yielding the matrices $X_{tr}^{(1)},X_{ts}^{(1)}\in \mathbb{R}^{n_c \times n_t}$ for group 1 and $X_{tr}^{(2)},X_{ts}^{(2)}\in \mathbb{R}^{n_c \times n_t}$ for group 2. 

\subsection{POD for the restriction and lifting operators}
\label{sb:PODrest_lift}

We construct the restriction and lifting operators by employing POD on the training data, determining the latent dimension $d$ and the POD basis $U_d$ via the economy-size SVD. In all cases, we determine $d$ by the best low-rank approximation retaining $99\%$ of the energy; thus, computing the minimum number satisfying the criterion:
\begin{equation}
E_d = \dfrac{\sum_{i=1}^d \sigma_i^2}{\sum_{i=1}^r \sigma_i^2} \geq 0.99 \label{eq:energycriterion}
\end{equation}
where $r = \min(n_c,n_t) = 1600$ is the rank of the data matrix, and $\sigma_i>0$ for $i=1,\ldots,r$ are the singular values in descending order. 
The basis $U_d$ is then formed by the first $d$ left singular vectors. 

For the unidirectional flow case, we applied this procedure to the training dataset $X_{tr}$ and the resulting basis $U_d$ is then used to construct the restriction and lifting operators in Eqs.~\eqref{eq:Rest} and \eqref{eq:Lift}, respectively. For the counterflow case, the above procedure is done per group, to compute the separate POD bases $U_{d_1}^{(1)}$ and $U_{d_2}^{(2)}$ from the group-specific training datasets $X_{tr}^{(1)}$ and $X_{tr}^{(2)}$, respectively. Following Proposition~\ref{prop:2}, we then computed the SVD of the cross-covariance matrix $C^{(12)}=\frac{1}{n_t}\bar X^{(1)}_{tr} \left(\bar X_{tr}^{(2)}\right)^\top$ of the training sets, which is required to form the augmented bases $\tilde{U}_{d_l}^{(l)}$ (for $l=1,2$) defined in Eq.~\eqref{eq:Aug_Basis}. These augmented bases were subsequently used to construct the group-specific restriction and lifting operators in Eqs.~\eqref{eq:Rest2} and \eqref{eq:Lift2}.

For both configurations, to assess the accuracy of restriction and lifting operators, we evaluate the POD reconstruction error on the training and testing data sets by the relative $L_2$ error: 
\begin{equation}
    e^{2,rec}_k = \dfrac{\lVert x(t_k) - \mathcal{L}(\mathcal{R}(x(t_k)))\rVert_2}{\lVert x(t_k) \rVert_2},
    \label{eq:PODreconErr}
\end{equation}
where $x(t_k)\in X_{tr}, X_{ts}$ denotes the normalized density in Eq.~\eqref{eq:denNorm} at time $t_k$ for $k=1,\ldots,K$. For the counterflow, this error is computed separately for each group using the corresponding operators and data sets $X^{(l)}_{tr}, X^{(l)}_{ts}$ for $l=1,2$.

\subsection{ROMs for latent space dynamics and their training}
\label{sb:ROMs_impl}
Having obtained the latent representations $\boldsymbol{y}(t)\in\R^d$ from the restriction operator, the third step of the proposed approach is to construct ROMs for learning the dynamics in the latent space. For this purpose, we learn linear MVAR and non-linear LSTM models, as described in Section~\ref{sbsb:ROMs}, for predicting the latent variable $\hat{\boldsymbol{y}}_c(t_k)$ of the $c$ case at timestep $t_k$ given the past $w$ lags $\{\boldsymbol{y}_c(t_{k-w}),\ldots,\boldsymbol{y}_{c}(t_{k-1})\}$.Note that in the counterflow, a single ROM is trained on the concatenated latent vectors from both groups $\boldsymbol{y}_c(t_k) = [\boldsymbol{y}^{(1)}_c(t_k), \boldsymbol{y}^{(2)}_c(t_k)]^\top$, as detailed in Section~\ref{sbsb:ROMs}. To conform to sequence-to-one learning, the features consist of the dataset $Y_{f}=\{ \boldsymbol{y}_c(t_{k-w}),\ldots,\boldsymbol{y}_c(t_{k-1})\}\in\mathbb{R}^{d\times w\times C\cdot(K-w)}$ and the targets consist of the data set $Y_{t}=\{ \boldsymbol{y}_c(t_k)\}\in\mathbb{R}^{d\times C\cdot(K-w)}$, where each $\boldsymbol{y}_c(t_k)$ is computed via the restriction operator in Eq.~\eqref{eq:Rest} on the training data $X_{tr}$ (for the unidirectional case) and via Eq.~\eqref{eq:Rest2} on $X_{tr}^{(1)},X_{tr}^{(2)}$ (for the counterflow case).

As a preliminary step, we first tune the lag window $w$, within a bound of $w_{\text{max}} = 20$, using two information-theoretic criteria: the Bayesian Information Criterion (BIC) and the Akaike Information Criterion (AIC) \cite{lutkepohl2005new, akaike1974new, schwarz1978estimating}; details are provided in ~\ref{app:bic_aic}. Both criteria are likelihood-based, with AIC tending to select larger windows than BIC. We note that BIC and AIC are well-suited for MVARs since they assume stationarity of the data and white noise residuals. For comparison, we adopt the same $w$ for learning LSTMs, as the one tuned for MVARs by these criteria.  

For the MVAR model training in both configurations, we first verified that all input time series are stationary using the augmented Dickey--Fuller (ADF) test \cite{Dickey1979} with a threshold of significance level $p < 0.01$. Then, we computed the coefficients $\boldsymbol{A}_j$ in Eq.~\eqref{eq:mvar} using ordinary least squares to solve the optimization problem in Eq.~\eqref{eq:mvar_ls}. For the counterflow case, where the related data-matrix was ill-conditioned, we employed ridge regularization with $\lambda=10^{-6}$.

For the LSTM model in both configurations, we considered a single LSTM layer with $16$ units and a hyperbolic tangent activation function in the hidden layers; the output layer is linear, as denoted in Eq.~\eqref{eq:LSTMout}. To train the LSTM model, we solve the optimization problem in Eq.~\eqref{eq:lstm_ls} using the Adam optimizer \cite{kingma2015adam} with automatic differentiation for the gradients of the loss function, enabled through MATLAB's Deep Learning Toolbox \cite{MATLAB}. The model's parameters are initialized with a Glorot uniform initialization \cite{glorot2010}, the initial learning rate is set to $l_r = 10^{-3}$, and the mini-batch size is set to $32$; the latter hyperparameters were tuned on a trial-and-error basis, providing a good compromise between computational efficiency and generalization without exceeding memory constraints. The maximum number of training epochs was set to $40$ for the unidirectional flow case and to $70$ for the counterflow case, based on the saturation of the training loss, which yielded no additional improvement beyond these epochs. 

To assess the convergence of both MVARs and LSTMs training, we report the values of the loss functions in Eqs.~\eqref{eq:mvar_loss} and \eqref{eq:lstm_loss}, respectively, at the end of training. To further quantify the uncertainty associated with the stochastic training of the LSTM models, we repeated their training procedure 50 times, each with different randomly initialized learnable parameters. We report the mean error of the final losses and the 10–90\% error percentiles across the 50 runs. Since the MVARs loss function is based on $L_2$ error, we further report the MSE at the latent space:
\begin{equation}
MSE_{\text{MVAR}}\left(\{ \boldsymbol{y}_c(t_k) \}, \{\hat{\boldsymbol{y}}_c(t_k)\}; \{\boldsymbol{A}_j\}_{j=1}^w \right) = \dfrac{1}{C(K-w)} \sum_{c=1}^C\sum_{t_k=w+1}^{K} \Big \lVert \boldsymbol{y}_c(t_k)-\hat{\boldsymbol{y}}_c(t_k) \Big \rVert^2_2, 
\label{eq:mvar_loss_MSE} 
\end{equation} 
where $\hat{\boldsymbol{y}}_c(t_k)$ is the prediction in Eq.~\eqref{eq:MVARpred}, to enable direct comparison with the MSE-based loss function of LSTMs. Additionally, we report the computational times required for training both ROMs (for LSTMs, we report the mean error and 10--90\% error percentiles statistics across the 50 runs); we expect MVARs to be much faster than LSTMs due to the lower number of parameters and the method employed for solving the related optimization problem. 

\subsection{Accuracy assessment of the proposed framework}
Having constructed/trained each step of the proposed framework (construction of the restriction/lifting operators and training of the ROMs), we finally assess the forecasting performance of the whole framework on the $C=20$ different initializations of the testing set $\mathcal{X}_{ts}$; see ~\ref{app:intial_conditions} for details. In particular, we first obtain the ``ground truth'' normalized densities for each initialization, say $x^{(c)}(t_0+k\ \delta t)\equiv x^{(c)}(t_k)\in X_{ts}$ for all time steps $k=1,\ldots, K$, using Eq.~\eqref{eq:denNorm}. While ROMs are trained in an open-loop setting using ground truth embeddings, we evaluate the proposed framework in a closed-loop setting (i.e., recursively feeding subsequent predictions to the autoregressive ROMs), as this reflects the intended use in practice. In particular, following the restrict-evolve with ROM-lift framework in Eq.~\eqref{eq:LER_basic}, we compute the closed-loop prediction:

%

\begin{equation}
    \hat{x}^{(c)}(t_k) = \mathcal{L}\left(\hat{\boldsymbol{y}}^{(c)}(t_k)\right), \qquad
    \hat{\boldsymbol{y}}^{(c)}(t_k)=\Phi_{\text{ROM}}\left(
    \hat{\boldsymbol{y}}^{(c)}(t_{k-1}),\ldots, \hat{\boldsymbol{y}}^{(c)}(t_{k-w});\mathbf{p}\right),
    \label{eq:lift_ROMev_rest}
\end{equation}
where $\hat{\boldsymbol{y}}^{(c)}(t_{k-j})$ are latent points predicted recursively by $\Phi_{\text{ROM}}$ in the previous $j=1,\ldots,w$ steps. Closed-loop forecasting requires the initial $j=1,\ldots,w$ embeddings of the observed densities $\mathcal{R}(x^{(c)}(t_{j}))$. Here, 

$\mathcal{R}$ and $\mathcal{L}$ are the restriction and lifting operators (defined in Eqs.~(\ref{eq:Rest},\ref{eq:Lift}) for the unidirectional flow or in Eqs.~(\ref{eq:Rest2}, \ref{eq:Lift2}) for the counterflow), and $\Phi_{\text{ROM}}$ is the latent dynamics ROM with parameters $\mathbf{p}$ that can be either the trained MVAR or LSTM model; for LSTMs, we select the best-performing model among the 50 training runs. For completeness, we provide the open-loop setting and corresponding one-step prediction results in \ref{app:UniFlow} and~\ref{app:CounterFlow} for the unidirectional flow and counterflow cases, respectively. In open-loop, the latent points are obtained from ground truth embeddings $\boldsymbol{y}^{(c)}(t_{k-j})=\mathcal{R}(x^{(c)}(t_{k-j}))$ rather than previous predictions $\hat{\boldsymbol{y}}^{(c)}(t_{k-j})$ in Eq.~\eqref{eq:lift_ROMev_rest}.

For evaluation purposes, the prediction is obtained at every time step $k$. However, in practice, the lifting is applied on-demand only when a high-dimensional reconstruction is required. This renders the scheme particularly efficient, as the evolution is governed by the ROM in the latent space.

To quantify the end-to-end prediction accuracy, we compute the relative, reconstructed in the ambient-space, $L_1$, $L_2$ and $L_\infty$ errors between the ``ground truth" densities $x^{(c)}(t_k)$ and the predicted ones $\hat{x}^{(c)}(t_k)$ 
per case $c$ in the testing set and time step $k$ as:
\begin{equation}
    e^{1,(c)}_k = \dfrac{\lVert x^{(c)}(t_k) - \hat{x}^{(c)}(t_k)\rVert_1}{\lVert x^{(c)}(t_k) \rVert_1}, \quad e^{2,(c)}_k = \dfrac{\lVert x^{(c)}(t_k) - \hat{x}^{(c)}(t_k)\rVert_2}{\lVert x^{(c)}(t_k) \rVert_2}, \quad e^{\infty,(c)}_k = \dfrac{\lVert x^{(c)}(t_k) - \hat{x}^{(c)}(t_k)\rVert_\infty}{\lVert x^{(c)}(t_k) \rVert_\infty}.
    \label{eq:ForcastErr}
\end{equation}
For a summary of statistics, we report the mean and the 10--90\% percentiles of the above errors \emph{over all cases and time steps}. To track down the prediction accuracy in time, we further depict the mean and 10--90\% percentiles of the error evolution in time \emph{over all cases}. For the counterflow, these errors are computed separately for each group $l=1,2$.

All numerical computations were executed on a PC equipped with an 11th Gen Intel(R) Core(TM) i7-1165G7 2.80 GHz CPU and 16 GB of system memory. The source code is publicly available at \url{github.com/patsatzisdim/NGEF_CD.}

\section{Numerical results} 
\label{sec:NumRes}

Here, we present the results of the proposed framework on predicting pedestrian flow in a rectangular corridor with an obstacle for the unidirectional flow and counterflow configurations described in Section~\ref{sb:conf}. We end the section with a comparative discussion about the use of MVAR and LSTM models.

\subsection{Unidirectional flow case study} \label{sb:UniFlow_res}
We first considered the unidirectional flow of pedestrians moving from the left to the right of the corridor as shown in Fig.~\ref{fig:Pedestrians}(a). As already discussed in Section~\ref{sb:data_gen}, we first collected pedestrian trajectory data using the SFM microscopic agent-based simulator under a wide range of initial conditions. The resulting training dataset $\mathcal{X}_{tr}$ is used to train our framework, while the testing dataset $\mathcal{X}_{ts}$ is used to assess its efficiency and prediction accuracy.

In the first step, we extracted the normalized density fields data $X_{tr},X_{ts}$ from the discrete pedestrian positions in $\mathcal{X}_{tr},\mathcal{X}_{ts}$ using KDE, as described in Section~\ref{sb:den_extr}. Next, we employed POD on $\mathcal{X}_{tr}$ to discover the latent space dimension; see Section~\ref{sb:PODrest_lift}. Following the criterion in Eq.~\eqref{eq:energycriterion}, the minimum latent dimension required for retaining more than 99\% of the energy in the training dataset is $d=6$ modes, as shown in Fig.~\ref{fig:PODrecon}(a). 
\begin{figure}[!ht]
    \centering
    \begin{subfigure}[b]{0.47\textwidth}
        \includegraphics[width=\textwidth]{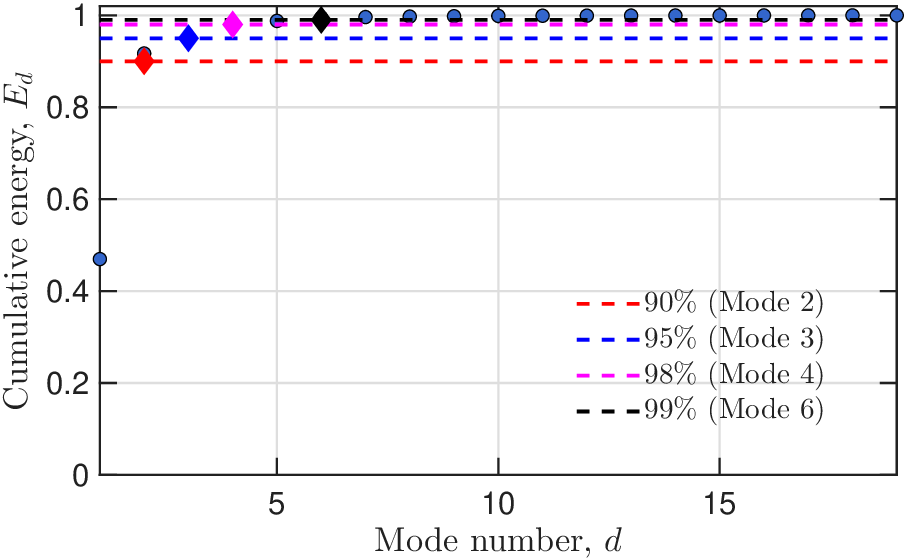}
        \caption{POD cumulative energy}
    \end{subfigure}
    \hfill
    \begin{subfigure}[b]{0.47\textwidth}
        \includegraphics[width=\textwidth]{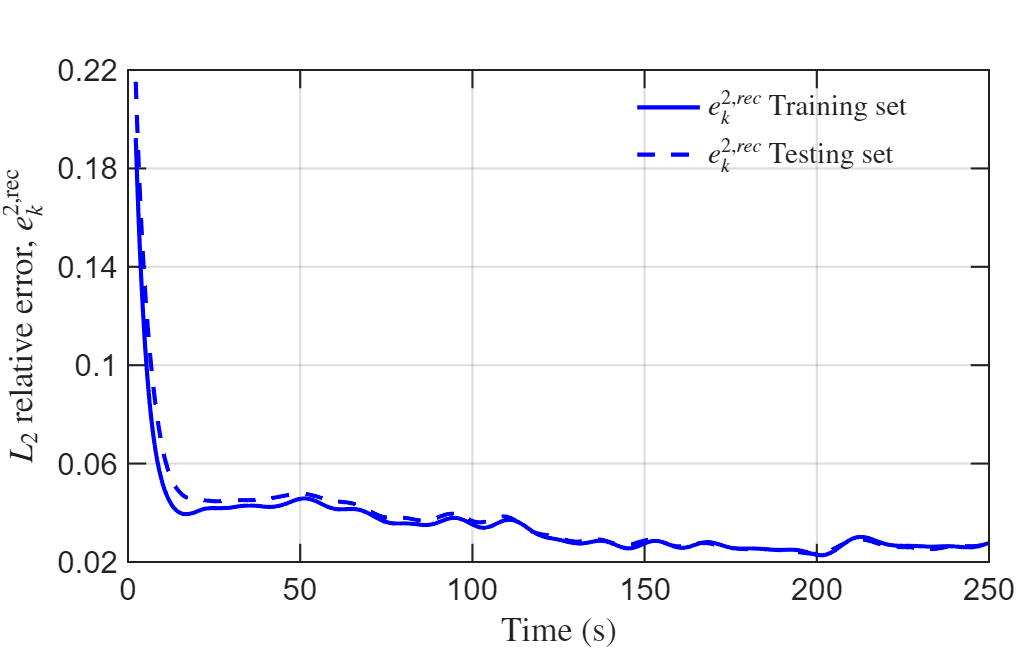}
        \caption{POD reconstruction error}
    \end{subfigure}
    \caption{Accuracy of the restriction and lifting operators in Eqs.~\eqref{eq:Rest} and \eqref{eq:Lift} for the unidirectional flow case. Panel (a) shows the minimum number of POD modes (colored diamonds) required for retaining the desired percentage of energy in the training data set. Panel (b) depicts the average (over the $C=20$ cases with different initial conditions) reconstruction error $e^{2,rec}_k$ in Eq.~\eqref{eq:PODreconErr} with $d=6$ POD modes, in time for the training and testing datasets.}
    \label{fig:PODrecon}
\end{figure}
Using the discovered POD basis $U_d$, we next constructed the restriction and lifting operators in Eqs.~\eqref{eq:Rest} and \eqref{eq:Lift}. To assess their accuracy, we computed the relative $L_2$ reconstruction error $e^{2,rec}_k$ in Eq.~\eqref{eq:PODreconErr} across the $k=1,\ldots,K$ time steps of each trajectory. Fig.~\ref{fig:PODrecon}(b) displays the average relative error \emph{over all the} $C=20$ cases with different initial conditions included in either the training or testing datasets. While initially the relative reconstruction error is $\sim$21\%, it decreases to 4\% at the first $20\ \text{s}$ and further reduces to $\sim$3\% after $50\ \text{s}$ for both seen (training set) and unseen (testing set) data, indicating high accuracy of the restriction and lifting operators with a significantly reduced latent space dimension $d=6$. We note that this level of reconstruction error serves as an \textit{infimum approximation error baseline}, i.e., the best possible error for the proposed framework at the ambient--density profiles--space. ROM forecasts in the long term are therefore expected to surpass this baseline error due to the additional error contributions from model-form error and training/generalization errors.

We next trained MVAR and LSTM models to learn ROMs for the latent dynamics (see Section~\ref{sbsb:ROMs}), where the latent variables $\boldsymbol{y}_c(t_k)\in \mathbb{R}^{6}$ of the $c$ case at timestep $t_k$ 
are obtained from the density profiles of the training set via the restriction operator in Eq.~\eqref{eq:Rest}. To determine the lag window, we employed the BIC and AIC criteria described in \ref{app:bic_aic}; BIC indicated a lag window of $w=4$, whereas AIC favored a wider (as expected) $w = 9$. Both windows were used for training both MVAR and LSTM models, denoted from now on as MVAR(4), MVAR(9), LSTM(4), and LSTM(9) models. The training data sets were divided into features $Y_{f}=\{ \boldsymbol{y}_c(t_{k-w}),\ldots,\boldsymbol{y}_c(t_{k-1})\}$ and targets $Y_{t}=\{ \boldsymbol{y}_c(t_k)\}$.  

For the MVARs training, we first verified that the latent dynamics datasets $Y_{f}, Y_{t}$ constitute stationary time series; the ADF test confirmed stationarity with a significance level of $p < 0.001$ for all $d$ variables. We next trained all MVAR and LSTM models using algorithms, hyperparameters and stopping criteria discussed in detail in Section~\ref{sb:ROMs_impl}. The training results are shown in Table~\ref{tab:performance} for the four ROMs, including the loss function values attained at the end of training and the computational time (in seconds) required. For the LSTMs, as each model was trained over 50 runs with different parameter initializations, we report in Table~\ref{tab:performance} the mean and the 10--90\% error percentiles across the 50 runs. 
\begin{table}[htbp]
\centering
\caption{Training results of MVAR(4), MVAR(9), LSTM(4) and LSTM(9) models at the latent space for the unidirectional flow case.~Loss functions $\mathcal{L}_{\text{MVAR}}\left(\cdot\right)$ in Eq.~\eqref{eq:mvar_loss} for MVARs and $\mathcal{L}_{\text{LSTM}}\left(\cdot \right)$ in Eq.~\eqref{eq:lstm_loss} for LSTMs are reported at the end of training, along with computational times (in seconds) required for training. For the LSTMs, the mean and 10--90\% error percentiles (in parentheses) are reported across the 50 training processes with different parameter initializations. Additionally, the MSE loss function of MVARs in Eq.~\eqref{eq:mvar_loss_MSE} is reported for comparison to the MSE-based loss $\mathcal{L}_{\text{LSTM}}(\cdot)$.}
\label{tab:performance}
\begin{tabular}{l c c c c}
\toprule
\textbf{Model} & $\mathcal{L}_{\text{MVAR}}(\cdot)$ & $MSE_{\text{MVAR}}(\cdot)$ & $\mathcal{L}_{\text{LSTM}}(\cdot)$ & \boldmath{Comput. Time (s)} \\
\midrule
MVAR(4) & \shortstack{$4.65 \times 10^{-8}$} & \shortstack{$3.89 \times 10^{-13}$} & \shortstack{-} & \shortstack{$0.0013$} \\ 
MVAR(9) & \shortstack{$2.04 \times 10^{-8}$} & \shortstack{$1.72 \times 10^{-13}$} & \shortstack{-} & \shortstack{$0.0017$} \\
LSTM(4) & \shortstack{-} & \shortstack{-} & 
\shortstack{$7.12\ (6.27, 8.16) \times 10^{-9}$} 
& \shortstack{$62\ (55,66)$} \\
LSTM(9) & \shortstack{-} & \shortstack{-} & 
\shortstack{$1.51\ (1.30, 1.81) \times 10^{-8}$} 
& \shortstack{$86\ (77,89)$}
\\
\bottomrule
\end{tabular}
\end{table}
Since the loss function of MVARs is $L_2$-based, we additionally compute the MSE in Eq.~\eqref{eq:mvar_loss_MSE} to enable direct comparison with the MSE-based LSTM loss. MVARs converge to much smaller loss function values in comparison to LSTMs. As expected, LSTMs require significantly higher computational time for training in comparison to MVARs. Table~\ref{tab:performance} further shows that the training of both types of ROMs with increased lag window requires more computational time. 

\paragraph{Recursive/Closed-loop} Predictions
Having constructed and trained all components of our framework, we evaluated its end-to-end performance in the ambient--density profile--space. While the ROMs were trained in an open-loop setting, we assess them in the practical closed-loop setting (see Eq.~\eqref{eq:lift_ROMev_rest}), where only the first $w$ observations are supplied and all subsequent predictions are generated autoregressively. For LSTM models, we consider the best-performing model from the 50 training runs. We quantify the prediction accuracy via the relative $L_1$, $L_2$, and $L_\infty$ errors, $e^{1,(c)}_k,\ e^{2,(c)}_k,\ e^{\infty,(c)}_k$ in Eq.~\eqref{eq:ForcastErr}, computed between the ``ground-truth" density profiles $x^{(c)}(t_k)$ and the predicted/reconstructed ones $\hat{x}^{(c)}(t_k)$ for the $C$, unseen in training, cases (of different initial conditions) included in the testing set $X_{ts}$ across time steps $k$.

Table \ref{tab:errors_CL} provides a summary of the closed-loop reconstructed errors, including the mean and the 10--90\% error percentiles, aggregated \emph{over all cases and time steps}.
\begin{table}[htbp]
\centering
\caption{Closed-loop/recursive prediction relative errors in the ambient--density profile--space, for the testing set using the trained MVAR and LSTM (best out of 50 training runs) models for the unidirectional flow case. The relative reconstructed $L_1$, $L_2$ and $L_\infty$ errors $e^{1,(c)}_k,\ e^{2,(c)}_k,\ e^{\infty,(c)}_k$ in Eq.~\eqref{eq:ForcastErr} are reported for the MVAR(4), MVAR(9), LSTM(4), and LSTM(9) latent dynamics models. Mean and 10--90\% error percentiles are shown \emph{over all the} $C=20$ cases and $K=\{991,996\}$ time steps (for width $w=\{9,4\}$, respectively).}
\label{tab:errors_CL}
\begin{tabular}{lccc}
\toprule
\textbf{Model} &  $L_1$ error, $e^{1,(c)}_k$ & $L_2$ error, $e^{2,(c)}_k$ & $L_\infty$ error, $e^{\infty,(c)}_k$ \\
\midrule
MVAR(4) & \shortstack{$0.180\ (0.085,0.246)$} & \shortstack{$0.153\ (0.077,0.211)$} & \shortstack{$0.172\ (0.095,0.242)$} \\
MVAR(9) & \shortstack{$0.163\ (0.087,0.228)$} & \shortstack{$0.140\ (0.079,0.194)$} &  \shortstack{$0.160\ (0.092,0.230)$} \\
LSTM(4) & \shortstack{$0.177\ (0.128,0.235)$} & \shortstack{$0.156\ (0.115,0.209)$} & \shortstack{$0.185\ (0.133,0.243)$} \\
LSTM(9) & \shortstack{$0.165\ (0.112,0.223)$} & \shortstack{$0.142\ (0.110, 0.190)$} & \shortstack{$0.166\ (0.116,0.227)$} \\
\bottomrule
\end{tabular}
\end{table}
As shown in Table~\ref{tab:errors_CL}, the proposed framework with the use of MVAR models slightly outperforms that of LSTMs in the closed-loop setting across error metrics. The MVAR(9) achieves the lowest errors, indicating accuracy in long-term predictions. The MVAR(4) also performs well, with broader percentiles across test cases. Increasing the lag window from $w=4$ to $w=9$ slightly improves MVAR accuracy, demonstrating the benefit of incorporating longer temporal dependencies within the linear autoregressive structure. The LSTM models exhibit slightly higher errors than MVARs, with tighter 10--90\% percentiles, while the increase of lag window shows a similar to MVARs improvement.

We next assessed the evolution/accumulation of the relative reconstructed errors $L_2$ and $L_\infty$ over time. Figure~\ref{fig:All_Errors_CL} depicts the average reconstructed relative errors in the ambient--density profile--space, and their 10--90\% error percentile ranges across the multiple cases of different, unseen initial conditions in the testing set for all four ROMs.
\begin{figure}[htbp]
    \centering
    \begin{subfigure}[b]{0.47\textwidth}
        \includegraphics[width=\textwidth]{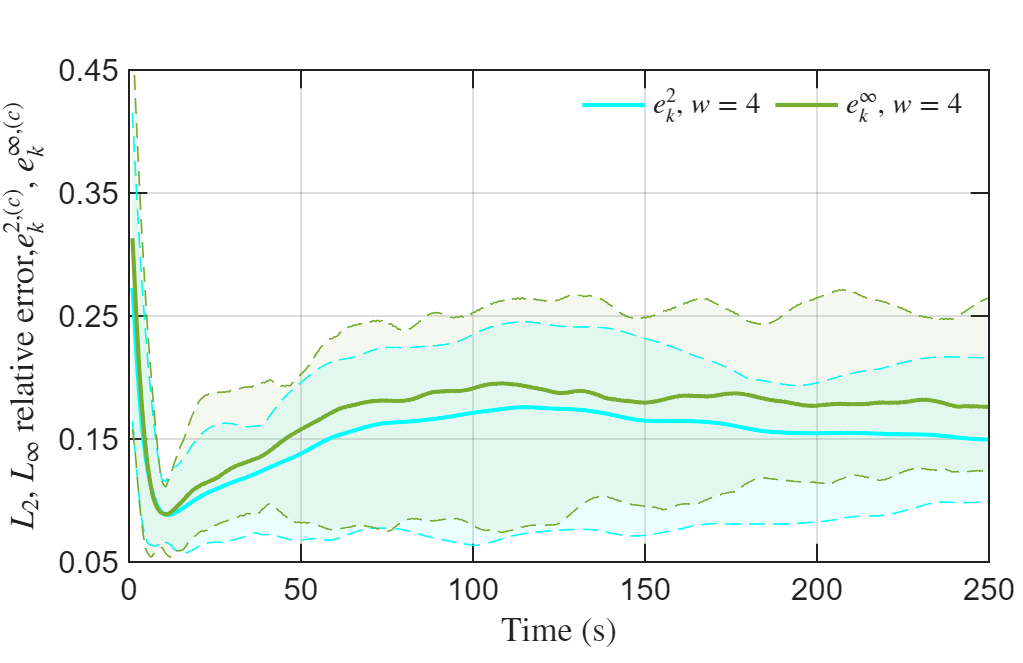}
        \caption{Relative Errors with MVAR(4)}
    \end{subfigure}
    \hfill
    \begin{subfigure}[b]{0.47\textwidth}
        \includegraphics[width=\textwidth]{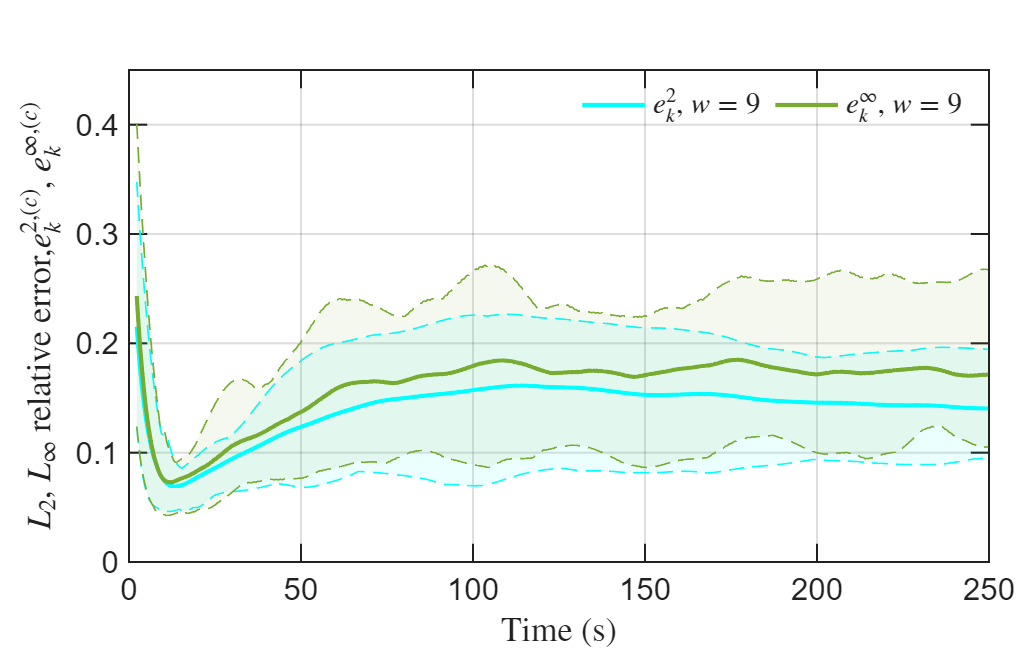}
        \caption{Relative Errors with MVAR(9)}
    \end{subfigure}
    
    \begin{subfigure}[b]{0.47\textwidth}
        \includegraphics[width=\textwidth]{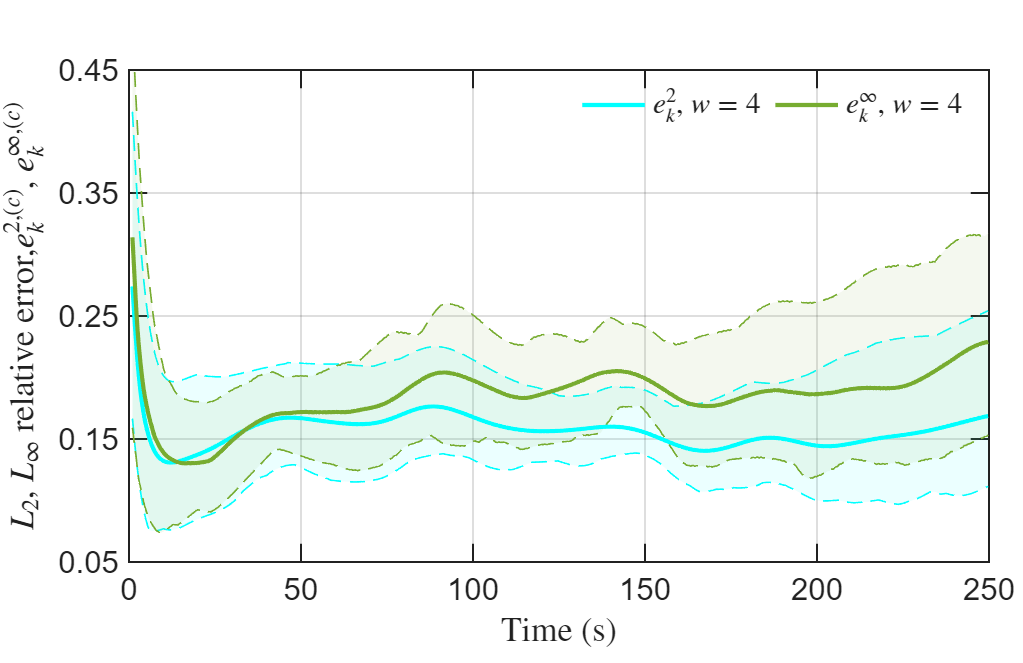}
        \caption{Relative Errors with LSTM(4)}
    \end{subfigure}
    \hfill
    \begin{subfigure}[b]{0.47\textwidth}
        \includegraphics[width=\textwidth]{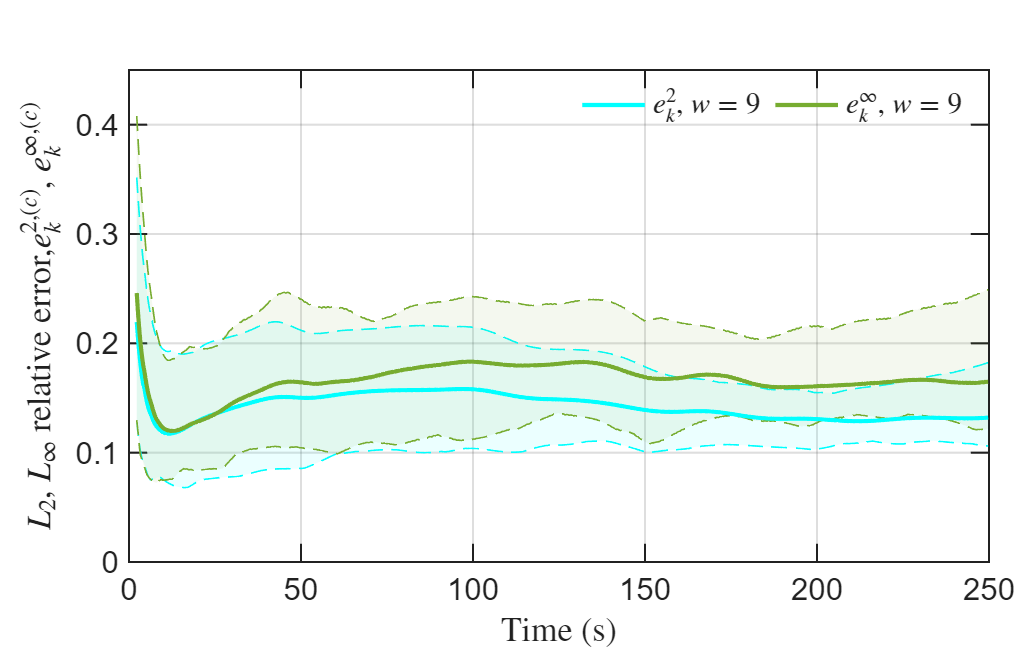}
        \caption{Relative Errors with LSTM(9)}
    \end{subfigure}
    
    \begin{subfigure}[b]{0.47\textwidth}
        \includegraphics[width=\textwidth]{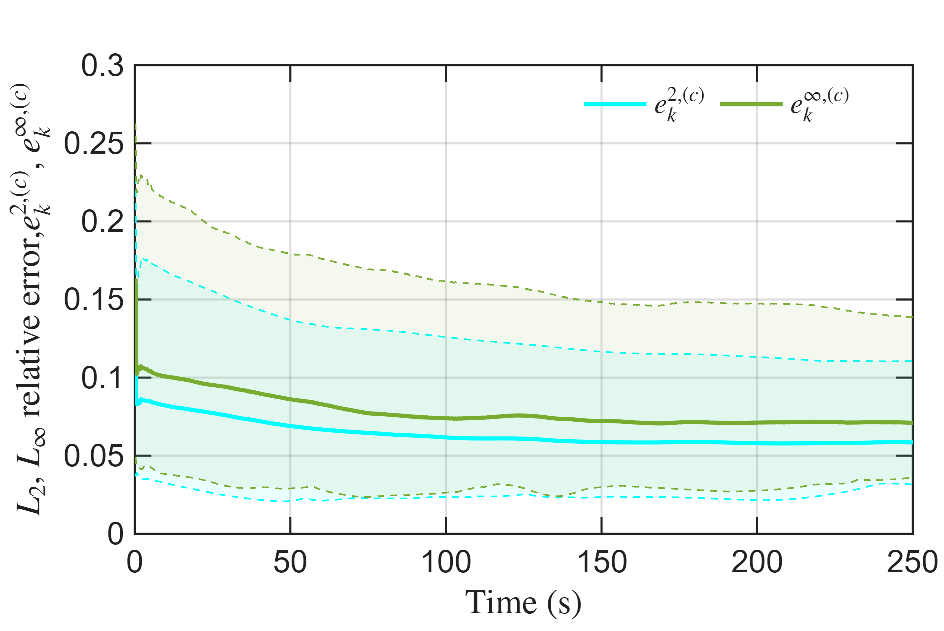}
        \caption{Relative Errors with SFM, $+1\%$ perturbation}
    \end{subfigure}
    \hfill
    \begin{subfigure}[b]{0.47\textwidth}
        \includegraphics[width=\textwidth]{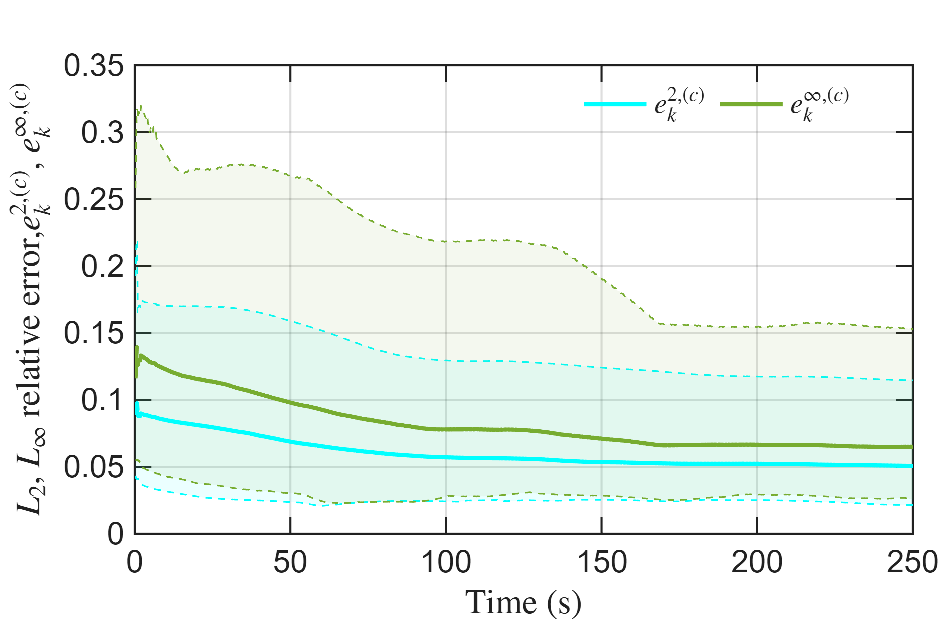}
        \caption{Relative Errors with SFM, $-1\%$ perturbation}
    \end{subfigure}
    \caption{Closed-loop/recursive simulation errors in the ambient--density profile--space, across the testing set over time, using the trained MVAR and LSTM models for the unidirectional flow case. Panels (a)--(d) display the relative $L_2$ (cyan) and $L_\infty$ (green) errors $e^{2,(c)}_k$ and $e^{\infty,(c)}_k$ in Eq.~\eqref{eq:ForcastErr} for MVAR(4), MVAR(9), LSTM(4) and LSTM(9) models, respectively. Panels (e) and (f) provide a system's ``baseline'' relative error estimation, obtained from the SFM simulations using $\pm1\%$ perturbations in the initial conditions for the density profiles (see in Section~\ref{sb:conf}). Mean relative error (solid) and 10--90\% percentiles (dashed) are shown over $C=20$ cases per time step. For the LSTM models, results correspond to the model achieving the best training performance out of the 50 independent runs.
    \label{fig:All_Errors_CL}}
\end{figure}
Indicatively, for our illustrations, to provide a comparative ``baseline'' of the reconstructed error for tiny perturbations of the density profiles, we quantified the sensitivity of the SFM simulations per se with respect to small perturbations of the initial conditions (see in Section~\ref{sb:conf}). Specifically, we introduced $\pm1\%$ perturbations in the initial conditions of the density profiles of the reference test cases and computed the relative reconstructed errors between these perturbed initial conditions and their unperturbed counterparts, as shown in Fig.~\ref{fig:All_Errors_CL}(e,f).

The time evolution in Fig.~\ref{fig:All_Errors_CL} reveals a characteristic pattern across all models: errors are higher initially, decrease drastically during the first $20\, \text{s}$ as the POD reconstruction error baseline diminishes, and then slowly increase towards the end of the simulation due to error accumulation in the recursive forecasting. The MVAR models in Fig.~\ref{fig:All_Errors_CL}(a,b) demonstrate remarkable approximation accuracy with the mean relative $L_2$ error remaining bounded below $\sim$14\%(15\%) for the MVAR(9)(MVAR(4)) models by the end of the simulation horizon. The respective 10--90\% error percentiles are bounded in (7, 25)\% errors for the MVAR(4) model, while for the MVAR(9) model they tighten further to (10, 20)\%. Both MVAR models achieve errors that compare favorably with the intrinsic system baseline obtained from $\pm1\%$ perturbed SFM simulations, which shows mean $L_2$ error of $\sim 8\%$ and 10--90\% percentiles $(3,32)\%$ (Fig.~\ref{fig:All_Errors_CL}(e,f)). The LSTM models in Fig.~\ref{fig:All_Errors_CL}(c,d) provide slightly less accurate predictions than the MVAR models. By the end of the simulation, the LSTM(4) mean relative $L_2$ error reaches approximately $16\%$, with 10--90\% percentiles widening to $(11,25)\%$. The LSTM(9) improves marginally, with mean error around $13\%$ and tighter percentiles $(10,18)\%$. The MVAR and LSTM models demonstrate similar trends, with MVARs providing slightly higher accuracy; a result confirming the averaged in time results reported in Table~\ref{tab:errors_CL}.

For completeness, we directly compared the ground truth normalized density fields $x^{(c)}(t_k)$ and the predicted fields $\hat{x}^{(c)}(t_k)$ generated by the closed-loop/recursive simulations. The predictions of the proposed framework for all ROMs (see Figs~\ref{fig:MVARw4den}--\ref{fig:LSTMw9den}) successfully preserve the global structure and propagation direction of the ground truth density, demonstrating the framework's capacity to capture dominant crowd dynamics and produce meaningful long-horizon predictions. The spatial distribution of the absolute errors between ground truth and closed-loop predictions (see Fig.~\ref{fig:Unidirectional_flow_error}) further shows that the all models correctly respect the obstacle regions. The primary observed discrepancy is a minor underestimation of the peak density values, and of density in regions immediately downstream and upstream of these peaks coupled with the introduction of artificial density in low-density regions; the latter is a likely consequence of enforcing mass conservation via the POD restriction operator. These artifacts are fainter in the MVAR(9) and LSTM(9) models, and become slightly stronger in the less accurate MVAR(4) and LSTM(4) models, showing a lessening effect as time passes (see Fig.~\ref{fig:Unidirectional_flow_error}). 

Finally, the open-loop (one-step) prediction results are provided in \ref{app:UniFlow}; see Table~\ref{tab:errors_OL} and Fig.~\ref{fig:Errors_openloop}. Similarly to the closed-loop findings above, all four ROMs achieve comparable accuracy in the open-loop setting, with mean $L_2$ errors of approximately $0.038$ across models and narrow 10--90\% percentiles. This indicates that while both models capture the local dynamics very accurately in one-step predictions ($L_2$ errors just above the POD baseline), their long-term forecasting is, in practice, affected by error accumulation that can degrade their performance.

\subsection{Counterflow case study} \label{sb:Counterflow_res}

For the second case study, we applied the proposed framework in a counterflow configuration in which two interacting populations are divided equally into two groups with opposing flow directions; see Fig.~\ref{fig:Pedestrians}(b). Following the same procedure as in the unidirectional flow case (for implementation details see Sections~\ref{sb:data_gen}--\ref{sb:PODrest_lift}), we constructed the restriction and lifting operators in Eqs.~\eqref{eq:Rest2} and \eqref{eq:Lift2} by retaining $d_1=6$ and $d_2=8$ POD modes per group $l=1,2$ and $m=4$ cross-covariance modes. The accuracy of the resulting operators is very high, with relative $L_2$ errors reducing to $3-4\%$ after the first $50\, \text{s}$ for both groups (see Fig.~\ref{fig:PODrecon_co} in \ref{app:CounterFlow}).

With latent variables $\boldsymbol{y}_c(t_k)=[\boldsymbol{y}^{(1)}_c(t_k),\boldsymbol{y}^{(2)}_c(t_k)]\in \mathbb{R}^{24}$ determined, we trained MVAR and LSTM models following Section~\ref{sb:ROMs_impl}. The AIC-identified lag was $w=10$, and stationarity was verified (ADF test). The MVAR(10) again converged to lower loss function values with substantially less computational time than the LSTM(10) one (see Table~\ref{tab:performance_counter} in \ref{app:CounterFlow}), though both models show deteriorated performance compared to the unidirectional flow case.

To evaluate the end-to-end performance of the proposed framework in the ambient--density profile--space in long-term (closed-loop) predictions, we followed the same quantification procedure as in Section~\ref{sb:UniFlow_res}. We again observe that the closed-loop prediction accuracy with the use of MVAR is superior to that of LSTM (see Table~\ref{tab:errors_CL_counterflow} in \ref{app:CounterFlow}). In particular, the mean, overall testing cases and time steps, reconstructed $L_2$ error of the MVAR(10) model is $8\%$ with tight 10--90\% percentiles, while that of the LSTM(10) model is $\sim10\%$ with wider percentiles for both population groups. The detailed error evolution over time is displayed in Fig.~\ref{fig:Errors_closedloop_co}. The MVAR(10) model shows mean $L_2$ errors dropping to $\sim 9-10\%$ within the first $20\, \text{s}$ for both population groups. These errors remain bounded over the long-time horizon at values slightly over $10\%$, exhibiting high accuracy in comparison to the baseline accuracy of pure restriction and lifting operators ($3-4\%$ after the first $50\, \text{s}$ for both groups; see Fig.~\ref{fig:PODrecon_co}). In contrast, LSTM(10) exhibits slightly larger errors that stabilize at $13-14\%$ by the simulation end. Across both models, group 2 shows slightly higher errors and wider uncertainty bands.

\begin{figure}[!h]
    \centering
    \begin{subfigure}[b]{0.47\textwidth}
        \includegraphics[width=\textwidth]{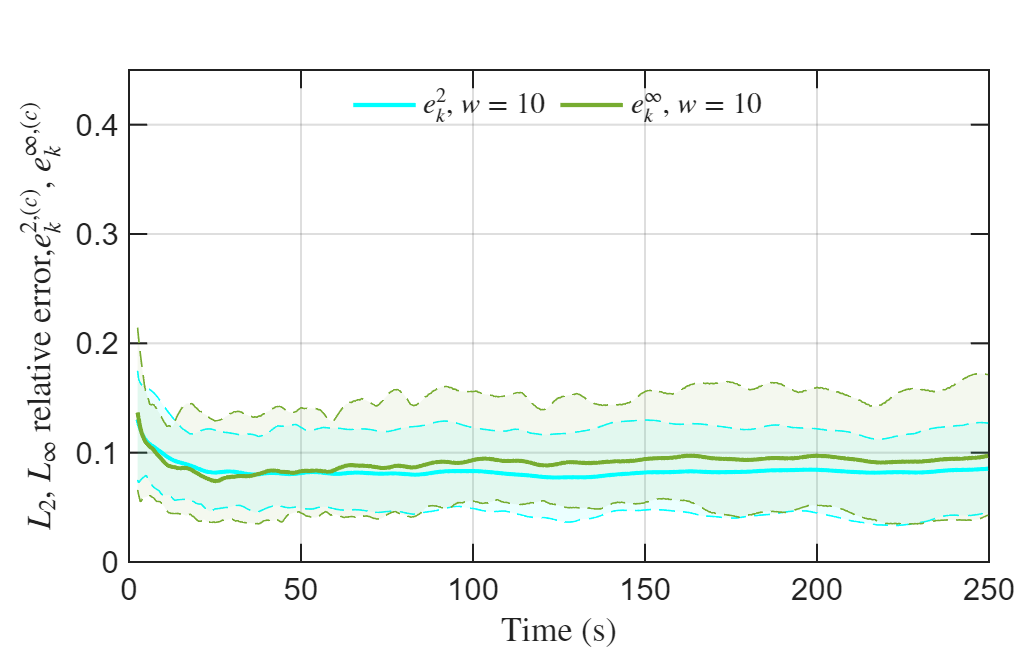}
        \caption{Relative Errors with MVAR(10) for group 1}
    \end{subfigure}
    \hfill
    \begin{subfigure}[b]{0.47\textwidth}
        \includegraphics[width=\textwidth]{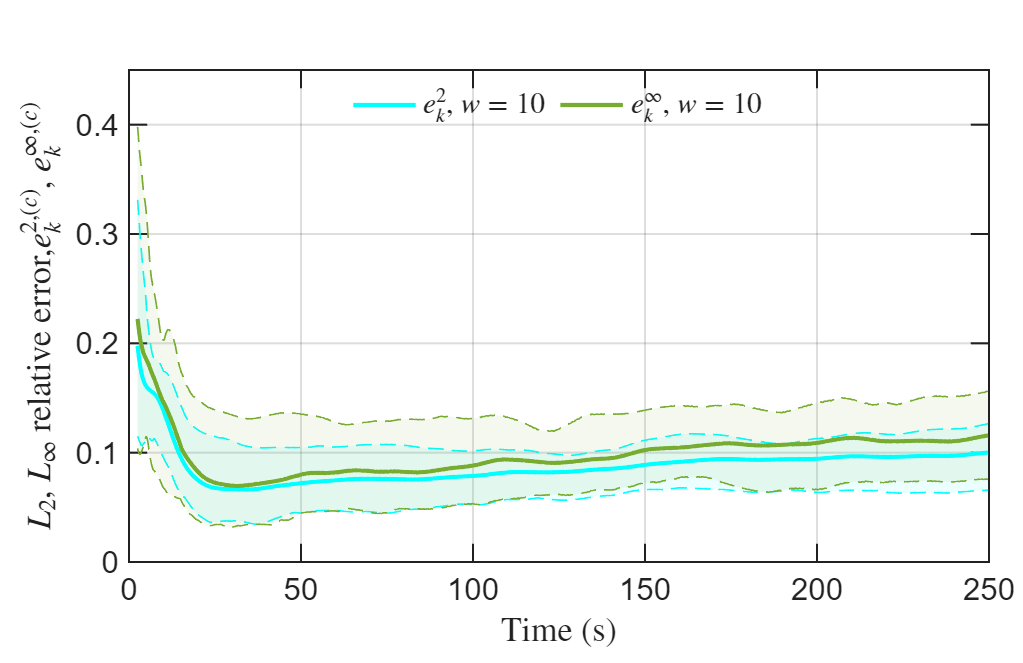}
        \caption{Relative Errors with MVAR(10) for group 2}
    \end{subfigure} 
    \begin{subfigure}[b]{0.47\textwidth}
        \includegraphics[width=\textwidth]{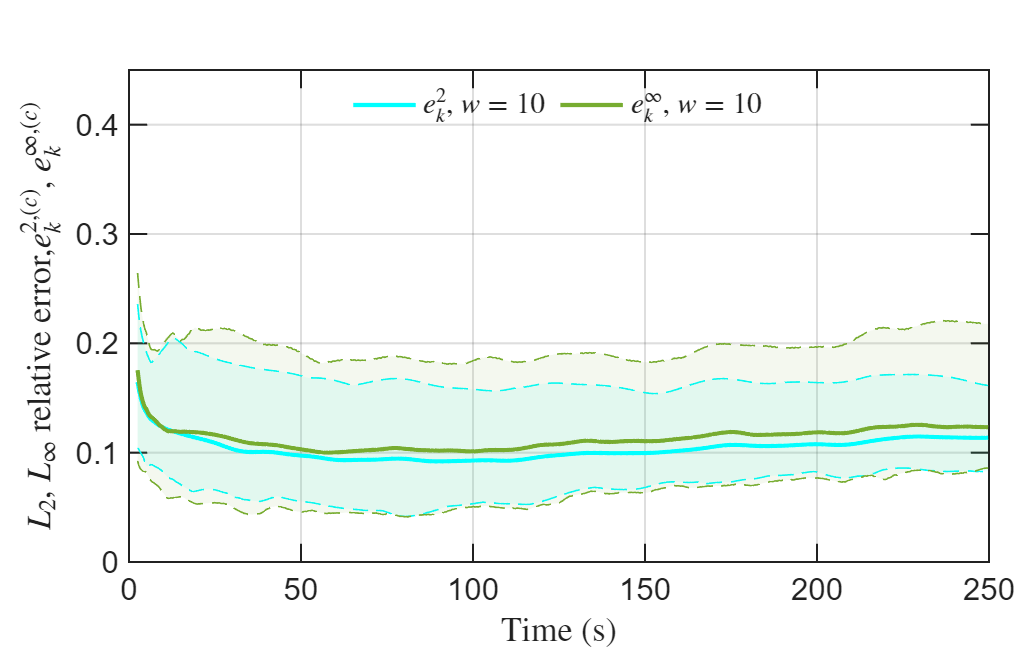}
        \caption{Relative Errors with LSTM(10) for group 1}
    \end{subfigure}
    \hfill
    \begin{subfigure}[b]{0.47\textwidth}
        \includegraphics[width=\textwidth]{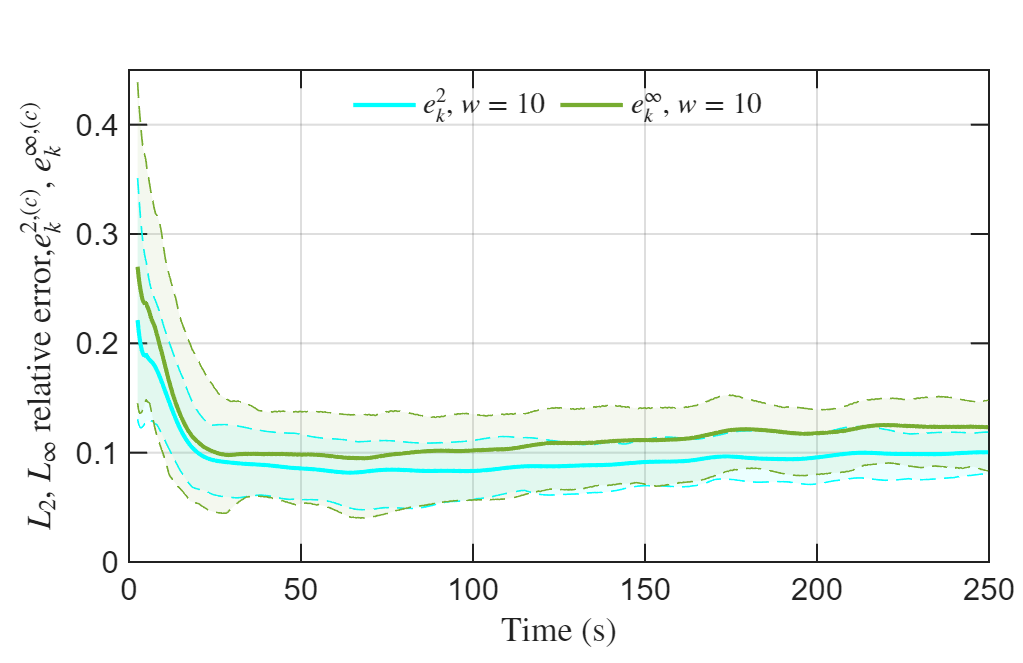}
        \caption{Relative Errors with LSTM(10) for group 2}
    \end{subfigure}
    \caption{Closed-loop/recursive simulation errors in the ambient--density profile--space, across the testing set over time, using the MVAR and LSTM models for the counterflow case. Panels (a,b) and (c,d) display the relative reconstructed $L_2$ (cyan) and $L_\infty$ (green) errors $e^{2,(c)}_k$ and $e^{\infty,(c)}_k$ in Eq.~\eqref{eq:ForcastErr} for MVAR(10) and LSTM(10) models, respectively. The left and right column panels show errors for group 1 and 2, respectively. Mean relative errors (solid) and 10--90\% error percentiles (dashed) are reported over $C=20$ cases per time step. For the LSTM model, results correspond to the run achieving the best training performance out of the 50 independent initializations.}
    \label{fig:Errors_closedloop_co}
\end{figure}

Additionally, we provide a direct comparison of the ground truth densities with their closed-loop predictions for both groups (see Figs.~\ref{fig:Counterflow_densities_MVAR} and \ref{fig:Counterflow_densities_LSTM} in \ref{app:CounterFlow}). Both MVAR and LSTM models accurately reproduce the two dominant density lobes and capture migration and alignment patterns of the two interacting groups. These results, together with the spatial distribution of the absolute errors between ground truth and closed-loop predictions (see Fig.~\ref{fig:Counterflow_Error}), further show that both models respect obstacle regions, without significant errors accumulating in the immediate obstacle proximity, while accurately reproducing localized congestion zones formed upstream and downstream of the obstacle. Minor discrepancies include peak attenuation and spatial dispersion of the Gaussian-like lobe structure of each group, which are slightly more pronounced with the LSTM(10) model, especially as time progresses; a finding that is consistent with LSTM's higher error metrics compared to MVAR in Fig.~\ref{fig:Errors_closedloop_co}. 

For completeness, we also provide the open-loop/one-step prediction results in Table~\ref{tab:errors_CL_counterflow} and Fig.~\ref{fig:Errors_openloop_co} of \ref{app:CounterFlow}. As in the unidirectional flow case, both MVAR and LSTM models achieve comparable accuracy with mean $L_2$ errors $\sim4-6\%$ and very narrow 10--90\% percentiles. This very accurate performance in one-step predictions demonstrates once again the effect of error accumulation introduced in the recursive predictions for long-term horizons.

Finally, the proposed framework achieves higher accuracy in the counterflow case than in the unidirectional flow case, and this difference is consistent across ROM model selections. This may be attributed to the larger latent dimension ($24$ vs $6$), which provides a richer representation space. However, this comes at increased computational cost, highlighting the accuracy-efficiency trade-off inherent in dimensionality selection.

\subsection{MVAR vs. LSTM performance}
The comparison between model classes in both unidirectional and counterflow case studies reveals that the use of MVARs provides a slightly more accurate and reliable framework for long-term predictions than that of LSTMs. This is clearly demonstrated by their lower errors and narrower error percentiles in the closed-loop setting (Table~\ref{tab:errors_CL}, Fig.~\ref{fig:All_Errors_CL} and Table~\ref{tab:errors_CL_counterflow}, Fig.~\ref{fig:Errors_closedloop_co}). While all models perform remarkably well in an open-loop setting (see Table~\ref{tab:errors_OL}, Fig.~\ref{fig:Errors_openloop} and Table~\ref{tab:errors_CL_counterflow}, Fig.~\ref{fig:Errors_openloop_co}), the effect of error accumulation in recursive predictions degrades their predictive accuracy. For instance, in the unidirectional flow case the MVAR(4) model's mean $L_2$ error across time increased from $\sim$4\% (open-loop) to $\sim$15\% (closed-loop), and the LSTM(4) model's error increased from $\sim$4\% to $\sim$16\%. This consistent pattern in both case studies with different latent dimensions ($6$ vs $24$) and interaction complexities is attributed to the challenge of error accumulation in closed-loop forecasting; the linear structure of the MVAR model proves to be less susceptible to this instability than complex, nonlinear LSTMs. While LSTMs can theoretically capture richer nonlinear dynamics, their performance degrades due to the ``curse of dimensionality'' in their training, i.e., finding a global optimum, accompanied by a distribution shift \cite{Bengio2015Scheduled,Goodfellow-et-al-2016} (i.e., the error accumulation of the recursive scheme during inference, that shifts the distribution upon which the models were trained), despite their ability to learn the one-step dynamics effectively. In addition, there are several challenges regarding the ``correct'' initialization of LSTMs \cite{kemeth2021initializing}, especially in dynamical systems and multiscale modeling tasks. In contrast, the training of the linear MVAR models is based on least squares regression, thus giving a global optimum.  For the particular applications, the results suggest that MVARs are a better choice for long-term horizon predictions in the latent space. This is a result in line also with other studies (see e.g., \cite{papaioannou2022time}).

The proposed framework offers a balanced trade-off between prediction accuracy and computational efficiency for coarse-grained crowd dynamics.

For the unidirectional flow case ($250\, \text{s}$ horizon, $K=1000$ steps), one conventional simulation requires $80\, \text{s}$ for SFM integration plus approximately $18\, \text{s}$ for density field extraction via KDE, totaling $98\, \text{s}$ per complete prediction. Our framework’s online closed-loop prediction requires $w \times (t_{ID}+t_R)\, \text{s}$ for density extraction and restriction of the first $w$ lagged snapshots, followed by $(1000-w)\times(t_F+t_L)$ for per-step ROM forecasting and lifting operations; see Eq.~\eqref{eq:lift_ROMev_rest}. This results in total online execution times of $0.397$, $0.887$, $1.413$, and $ 1.871\,\text{s}$ for the MVAR(4), MVAR(9), LSTM(4), and LSTM(9) models, respectively, corresponding to speed-ups of $247\times$, $110\times$, $69\times$, and $52\times$ relative to the conventional approach. For the counterflow case, one conventional simulation requires a total of $77.35\,\text{s}$, while our framework yields total online execution times of $0.781$ and $2.141\,\text{s}$ for the MVAR(10) and LSTM(10) models, respectively, corresponding to speed-ups of $99\times$ and $36\times$. These online times include lifting at every time step for a fair comparison with the conventional simulation output. In practice, with on-demand lifting (i.e., applying $\mathcal{L}$ only when density fields are needed), the speed-ups would be even greater, as the framework evolves in the latent space (Eq.~\eqref{eq:lift_ROMev_rest}).

Offline costs (one-time) for the unidirectional flow case include $(80+18)\times 20 = 1960\,\text{s}$ for generating $20$ training trajectories via SFM with KDE extraction, $4.24\,\text{s}$ for POD basis computation, and model training times of $0.0013$, $0.0017$, $62$, and $86\,\text{s}$ for MVAR(4), MVAR(9), LSTM(4), and LSTM(9) models respectively (the LSTMs times are averaged over 50 runs; see Table~\ref{tab:performance}). For the counterflow case, the offline costs are $1547\,\text{s}$ for generating $20$ training trajectories, $15.51\,\text{s}$ for the augmented POD basis, and model training times of $0.031$ and $104\,\text{s}$ for MVAR(10) and LSTM(10) models respectively (see Table~\ref{tab:performance_counter}). While offline costs appear significant, they become negligible when amortized over multiple predictions. For example, the total offline cost for LSTM(9) in the unidirectional flow case ($2050.24\,\text{s}$) equals just $21$ conventional simulations; a cost quickly recouped when dozens or hundreds of predictions are required, as in parameter studies or real-time applications. 

Overall, MVAR variants prove particularly efficient for real-time applications, while maintaining better accuracy than their LSTM counterparts. These gains are especially valuable for real-world scenarios, where the number of pedestrians renders the conventional approaches infeasible.


\section{Conclusions}
In this work, we present a four-step data-driven framework that combines manifold and machine learning to construct a discrete evolution operator for emergent crowd dynamics. Our framework provides a computationally efficient and systematic approach for learning ROMs for spatio-temporal emergent crowd dynamics. By first encoding the full high‐dimensional state into a compact latent representation, we learn the system’s evolution operator using surrogate ROMs, bypassing the prohibitive computational cost of learning PDEs via, for example, CNNs or other high‐dimensional ML models such as DNNs. Working in latent spaces with delayed POD coordinates simplifies the enforcement of physical constraints (such as mass conservation) over long time horizons, since, as we demonstrate, the POD algorithm used to find an appropriate set of coordinates of the latent space explicitly conserves the mass of the reconstructed dynamics. In fact, under the usual generic assumptions of Takens’/ Whitneys' embeddings theorems, applying time-delay embeddings to POD modes produces a delay-coordinate latent model that is diffeomorphic to the original system dynamics (see also results and discussion in \cite{papaioannou2022time,shvartsman1998low,kemeth2022learning,axaas2023model,dylewsky2022principal}). For our illustrations, we assessed our approach using data generated from the SFM in a corridor with an obstacle under both unidirectional and counterflow scenarios, though more advanced crowd models incorporating behavioral aspects or optimal path planning \cite{bellomo2008modelling,cristiani2023all} can be readily employed.

Our approach does not attempt to learn any closed‑form PDE, but instead learns the effective solution operator of such an unavailable PDE, mapping initial crowd density to its future evolution. The resulting ROMs trained in the latent space, in the form of linear (MVARs) or nonlinear multivariate autoregressive models (such as LSTMs), encode and reconstruct this operator numerically via the POD reconstruction. As a result, our framework allows for rapid and accurate ``what‐if" scenarios exploration and real‐time forecasting of crowd dynamics. Critically, we demonstrate that linear MVAR models within our framework not only outperform more complex LSTMs in long-term predictive accuracy and reliability, but also achieve a computational speed-up of over two orders of magnitude compared to conventional simulation. This combination of interpretability, speed, and accuracy makes our approach, particularly with MVARs, an efficient approach for large‐scale, real-world crowd‐dynamics analysis and distributed control \cite{cristiani2015modeling,albi2020mathematical,auletta2022herding,gong2023crowd,christofides2002nonlinear, armaou2004time,maffettone2022continuification}. The predictive performance of linear MVAR models in our framework (consistently matching or exceeding that of nonlinear LSTMs) aligns with a growing body of literature \cite{papaioannou2022time,gallos2024data,axaas2023model, dylewsky2022principal} demonstrating that linear ROMs constructed in judiciously chosen latent spaces (e.g., POD-based delay-coordinate embeddings, Koopman operator) can effectively approximate complex nonlinear or chaotic dynamics. For example, nonlinear manifold learning techniques such as Diffusion Maps have been shown to provide more accurate embeddings of complex dynamics compared to POD, particularly during transients \cite{della2024learning}.

As formulated in its vanilla form, the proposed method inherits the well-known limitations of POD-based reduced-order models, and in particular their strong dependence on the geometries and boundary conditions represented in the training data, and the present algorithm is no exception: if the test configuration departs substantially away from the snapshot ensemble, the linear subspace may fail to represent the solution manifold accurately.  Handling boundary-aware cases would require a different modeling approach, such as NOs. Incorporating NOs in latent spaces within the proposed framework is a promising direction which will be investigated in future studies, including a broader comparison with alternative surrogate models as for example via Transformer-based architectures \cite{hao2023gnot,shih2025transformers} that have shown a good performance for long-horizon modeling, and latent Neural ODEs \cite{rubanova2019latent}, which offer a continuous-time perspective for learning reduced dynamics. Moreover, approaches based on Learning Effective Dynamics (LED) \cite{kemeth2022learning,champion2019data,vlachas2022multiscale,Dietrich2023} and its extensions, such as G-LED \cite{gao2024generative} and interpretable LED \cite{menier2025interpretable}, provide complementary frameworks that emphasize structure preservation and interpretability of the learned operators. A vital extension of this work for real-world deployment is the transition to probabilistic forecasting to quantify predictive uncertainty. While techniques such as Monte Carlo dropout or quantile regression could be applied, the linear Gaussian structure of the MVAR model offers a particularly direct and interpretable pathway to achieve this by propagating initial condition uncertainty through its stochastic latent dynamics. Finally, the proposed mass-conserving framework provides a foundation for applications to real-world trajectory data, systematic benchmarking against alternative surrogates, and extension to other collective motion systems such as vehicular traffic.

\section*{Acknowledgments}
C.S. acknowledges partial support by the PNRR MUR projects PE0000013-Future Artificial Intelligence Research-FAIR \& CN0000013 CN HPC - National Centre for HPC, Big Data and Quantum Computing, Gruppo Nazionale Calcolo Scientifico-Istituto Nazionale di Alta Matematica (GNCS-INdAM). I.K. acknowledges partially support by the US National Science Foundation.

\bibliographystyle{unsrt}
\bibliography{Biblography}
\clearpage
\appendix

\appendix
\renewcommand{\theequation}{A.\arabic{equation}}
\renewcommand{\thefigure}{A.\arabic{figure}}
\renewcommand{\thetable}{A.\arabic{table}}
\setcounter{equation}{0}
\setcounter{figure}{0}
\setcounter{table}{0}
\section{Social Force Model}
\label{app:sfm}
Here, we provide the detailed expressions of the virtual forces included in the SFM Eq.~\eqref{eq:SFMshort} and the set of parameters selected for the microscopic simulations. 

As a reminder, we consider the SFM including a goal-directed, a social, and a repulsive force term (the random force is neglected), reading:
\begin{equation}
m_i \frac{d {\bm{\dot{x}}}_i}{dt}= \underbrace{-\frac{1}{\tau_i}({v_0}_i\bm{e}_i-\bm{v}_i)}_{\text{goal-directed force}} + \underbrace{\sum_{j\neq i}\mathbf{F}_{ij}(\bm{x}_i,\bm{x}_j)}_{\text{social force}} + \underbrace{\sum_{\text{obstacles}}\mathbf{F}_{iB}(\bm{x}_i)}_{\text{repulsive force}}, 
\label{eq:SFMapp}
\end{equation}
where $m_i$ is the mass of the $i$-th pedestrian, ${v_0}_i$ is the maximum walking speed, $\bm{e}_i$ represents the desired direction (serving as a simplistic auxiliary/behavioral variable), and $\tau_i$ is a characteristic relaxation time. 

The social force term in Eq.~\eqref{eq:SFMapp} introduces a short-range repulsive interaction between pedestrians to maintain social distances and avoid physical contact \cite{helbing1995social}. Its expression between two pedestrians $i$ and $j$ is given by:
\begin{equation}
        \mathbf{F}_{ij}(\bm{x}_i,\bm{x}_j) = 
    A_i \exp\left( \frac{r_{ij} - d_{ij}}{B_i} \right) \bm{n}_{ij} 
    + k \, g(r_{ij} - d_{ij}) \, \bm{n}_{ij} 
    + \kappa \, g(r_{ij} - d_{ij}) \, \Delta v_{ij}^{\mathrm{tang}} \, \bm{t}_{ij},
    \label{eq:Fij}
\end{equation}
where the first term represents a repulsive force modelling the physiological desire for personal space, the second term represents a physical contact (body) force that is only active when pedestrians are in contact, and the third term represents a sliding friction force that occurs during physical contact. The repulsive force is modelled in Eq.~\eqref{eq:Fij} by an exponential term, where $A_i\in \mathbb{R}^+$ is the strength of the repulsive interaction, $d_{ij} =\|\bm{x}_i - \bm{x}_j\|_2$ is the euclidean distance between pedestrians, $r_{ij}=r_i+r_j\in \mathbb{R}$ is the sum of their radii (minimal comfortable distance \cite{Helbing2000}), $B_i\in \mathbb{R}^+$ is the parameter controlling the spatial decay rate of the repulsive force, and $\bm{n}_{ij}=\big(n^1_{ij},n^2_{ij}\big)=(\bm{x}_i - \bm{x}_j)/d_{ij}$ is the unit vector pointing from pedestrian $j$ to pedestrian $i$. The physical contact force is modelled in Eq.~\eqref{eq:Fij} by a linear term, where $k>0$ controls the strength of the contact (body) force, and $g(x)=\text{max}(0,x)$ is a ramp function ensuring that physical forces are only active when pedestrians overlap, i.e., when $(d_{ij}<r_{ij})$. The friction force term in Eq.~\eqref{eq:Fij} is modelled by the same ramp function $g(x)$, scaled by a friction coefficient $\kappa>0$ and difference in the tangential velocities $\Delta v_{ij}^{\mathrm{tang}}=\|(\bm{v}_i-\bm{v}_j) -  \left((\bm{v}_i-\bm{v}_j) \cdot \bm{n}_{ij}\right) \bm{n}_{ij} \|_2$; this component is projected onto the unit tangential vector $\mathbf{t}_{ij}=\big(-n^2_{ij},n^1_{ij}\big)$. 

The repulsive force in Eq.~\eqref{eq:SFMapp}, accounting for the repulsion that a pedestrian might experience due to static objects $B$, follows a similar form to Eq.~\eqref{eq:Fij}, reading:
\begin{equation}
    \mathbf{F}_{iB}(\bm{x}_i) = C_i \exp\left(-\frac{d_{iB}}{D_i}\right) \bm{n}_{iB} 
+ k \, g(r_i - d_{iB}) \, \bm{n}_{iB},
\end{equation}
where $C_i\in \mathbb{R}^+$ is the strength of the repulsive interaction against a wall or an obstacle, $D_i$ is its corresponding decay parameter, and $d_{iB}$ is the euclidean distance between the pedestrian and the closest point of the object $B$; $\bm{n}_{iB}$ is the respective the unit vector. Unlike pedestrian–pedestrian interactions, no tangential (sliding) friction is typically included for static obstacles, as they do not exert dynamic resistance.
\begin{table}[htbp]
\centering
\caption{Social Force Model simulation parameters.}
\label{tab:SFM_parameters}
\begin{tabular}{l l l l}
\toprule
\textbf{Simulation Parameters} & \textbf{Value} & \textbf{Simulation Parameters} & \textbf{Value} \\
\midrule
Number of Pedestrians ($N$) & 100 & Repulsion coefficient ($k$) & $1.2 \times 10^5\,\frac{\text{kg}\cdot\text{m}}{\text{s}^2}$ \\
Corridor length ($L_x$) & $48\,\text{m}$ & Sliding friction coefficient ($\kappa$) & $2.4 \times 10^5\,\frac{\text{kg}\cdot\text{m}}{\text{s}^2}$ \\
Corridor width ($L_y$) & $12\,\text{m}$ & Interaction strength ($A_i$) & $2 \times 10^3\,\text{N}$ \\
Obstacle height & $3.6\,\text{m}$ & Interaction range ($B_i$) & $0.08\,\text{m}$ \\
Obstacle width & $3.6\,\text{m}$ & Interaction strength ($C_i$) & $2 \times 10^3\,\text{N}$ \\
Obstacle $x$-coord. center & $25.8\,\text{m}$ & Interaction range ($D_i$) & $0.08\,\text{m}$ \\
Obstacle $y$-coord. center & $1.8\,\text{m}$ & Pedestrian mass ($m_i$) & $80\,\text{kg}$ \\
Initial velocity in $x$-direction ($v_x$) & $0\,\text{m/s}$ & Relaxation time ($\tau$) & $0.5\,\text{s}$ \\
Initial velocity in $y$-direction ($v_y$) & $0\,\text{m/s}$ & Pedestrian radius ($r_i$) & $0.2\,\text{m}$ \\
Simulation time & $250\,\text{s}$ & Time Step ($\delta_t$) & $2.5 \times 10^{-2}\,\text{s}$ \\
\bottomrule
\end{tabular}
\end{table}
For our simulations, the model parameters were calibrated based on established values from previous studies \cite{helbing1995social,Helbing2000}. These works describe pedestrians with a standard average mass of $80\ kg$, for which a relaxation time $\tau = 0.5\,\text{s}$ provides adequate behavior under normal conditions. The complete set of parameters governing repulsive potentials, including pedestrian-pedestrian and pedestrian-obstacle interactions, along with the simulation environment specifications, are detailed in Table \ref{tab:SFM_parameters}. This parameter selection, which is uniform for all pedestrians, ensures consistency with validated bottleneck evacuation scenarios. Although individual variations exist in reality, using a common set of parameters is a standard approach in the literature \cite{Helbing2000} to maintain a manageable calibration process while enabling the reliable simulation of diverse crowd dynamics, including the present case study.


\renewcommand{\theequation}{B.\arabic{equation}}
\renewcommand{\thefigure}{B.\arabic{figure}}
\renewcommand{\thetable}{B.\arabic{table}}
\renewcommand{\thealgorithm}{B.\arabic{algorithm}}
\setcounter{equation}{0}
\setcounter{figure}{0}
\setcounter{table}{0}
\setcounter{algorithm}{0}
\section{Density fields extraction via Kernel Density Estimation}

\label{app:kde}
Here, we provide additional details for the first step of the proposed framework, regarding the extraction of continuous density profiles from discrete pedestrians' positions. As already discussed in Section~\ref{sbsb:pos2den}, we used KDE for this task, following the general form in Eq.~\eqref{eq:pos2den}; here, we elaborate on the kernel selection, bandwidth tuning, and adjustments for boundaries and obstacles.

The choice of kernel function directly influences the smoothness and continuity of the density estimate. Common options include the uniform, triangular, Epanechnikov, and Gaussian kernels \cite{Silverman1986,WandJones1995}. The Epanechnikov kernel is theoretically optimal for minimizing mean integrated squared error \cite{Epanechnikov1969}, but its bounded support can limit flexibility in practice. In contrast, the Gaussian kernel is widely preferred due to its smoothness, infinite support, and analytical tractability. Its multivariate form is given by:

\begin{equation}
K_h(\mathbf{x}-\mathbf{x}_i(t)) = \frac{1}{2\pi \det(\mathbf{H})^{1/2}} \exp\left(-\tfrac{1}{2} (\mathbf{x}-\mathbf{x}_i(t))^\top \mathbf{H}^{-1} (\mathbf{x}-\bm{x}_i(t))\right),
\label{eq:gaussian_kernel_full}
\end{equation}
where $\mathbf{H}$ is the symmetric positive-definite bandwidth matrix, $\mathbf{x} \in \mathbb{R}^2$ is the spatial domain and $\bm{x}_i(t) \in \mathbb{R}^{2}$ is the position of the $i$-th pedestrian.

The choice of the bandwidth matrix, which in this case takes the form $\mathbf{H}=\diag(h_x,h_y)$, plays a central role in the KDE estimate. Small values of $h_x/h_y$ result in undersmoothing and high variance, while large values can oversmooth the density and obscure relevant features. Silverman's rule of thumb \cite{Silverman1986}, originally formulated for univariate distributions, extends naturally to multivariate settings, indicating an optimal bandwidth:
\begin{equation}
h_i = \sigma_i \left(\frac{4}{N(d + 2)}\right)^{1/(d + 4)},
\end{equation}
where $N$ is the number of data points, $d$ is the number of spatial dimensions, in this case $d=2$, and $\sigma_i$ is the standard deviation of the pedestrian positions along $i=x,y$ spatial directions. However, in our case, Silverman's rule resulted in wide bandwidths, which did not allow us to consider sufficient detail in the density profiles. We thus tuned $h_x/h_y$ to the values reported in Section~\ref{sbsb:pos2den}.

The standard symmetric kernels are known to introduce bias near domain boundaries, when handling cases where \textit{periodic boundary conditions} are imposed, such as in the $x$-direction of pedestrian flow in our configuration. These artifacts arise because the kernel's support extends beyond the domain, failing to reflect the periodic structure of the system. To mitigate these artifacts, we define an extended support for the KDE in the $x$-direction. Let the physical domain boundaries be $x = a$ and $x = b$, and introduce a small displacement $\Delta x = \frac{b - a}{n}$, where $n$ defines the extension of the domain along $x$-direction; here, we tuned $n=5$. We then define the extended domain as:
\begin{equation}
    \Omega^{\text{ext}} = [a - \Delta x, b + \Delta x]\times[c,d],
    \label{eq:omega_ext}
\end{equation}
while the support in the $y$-direction remains unbounded. To enforce continuity and reduce boundary-induced artifacts in the $x$-direction, we further augment the observed positions with ``ghost" particles near the domain edges. Let $\epsilon = \Delta x$ define a threshold for proximity to the boundary. For each position $\bm{x}_i(t) = (x_i(t), y_i(t))$ such that $x_i(t) \leq a + \epsilon$, we introduce a ``mirrored" pedestrians at
\begin{equation*}
\bm{x}_i^{\text{aug1}}(t) = (x_i(t) - (b - a), y_i(t)).
\end{equation*}
Similarly, for every $\bm{x}_i(t)$ such that $x_i(t) \geq b - \epsilon$, we introduce a ``mirrored" pedestrians at
\begin{equation*}
\bm{x}_i^{\text{aug2}}(t) = (x_i(t) + (b - a), y_i(t)).
\end{equation*}
The resulting augmented set of pedestrian positions used for KDE is then given by
\begin{equation}
\{\bm{x}_i'(t)\} = \{\bm{x}_i(t)\} \cup \{\bm{x}_i^{\text{aug1}}(t)\} \cup \{\bm{x}_i^{\text{aug2}}(t)\}. 
\label{eq:aug_pos}
\end{equation}

A pseudo-code for the estimation of the density in $\Omega \subset \mathbb{R}^2$ is given in Algorithm \ref{alg:KDE}.
\begin{algorithm}
\caption{Kernel Density Estimation (KDE) for extracting density fields from pedestrian positions.}
\begin{algorithmic}[1]
\Require Number of pedestrians $N$, computational domain $\Omega = [a, b] \times [c, d]$, spatial discretization $n_x$/$n_y$, obstacle subdomain $\Gamma=[a_o,b_o] \times [c_o,d_o]\subset \Omega$ and pedestrian positions $\{ \mathbf{x}_i(t)\}_{i=1}^{N} \in \Omega \backslash \Gamma$ 
\State Set bandwidth $\mathbf{H}=\diag(h_x,h_y)$.
\State Set augmentation on $x$-direction $\Delta x= (b-a)/n$
\State Define extended support $\Omega^{\text{ext}}$ to handle periodic boundary conditions \Comment{Use Eq.~\eqref{eq:omega_ext}}
\State Discretize extended domain $\Omega^{\text{ext}}$ in $(n_x+2n_x/n) \times n_y$ cells
\State Introduce ``mirrored'' pedestrians to form $\{\mathbf{x}_i'(t)\}$ \Comment{Use Eq.~\eqref{eq:aug_pos}}
\State Initialize zero density field $\rho((x_i,y_j),t)$ across the extended domain $\Omega^{\text{ext}}$
\For{each pedestrian $p=1,2,\ldots,N$} \Comment{Account for ``mirrored'' pedestrians}
    \State Compute kernel contribution $K_h(\mathbf{x}-\mathbf{x}_p(t))$ over the extended support $\Omega^{\text{ext}}$ \Comment{Use Eq.~\eqref{eq:gaussian_kernel_full}}
    \State Add contribution to density field $\rho((x_i,y_j),t)$
\EndFor
\State Normalize $\rho((x_i,y_j),t)$ with total number of pedestrians, $N$ and ``mirrored'' ones
\State Retrieve density $\rho((x_i,y_j),t)$ in computational domain $\Omega$
\State Employ binary mask function to ensure zero density in obstacle subdomain $\Gamma$, by setting \Comment{See Eq.~\eqref{eq:mask}}
\Statex $\rho((x_i,y_i),t) \gets \mathcal{M}(x_i,y_i)\cdot \rho((x_i,y_i),t)$ 
\State Compute total estimated density over the entire domain $S(t) = \sum_{i,j=1}^{n_x,n_y} \rho((x_i,y_j),t) \delta x \delta y$ 
\State Normalize density field $\rho((x_i,y_j),t) \gets \rho((x_i,y_j),t)/S(t)$
\Statex \Return
Normalized density $\rho((x_i,y_j),t)$ across computational domain $\Omega$
\end{algorithmic}
\label{alg:KDE}
\end{algorithm}
In the presence of obstacles, formally described as a subdomain $\Gamma \subset \Omega$, the standard KDE must be adjusted to ensure that the density vanishes within and along the boundaries of these regions. This is typically achieved by introducing a binary mask function $\mathcal{M}(x, y)$:
\begin{align}
M(x,y)= 
\begin{cases}
0, & \text{if } (x,y) \in \Gamma \subset \Omega, \\
1, & \text{otherwise}.
\end{cases}
\label{eq:mask}
\end{align}
which is then applied to the initially estimated densities to enforce zero density in $\Gamma$, implying:
\begin{equation}
\rho((x_i,y_j),t)= \mathcal{M}(x_i,y_j) \, \rho((x_i,y_j),t), \quad \forall (x_i,y_j) \in \Omega. \label{AppxC:Mask}
\end{equation}


\renewcommand{\theequation}{C.\arabic{equation}}
\renewcommand{\thefigure}{C.\arabic{figure}}
\renewcommand{\thetable}{C.\arabic{table}}
\setcounter{equation}{0}
\setcounter{figure}{0}
\setcounter{table}{0}
\section{Initial Conditions of Microscopic Distributions}
\label{app:intial_conditions}
Here, we describe the initial conditions considered in the SFM to construct the datasets, upon which the proposed framework is trained and tested. We considered zero initial velocities for all pedestrians, while we used the following distributions for initializing the position $\bm{x}_i=(x_i,y_i)\in \Omega$ of the $i$-th pedestrian. Positions are sampled from:
\begin{itemize}
\item Uniform distributions over the domain $[x_{\text{min}},x_{\text{max}}] \times [y_{\text{min}},y_{\text{max}}] \subset \Omega$: 
\begin{equation}
p(x_i,y_i) = \mathcal{U}\left([x_{\text{min}},x_{\text{max}}] \times [y_{\text{min}},y_{\text{max}}] \right).
\label{eq:uniform}
\end{equation}

\item Two dimensional normal distributions with means $(\mu_x, \mu_y)$ and standard deviations $(\sigma_x, \sigma_y)$:
\begin{equation}
p(x_i,y_i) =\frac{1}{2 \pi \sigma_x \sigma_y} \exp \left( -\frac{(x_i - \mu_x)^2}{2 \sigma_x^2} - \frac{(y_i - \mu_y)^2}{2 \sigma_y^2} \right),
\label{eq:Gaussian}
\end{equation}
\item ``Double bell-shaped'' independent joint distributions, which are comprised by normal distributions along the $y$–axis with mean $\mu_y$ and standard deviation $\sigma_y$, and a mixture of normal distributions along the $x$–axis with means $(\mu_{x_1},\mu_{x_2})$ and standard deviations $(\sigma_{x_1},\sigma_{x_2})$:
\begin{equation}
\begin{aligned}
p(x_i,y_i) &= \frac{1}{\sqrt{2\pi}\,\sigma_y}\,
\exp\!\left(-\frac{(y-\mu_y)^2}{2\sigma_y^2}\right) \\
&\quad \times \frac{1}{2}\left[
\frac{1}{\sqrt{2\pi}\,\sigma_{x_1}}
\exp\!\left(-\frac{(x-\mu_{x_1})^2}{2\sigma_{x_1}^2}\right)
+
\frac{1}{\sqrt{2\pi}\,\sigma_{x_2}}
\exp\!\left(-\frac{(x-\mu_{x_2})^2}{2\sigma_{x_2}^2}\right)
\right].
\end{aligned}
\label{eq:double_gaussian}
\end{equation}
\item Cosine distributions centered at $(c_x, c_y)$ with scale parameters $(s_x, s_y)$:
\begin{equation}
p(x_i,y_i) = \frac{1}{\pi s_x s_y}
\cos\left( \frac{x_i - c_x}{s_x} \right)
\cos\left( \frac{y_i - c_y}{s_y} \right).
 \label{eq:cosine}
\end{equation}
\end{itemize}
The training and testing data sets were generated from $C = 20$ initial conditions, with 5 samples drawn from each of the four distributions described above. To ensure plausible and varied crowd configurations, the parameters for these distributions were selected heuristically to provide a broad spatial coverage while mitigating boundary artifacts. Furthermore, during the sampling process, pedestrian positions were resampled if they were within a critical distance equal to $2r_i$ of an existing pedestrian or a wall/obstacle to prevent initial overlaps and collisions. The specific parameter configurations used for the training and testing data sets are detailed Table~\ref{tab:Microscopic_distributions} and Table~\ref{tab:Microscopic_distributions_test}, respectively.
\begin{table}[htbp]
\centering
\caption{Microscopic initial distribution configurations and parameters for constructing the training dataset.}
\label{tab:Microscopic_distributions}
\begin{tabular}{lll}
\toprule
\textbf{Case c} & \textbf{Distribution Type} & \textbf{Parameters} \\
\midrule
1 & Uniform, Eq.~\eqref{eq:uniform} & $x_{\text{min}}= 2,x_{\text{max}}=15,y_{\text{min}}= 3,y_{\text{max}}=9$ \\
2 & Uniform, Eq.~\eqref{eq:uniform} & $x_{\text{min}}= 33,x_{\text{max}}=46,y_{\text{min}}= 3,y_{\text{max}}=9$\\
3 & Uniform, Eq.~\eqref{eq:uniform} & $x_{\text{min}}= 2,x_{\text{max}}=20,y_{\text{min}}= 4,y_{\text{max}}=8$ \\
4 & Uniform, Eq.~\eqref{eq:uniform} & $x_{\text{min}}= 28,x_{\text{max}}=46,y_{\text{max}}= 4,y_{\text{max}}=8$ \\
5 & Uniform, Eq.~\eqref{eq:uniform} & $x_{\text{min}}= 3,x_{\text{max}}=17,y_{\text{max}}= 4,y_{\text{max}}=8$ \\
6 & Gaussian, Eq.~\eqref{eq:Gaussian} & $\mu_x = 10,\; \mu_y = 6, \sigma_x = 1.5,\; \sigma_y = 1.5$ \\
7 & Gaussian, Eq.~\eqref{eq:Gaussian} & $\mu_x = 10,\; \mu_y = 6,\sigma_x = 2,\; \sigma_y = 1.5$ \\
8 & Gaussian, Eq.~\eqref{eq:Gaussian} & $\mu_x = 38,\; \mu_y = 6,\sigma_x = 2,\; \sigma_y = 1.5$ \\
9 & Gaussian, Eq.~\eqref{eq:Gaussian} & $\mu_x = 38,\; \mu_y = 6, \sigma_x = 1.5,\; \sigma_y = 1.5$ \\
10 & Gaussian, Eq.~\eqref{eq:Gaussian} & $\mu_x = 37,\; \mu_y = 5, \sigma_x = 1.5,\; \sigma_y = 1.5$ \\
11 & Double Gaussian, Eq.~\eqref{eq:double_gaussian} & $\mu_{x_1} = 8,\; \mu_y = 7, \sigma_{x_1} = 1.2,\; \sigma_y = 1.5,\; \mu_{x_2} = 12, \; \sigma_{x_2} = 0.8$ \\
12 & Double Gaussian, Eq.~\eqref{eq:double_gaussian} & $\mu_{x_1} = 10,\; \mu_y = 6, \sigma_{x_1} = 1.2,\; \sigma_y = 1.5,\; \mu_{x_2} = 14, \; \sigma_{x_2} = 1$ \\
13 & Double Gaussian, Eq.~\eqref{eq:double_gaussian} & $\mu_{x_1} = 28,\; \mu_y = 6, \sigma_{x_1} = 1.3,\; \sigma_y = 1.5,\; \mu_{x_2} = 32, \; \sigma_{x_2} = 1.1$ \\
14 & Double Gaussian, Eq.~\eqref{eq:double_gaussian} & $\mu_{x_1} = 36,\; \mu_y = 6, \sigma_{x_1} = 1.4,\; \sigma_y = 1.5,\; \mu_{x_2} = 40, \; \sigma_{x_2} = 1.2$ \\
15 & Double Gaussian, Eq.~\eqref{eq:double_gaussian} & $\mu_{x_1} = 9,\; \mu_y = 6, \sigma_{x_1} = 0.8,\; \sigma_y = 1.5,\; \mu_{x_2} = 11, \; \sigma_{x_2} = 1$ \\
16 & Cosine, Eq.~\eqref{eq:cosine} & $c_x = 10,\; c_y = 6, s_x = 10,\; s_y = 3$ \\
17 & Cosine, Eq.~\eqref{eq:cosine} & $c_x = 20,\; c_y = 8,s_x = 10,\; s_y = 3$ \\
18 & Cosine, Eq.~\eqref{eq:cosine} & $c_x = 30,\; c_y = 7,s_x = 10,\; s_y = 3$ \\
19 & Cosine, Eq.~\eqref{eq:cosine} & $c_x = 40,\; c_y = 6, s_x = 10,\; s_y = 3$ \\
20 & Cosine, Eq.~\eqref{eq:cosine} & $c_x = 10,\; c_y = 6, s_x = 9,\; s_y = 3$ \\
\bottomrule
\end{tabular}
\end{table}
\begin{table}[htbp]
\centering
\caption{Microscopic initial distribution configurations and parameters for constructing the testing dataset.}
\label{tab:Microscopic_distributions_test}
\begin{tabular}{lll}
\toprule
\textbf{Case c} & \textbf{Distribution Type} & \textbf{Parameters} \\
\midrule
1 & Uniform, Eq.~\eqref{eq:uniform} & $x_{\text{min}}=2,\; x_{\text{max}}=16,\; y_{\text{min}}=3,\; y_{\text{max}}=10$ \\
2 & Uniform, Eq.~\eqref{eq:uniform} & $x_{\text{min}}=32,\; x_{\text{max}}=46,\; y_{\text{min}}=3,\; y_{\text{max}}=10$ \\
3 & Uniform, Eq.~\eqref{eq:uniform} & $x_{\text{min}}=2,\; x_{\text{max}}=19,\; y_{\text{min}}=4,\; y_{\text{max}}=8$ \\
4 & Uniform, Eq.~\eqref{eq:uniform} & $x_{\text{min}}=29,\; x_{\text{max}}=46,\; y_{\text{min}}=4,\; y_{\text{max}}=8$ \\
5 & Uniform, Eq.~\eqref{eq:uniform} & $x_{\text{min}}=4,\; x_{\text{max}}=17,\; y_{\text{min}}=4,\; y_{\text{max}}=8$ \\
6 & Gaussian, Eq.~\eqref{eq:Gaussian} & $\mu_x=12,\; \mu_y=7,\; \sigma_x=2,\; \sigma_y=1.5$ \\
7 & Gaussian, Eq.~\eqref{eq:Gaussian} & $\mu_x=11,\; \mu_y=5,\; \sigma_x=1.5,\; \sigma_y=2$ \\
8 & Gaussian, Eq.~\eqref{eq:Gaussian} & $\mu_x=38,\; \mu_y=6,\; \sigma_x=2,\; \sigma_y=1.5$ \\
9 & Gaussian, Eq.~\eqref{eq:Gaussian} & $\mu_x=39,\; \mu_y=5,\; \sigma_x=1.5,\; \sigma_y=1.5$ \\
10 & Gaussian, Eq.~\eqref{eq:Gaussian} & $\mu_x=37,\; \mu_y=7,\; \sigma_x=1.5,\; \sigma_y=1.5$ \\
11 & Double Gaussian, Eq.~\eqref{eq:double_gaussian} & $\mu_{x_1}=7,\; \mu_y=7,\; \sigma_{x_1}=1.1,\; \sigma_y=1.5,\; \mu_{x_2}=11,\; \sigma_{x_2}=0.9$ \\
12 & Double Gaussian, Eq.~\eqref{eq:double_gaussian} & $\mu_{x_1}=10,\; \mu_y=7,\; \sigma_{x_1}=1.2,\; \sigma_y=1.5,\; \mu_{x_2}=14,\; \sigma_{x_2}=1$ \\
13 & Double Gaussian, Eq.~\eqref{eq:double_gaussian} & $\mu_{x_1}=29,\; \mu_y=6,\; \sigma_{x_1}=1.3,\; \sigma_y=1.5,\; \mu_{x_2}=33,\; \sigma_{x_2}=1.1$ \\
14 & Double Gaussian, Eq.~\eqref{eq:double_gaussian} & $\mu_{x_1}=37,\; \mu_y=6,\; \sigma_{x_1}=1.4,\; \sigma_y=1.5,\; \mu_{x_2}=41,\; \sigma_{x_2}=1.2$ \\
15 & Double Gaussian, Eq.~\eqref{eq:double_gaussian} & $\mu_{x_1}=10,\; \mu_y=6,\; \sigma_{x_1}=0.8,\; \sigma_y=1.5,\; \mu_{x_2}=12,\; \sigma_{x_2}=1$ \\
16 & Cosine, Eq.~\eqref{eq:cosine} & $c_x=15,\; c_y=4,\; s_x=8,\; s_y=3$ \\
17 & Cosine, Eq.~\eqref{eq:cosine} & $c_x=25,\; c_y=6,\; s_x=10,\; s_y=3$ \\
18 & Cosine, Eq.~\eqref{eq:cosine} & $c_x=35,\; c_y=5,\; s_x=10,\; s_y=3$ \\
19 & Cosine, Eq.~\eqref{eq:cosine} & $c_x=45,\; c_y=7,\; s_x=5,\; s_y=3$ \\
20 & Cosine, Eq.~\eqref{eq:cosine} & $c_x=15,\; c_y=6,\; s_x=7,\; s_y=3$ \\
\bottomrule
\end{tabular}
\end{table}


\renewcommand{\theequation}{D.\arabic{equation}}
\renewcommand{\thefigure}{D.\arabic{figure}}
\renewcommand{\thetable}{D.\arabic{table}}
\renewcommand{\thealgorithm}{D.\arabic{algorithm}}
\setcounter{equation}{0}
\setcounter{figure}{0}
\setcounter{table}{0}
\setcounter{algorithm}{0}
\section{Lag window size selection via the Bayesian and the Akaike Information Criteria}
\label{app:bic_aic}
As discussed in Section~\ref{sb:ROMs_impl}, to determine the optimal lag window size $w$ for the MVAR model, we used two information-theoretic criteria: the Bayesian Information Criterion (BIC) and the Akaike Information Criterion (AIC) \cite{lutkepohl2005new, akaike1974new, schwarz1978estimating}
. These criteria provide a balance between model fit and complexity, and are used to avoid underfitting or overfitting the temporal dynamics. Both tests work under the assumption of data stationarity. The BIC is defined as:
\begin{equation}
\text{BIC}(w) = -2\ln(\mathcal{L}(w)) + k \ln(n_t),
\label{BIC}
\end{equation}
where $\mathcal{L}$ is the maximized likelihood of the model. For an MVAR model with $d$ latent dimensions and $w$ order of the model, the total number of parameters is then $k = d^2 \cdot w$. The AIC, which provides a different trade-off between model complexity and fit, is given by:
\begin{equation}
\text{AIC}(w) = -2\ln(\mathcal{L}(w)) + 2k.
\label{AIC}
\end{equation}
While AIC tends to favor more complex models and is better suited for short-term predictive accuracy in smaller datasets, BIC imposes a stronger penalty on complexity, making it preferable for larger sample sizes.
The log-likelihood $\ln(\mathcal{L}(w))$ is computed under the assumption of Gaussian noise, and is derived from the residual covariance matrix of the MVAR model. 

Finally, to select the optimal lag window $w^*$, we evaluated the BIC and AIC for all candidate lags~$w \in \{1, 2, \dots, w_{\text{max}}\}$. The chosen lag~$w^*$ corresponds to the value of~$w$ that yielded the smallest BIC or AIC in this range, i.e.,:
\begin{equation}
     w^*_{\text{BIC}} = \underset{w \in \{1, \dots, w_{\text{max}}\}}{\arg\min} \, \text{BIC}(w), \quad 
     w^*_{\text{AIC}} = \underset{w \in \{1, \dots, w_{\text{max}}\}}{\arg\min} \, \text{AIC}(w).
 \end{equation}


\renewcommand{\theequation}{E.\arabic{equation}}
\renewcommand{\thefigure}{E.\arabic{figure}}
\renewcommand{\thetable}{E.\arabic{table}}
\renewcommand{\thealgorithm}{E.\arabic{algorithm}}
\setcounter{equation}{0}
\setcounter{figure}{0}
\setcounter{table}{0}
\setcounter{algorithm}{0}
\section{Open-loop prediction errors and closed-loop reconstructions for the unidirectional flow case} 
\label{app:UniFlow}

Here, we provide the detailed open-loop prediction errors and the closed-loop reconstructions of the proposed framework for the unidirectional flow case discussed in Section~\ref{sb:UniFlow_res}. 

The open-loop setting prediction of the proposed restrict-evolve with ROM-lift framework in Eq.~\eqref{eq:LER_basic} reads:
\begin{equation}
    \hat{x}^{(c)}(t_k) = \mathcal{L}\left(\hat{\boldsymbol{y}}^{(c)}(t_k)\right), \qquad
    \hat{\boldsymbol{y}}^{(c)}(t_k)=\Phi_{\text{ROM}}\left(
    \boldsymbol{y}^{(c)}(t_{k-1}),\ldots, \boldsymbol{y}^{(c)}(t_{k-w});\mathbf{p}\right),
    \label{eq:lift_ROMev_rest_OpenLoop}
\end{equation}
where $\boldsymbol{y}^{(c)}(t_{k-j})=\mathcal{R}(x^{(c)}(t_{k-j}))$ are restricted embeddings of observed densities for $j=1,\ldots,w$. Similarly to the close-loop predictions in Eq.~\eqref{eq:lift_ROMev_rest}, the open-loop prediction expression is used in Eq.~\eqref{eq:ForcastErr} to quantify the one-step prediction accuracy of the proposed framework. 

Table \ref{tab:errors_OL} provides a summary of the relative reconstructed errors in the ambient space, for one-step predictions, for all four ROMs, including the mean error and the 10--90\% error percentiles, aggregated \emph{over all cases and time steps} in the testing set.
\begin{table}[htbp]
\centering
\caption{Open-loop (one-step) prediction errors in the ambient--density profile--space, for the testing set using the trained MVAR and LSTM (best out of 50 training runs) models for the unidirectional flow case. The relative reconstructed $L_1$, $L_2$ and $L_\infty$ errors $e^{1,(c)}_k,\ e^{2,(c)}_k,\ e^{\infty,(c)}_k$ in Eq.~\eqref{eq:ForcastErr} are reported for the MVAR(4), MVAR(9), LSTM(4), and LSTM(9) latent dynamics models. Mean and 10--90\% error percentiles are shown \emph{over all the} $C=20$ cases and $K=\{991,996\}$ time steps (for width $w=\{9,4\}$, respectively).}
\label{tab:errors_OL}
\begin{tabular}{lccc}
\toprule
\textbf{Model} &
$L_1$ error, $e^{1,(c)}_k$ &
$L_2$ error, $e^{2,(c)}_k$ &
$L_\infty$ error, $e^{\infty,(c)}_k$ \\
\midrule
MVAR(4) &
\shortstack{$0.048\ (0.033,\,0.065)$} &
\shortstack{$0.038\ (0.027,\,0.051)$} &
\shortstack{$0.042\ (0.030,\,0.057)$} \\
MVAR(9) &
\shortstack{$0.047\ (0.032,\,0.063)$} &
\shortstack{$0.037\ (0.026,\,0.049)$} &
\shortstack{$0.041\ (0.030,\,0.055)$} \\
LSTM(4) &
\shortstack{$0.048\ (0.034,\,0.065)$} &
\shortstack{$0.038\ (0.027,\,0.051)$} &
\shortstack{$0.043\ (0.032,\,0.058)$}  \\
LSTM(9) &
\shortstack{$0.047\ (0.033,\,0.063)$} &
\shortstack{$0.037\ (0.026,\,0.049)$} &
\shortstack{$0.042\ (0.031,\,0.056)$} \\
\bottomrule
\end{tabular}

\end{table}

Figure~\ref{fig:Errors_openloop} displays the relative reconstruction errors $e^{2,(c)}_k,\ e^{\infty,(c)}_k$ and their 10--90\% percentile ranges for one-step predictions via the open-loop time-stepper (Eq.~\eqref{eq:lift_ROMev_rest_OpenLoop}). Errors are averaged over the $C=20$ cases of different, unseen initial conditions considered, across the time steps $k=1,\ldots,1000$. 
\begin{figure}[!b]
    \centering
    \begin{subfigure}[b]{0.47\textwidth}
        \includegraphics[width=\textwidth]{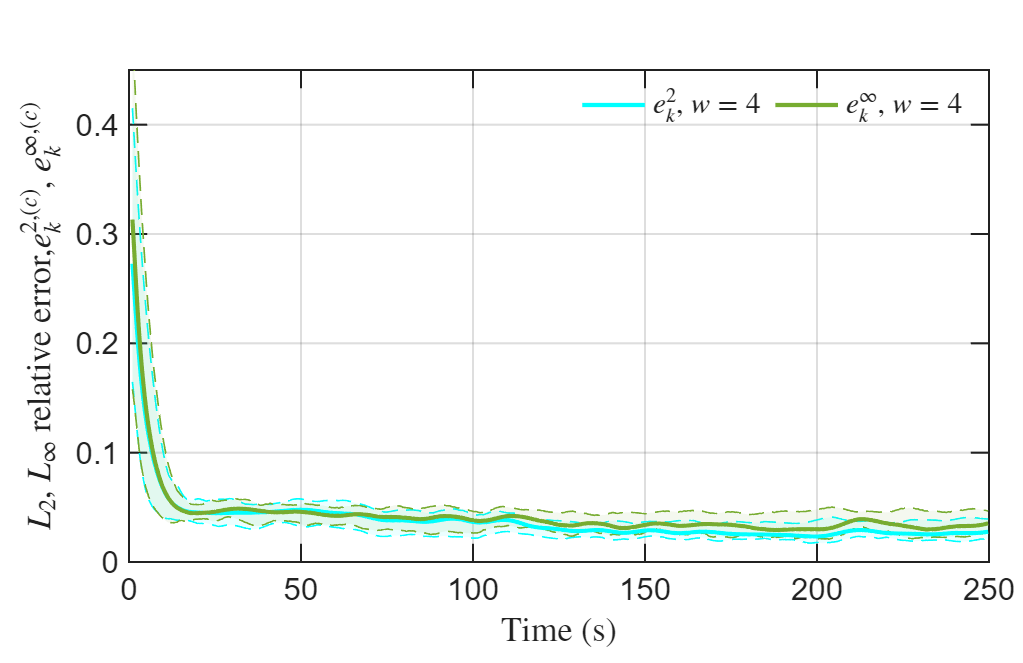}
        \caption{Relative Errors with MVAR(4)}
    \end{subfigure}
    \hfill
    \begin{subfigure}[b]{0.47\textwidth}
        \includegraphics[width=\textwidth]{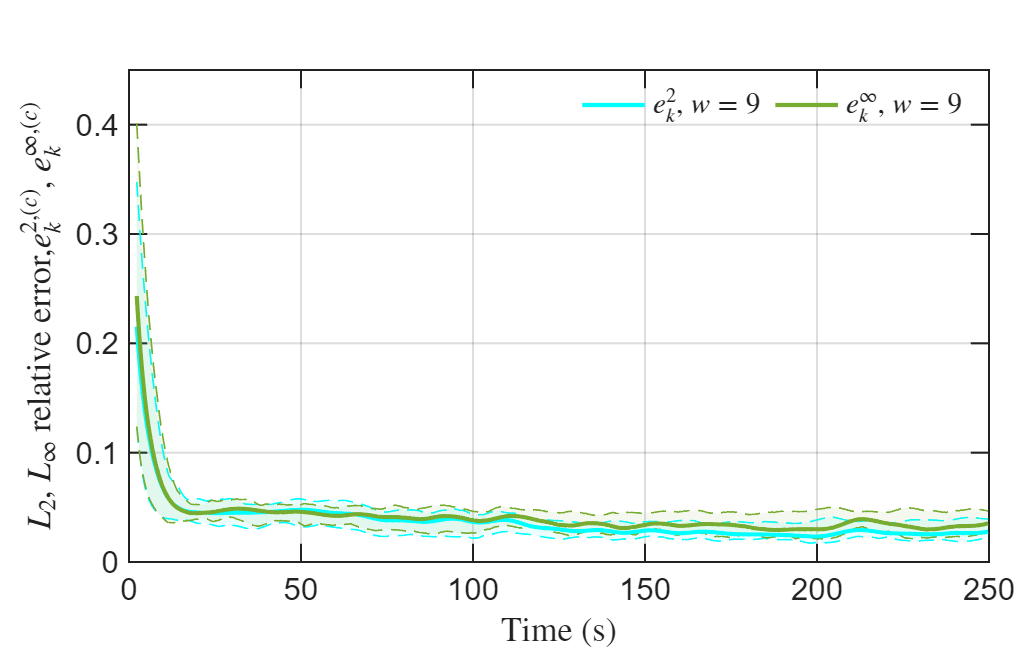}
        \caption{Relative Errors with MVAR(9)}
    \end{subfigure}
    \begin{subfigure}[b]{0.47\textwidth}
        \includegraphics[width=\textwidth]{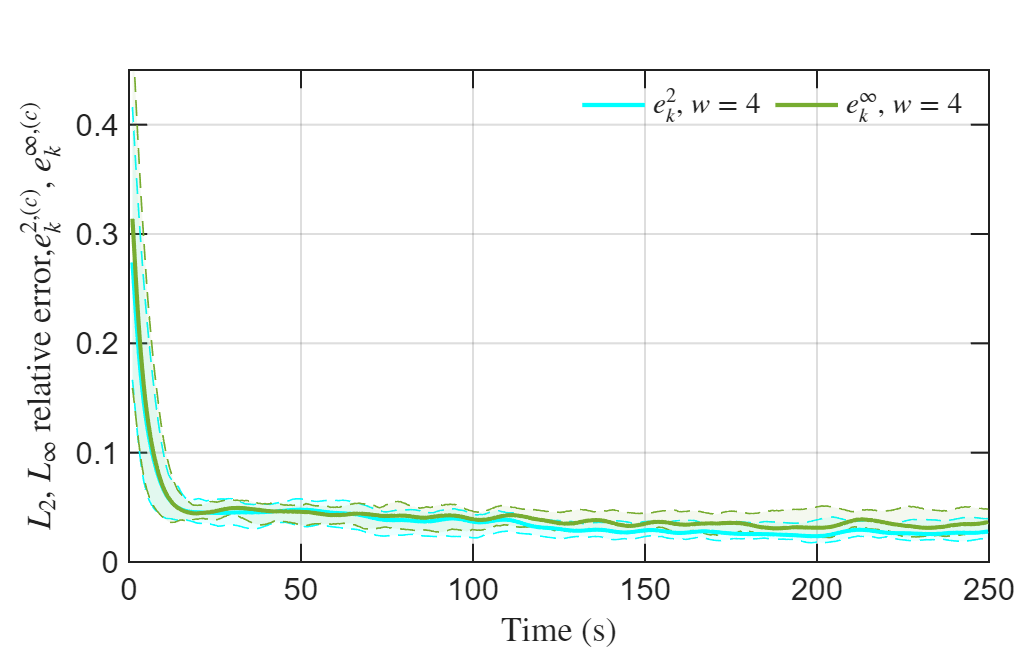}
        \caption{Relative Errors with LSTM(4)}
    \end{subfigure}
    \hfill
    \begin{subfigure}[b]{0.47\textwidth}
        \includegraphics[width=\textwidth]{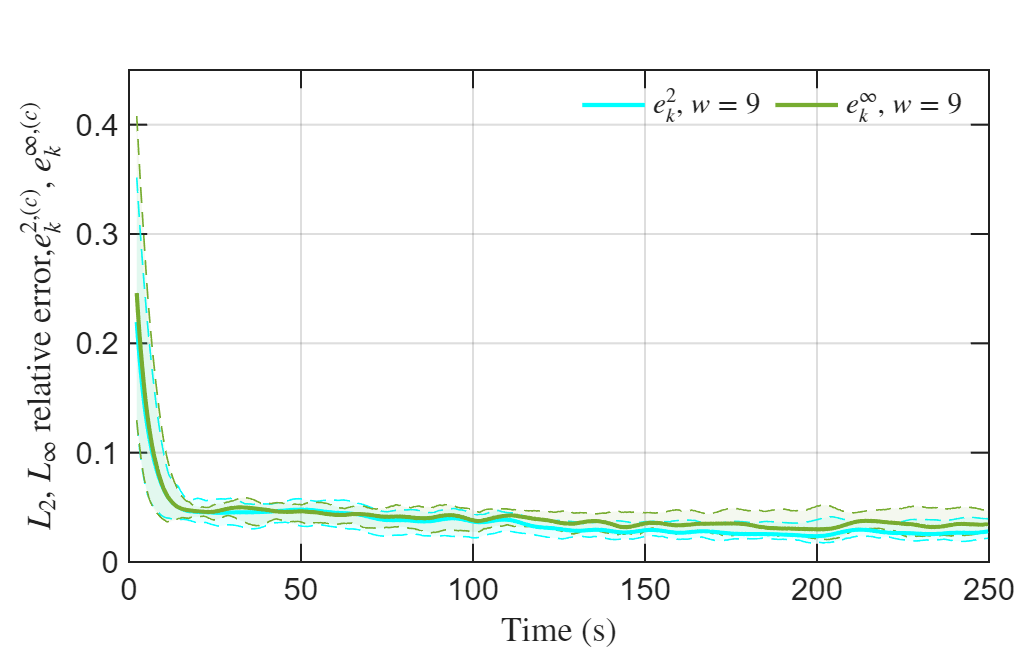}
        \caption{Relative Errors with LSTM(9)}
    \end{subfigure}
    \caption{One-step (open-loop) prediction errors, in the ambient--density profile--space, for the testing set over time, using trained MVAR and LSTM models for the unidirectional flow case. The relative reconstructed $L_2$ (cyan) and $L_\infty$ (green) errors $e^{2,(c)}_k$ and $ e^{\infty,(c)}_k$ Eq.~\eqref{eq:ForcastErr} (using the one-step prediction in Eq.~\eqref{eq:lift_ROMev_rest_OpenLoop}) are shown for the MVAR(4) panel (a), MVAR(9) panel (b), LSTM(4) panel (c), and LSTM(9) panel (d) latent dynamics models. Mean relative errors (solid) and 10--90\% error percentiles (dashed) are shown over $C=20$ cases per time step. For the LSTM models, results correspond to the model achieving the best training performance out of the 50 independent runs.
    \label{fig:Errors_openloop}}
\end{figure}

Figures~\ref{fig:MVARw4den}, \ref{fig:MVARw9den} \ref{fig:LSTMw4den} and \ref{fig:LSTMw9den} compare the ground truth normalized density fields $x^{(c)}(t_k)$ and the predicted ones $\hat{x}^{(c)}(t_k)$ for closed-loop predictions (Eq.~\eqref{eq:lift_ROMev_rest}) using MVAR(4), MVAR(9), LSTM(4), and LSTM(9) models, respectively. The test case uses a Gaussian initialization (6th row of Table~\ref{tab:Microscopic_distributions_test}) at four time steps ($k=250,500,780,930$), during which the pedestrian crowd does not pass through the corridor boundary. Panels (a,c,e,g) show the ground truth, while panels (b,d,f,h) present the corresponding closed-loop predictions. The spatial distribution of the resulting absolute errors is shown in Figure~\ref{fig:Unidirectional_flow_error} at time steps $k=250,930$. 
\begin{figure}[htbp]
    \centering
    \begin{subfigure}[b]{0.48\textwidth}
        \includegraphics[width=\textwidth]{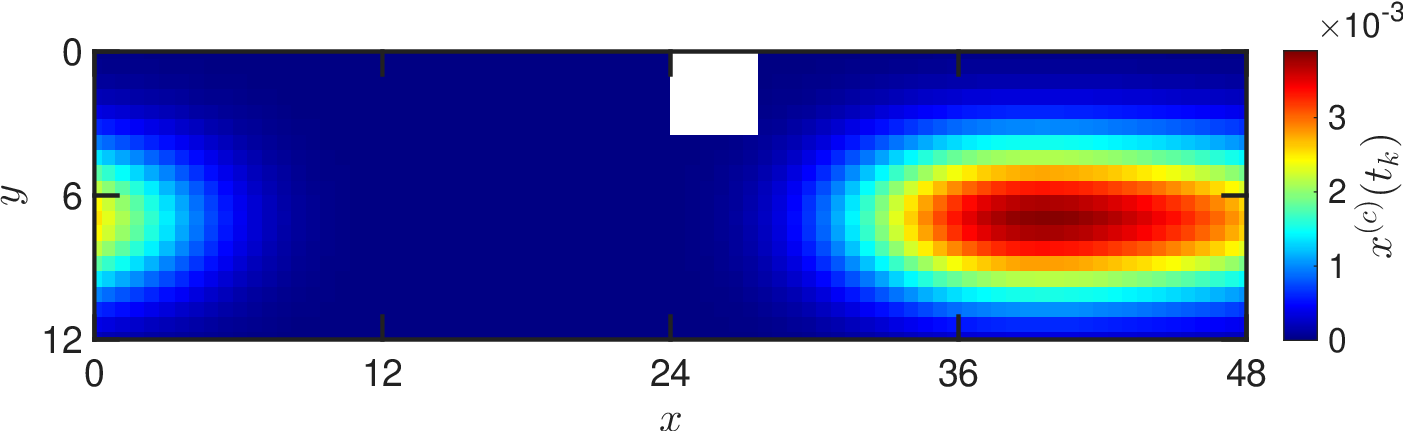}
        \caption{Ground truth density at $t_{250} = 62.5\,\text{s}$}
    \end{subfigure}
    \hfill
    \begin{subfigure}[b]{0.48\textwidth}
        \includegraphics[width=\textwidth]{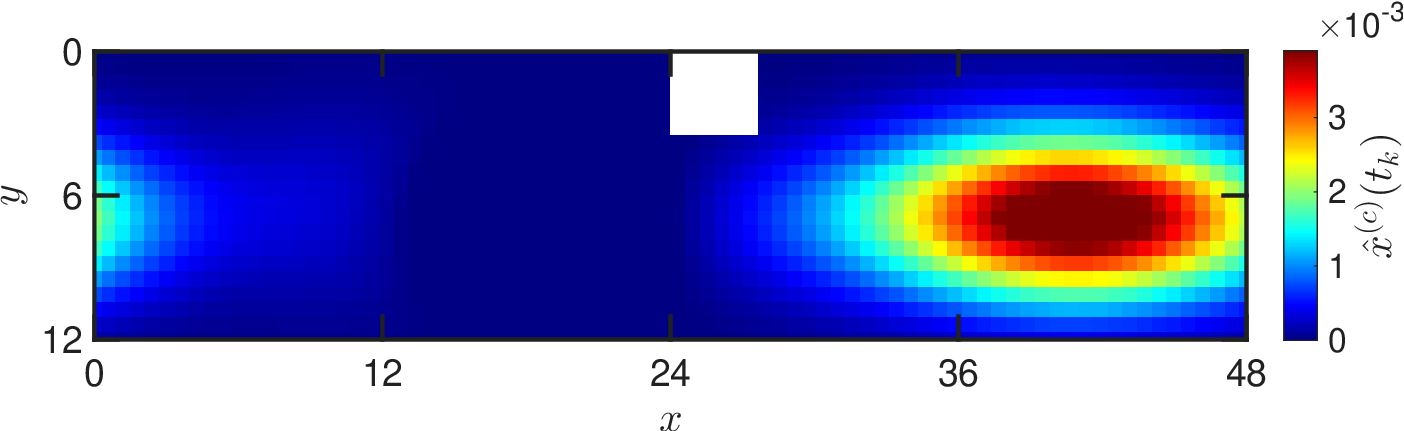}
        \caption{Predicted density at $t_{250} = 62.5\,\text{s}$}
    \end{subfigure}
    \\
    \begin{subfigure}[b]{0.48\textwidth}
        \includegraphics[width=\textwidth]{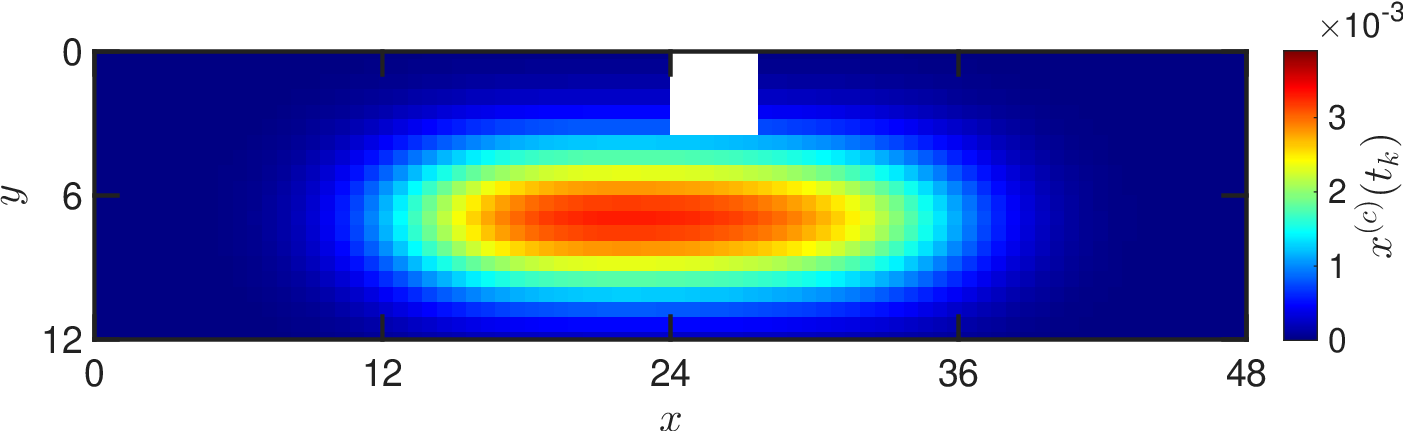}
        \caption{Ground truth density at $t_{500} = 125\,\text{s}$}
    \end{subfigure}
    \hfill
    \begin{subfigure}[b]{0.48\textwidth}
        \includegraphics[width=\textwidth]{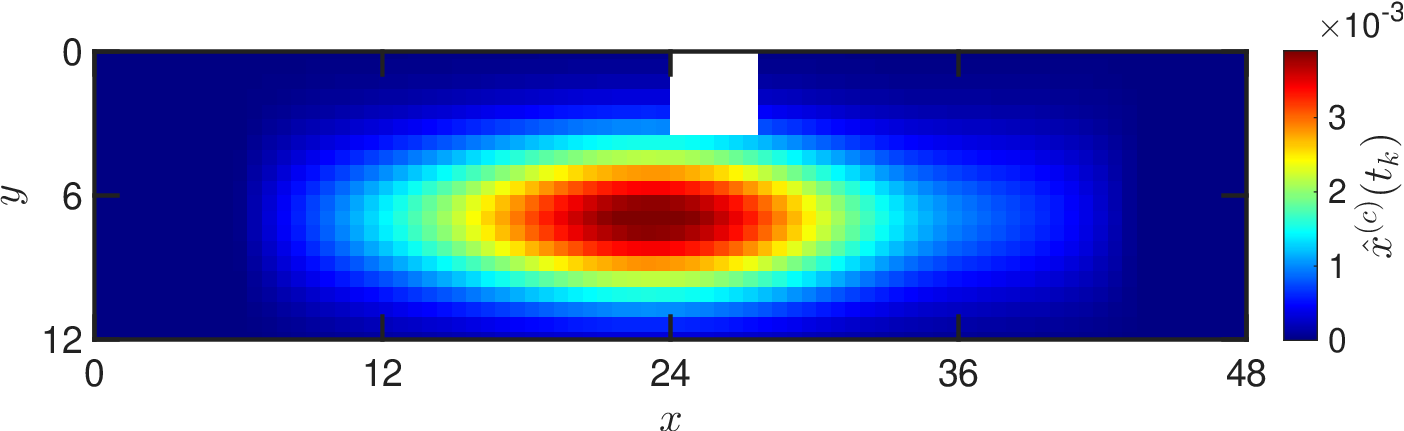}
        \caption{Predicted density at $t_{500} = 125\,\text{s}$}
    \end{subfigure}
    \\
    \begin{subfigure}[b]{0.48\textwidth}
        \includegraphics[width=\textwidth]{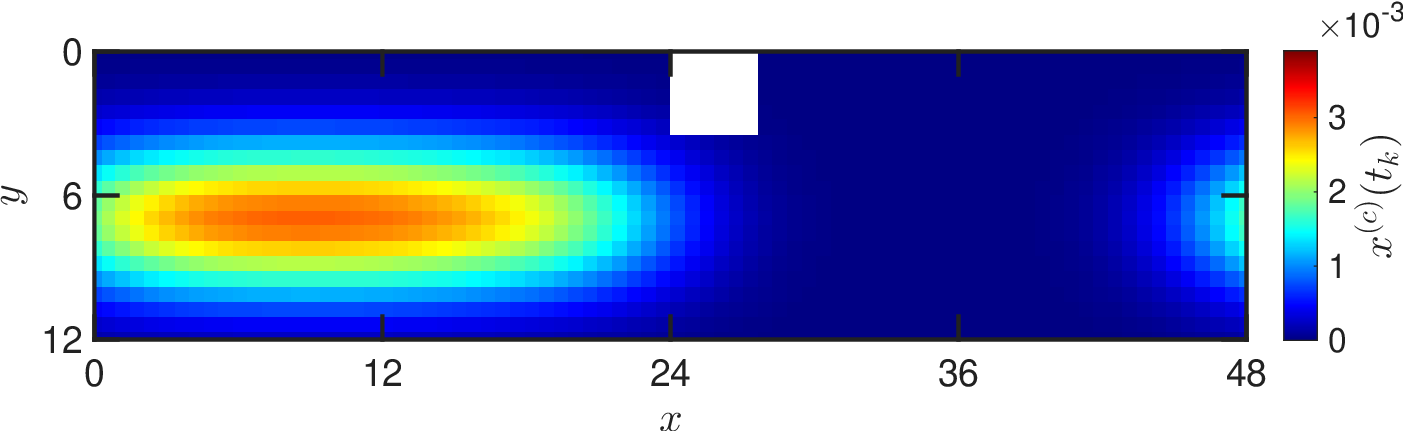}
        \caption{Ground truth density at $t_{780} = 195\,\text{s}$}
    \end{subfigure}
    \hfill
    \begin{subfigure}[b]{0.48\textwidth}
        \includegraphics[width=\textwidth]{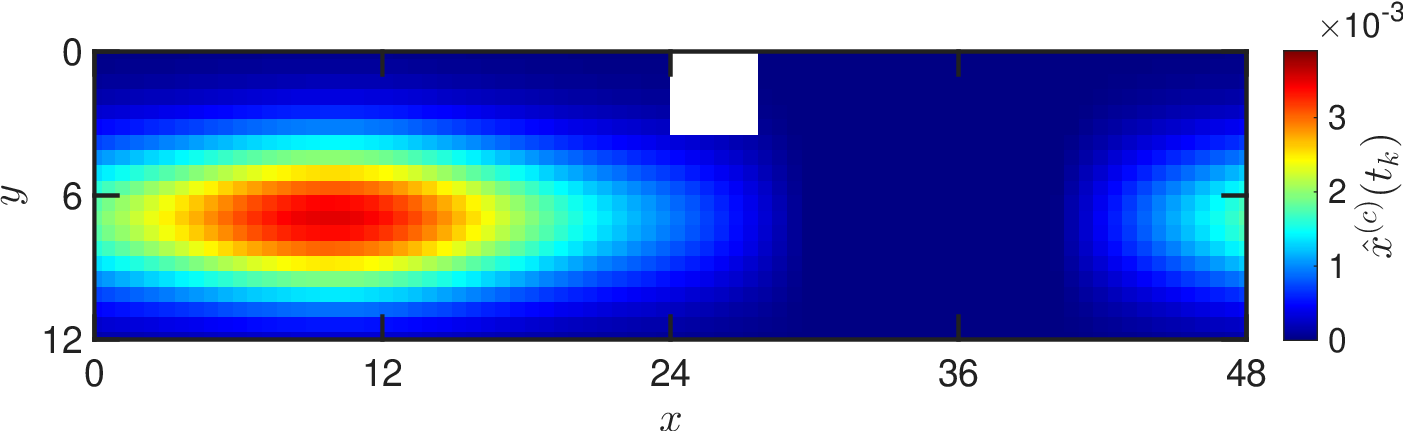}
        \caption{Predicted density at $t_{780} = 195\,\text{s}$}
    \end{subfigure}
    \\
    \begin{subfigure}[b]{0.48\textwidth}
        \includegraphics[width=\textwidth]{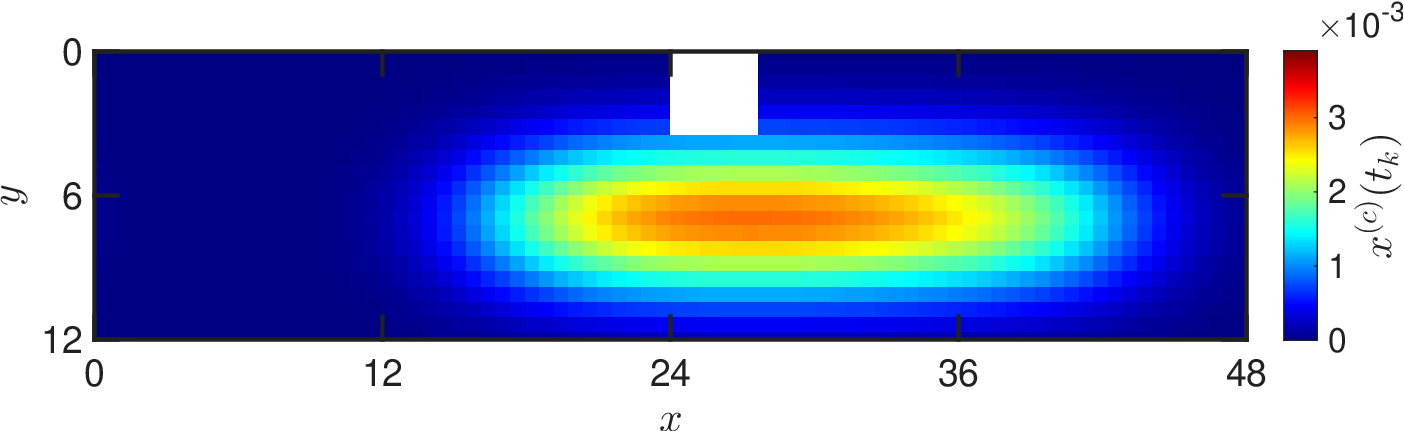}
        \caption{Ground truth density at $t_{930} =232.5\,\text{s}$}
    \end{subfigure}
    \hfill
    \begin{subfigure}[b]{0.48\textwidth}
        \includegraphics[width=\textwidth]{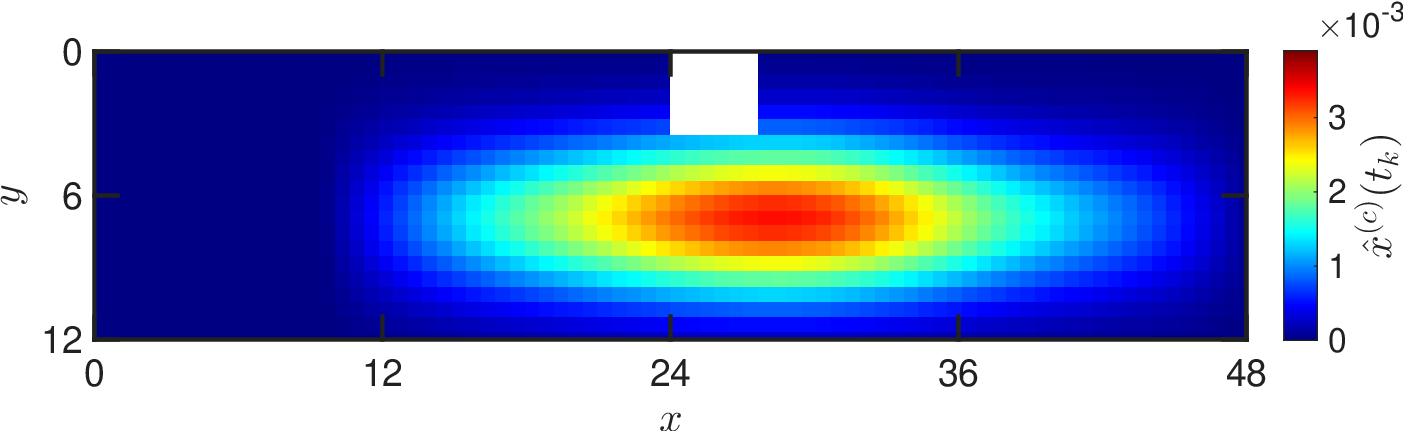}
        \caption{Predicted density at $t_{930} =232.5\,\text{s}$}
    \end{subfigure}
    \caption{Comparison of the ``ground truth" $x^{(c)}(t_k)$ and the predicted $\hat{x}^{(c)}(t_k)$ normalized densities via the closed-loop time-stepper in Eq.~\eqref{eq:lift_ROMev_rest} for the unidirectional flow case, using the MVAR(4) model for an unseen case of the testing set; initialization at the 6th row of Table~\ref{tab:Microscopic_distributions_test}. Panels (a), (c), (e), and (g) show the ``ground truth"  density distribution in $\Omega$, while panels (b), (d), (f), and (h) show the predicted one, for the time steps $t_{250} = 62.5\,\text{s}$, $t_{500} = 125\,\text{s}$, $t_{780} = 195\,\text{s}$, and $t_{930} =232.5\,\text{s}$, respectively.}
    \label{fig:MVARw4den}
\end{figure}

\begin{figure}[htbp]
    \centering
    \begin{subfigure}[b]{0.48\textwidth}
        \includegraphics[width=\textwidth]{Figure_E4a.eps}
        \caption{Ground truth density at $t_{250} = 62.5\,\text{s}$.}
    \end{subfigure}
    \hfill
    \begin{subfigure}[b]{0.48\textwidth}
        \includegraphics[width=\textwidth]{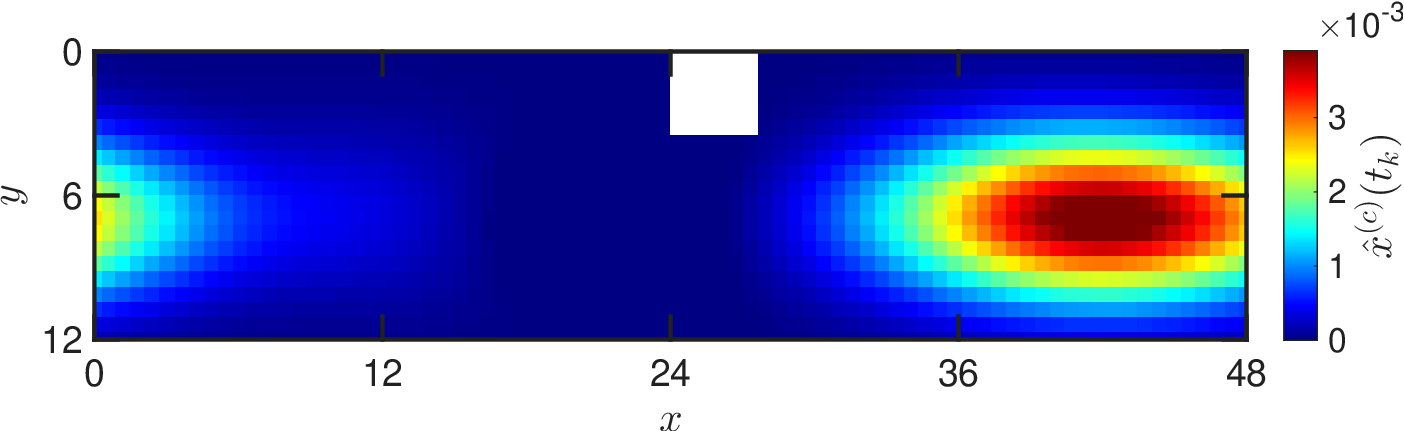}
        \caption{Predicted density at $t_{250} = 62.5\,\text{s}$}
    \end{subfigure}
    \\
    \begin{subfigure}[b]{0.48\textwidth}
        \includegraphics[width=\textwidth]{Figure_E4c.eps}
        \caption{Ground truth density at $t_{500} = 125\,\text{s}$}
    \end{subfigure}
    \hfill
    \begin{subfigure}[b]{0.48\textwidth}
        \includegraphics[width=\textwidth]{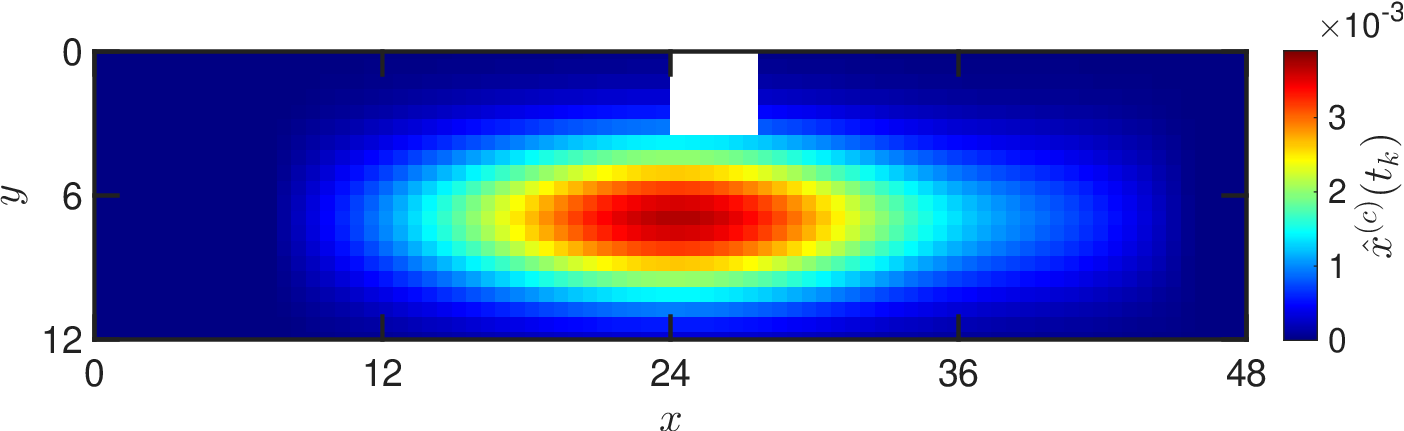}
        \caption{Predicted density at $t_{500} = 125\,\text{s}$}
    \end{subfigure}
    \\
    \begin{subfigure}[b]{0.48\textwidth}
        \includegraphics[width=\textwidth]{Figure_E4e.eps}
        \caption{Ground truth density at $t_{780} = 195\,\text{s}$}
    \end{subfigure}
    \hfill
    \begin{subfigure}[b]{0.48\textwidth}
        \includegraphics[width=\textwidth]{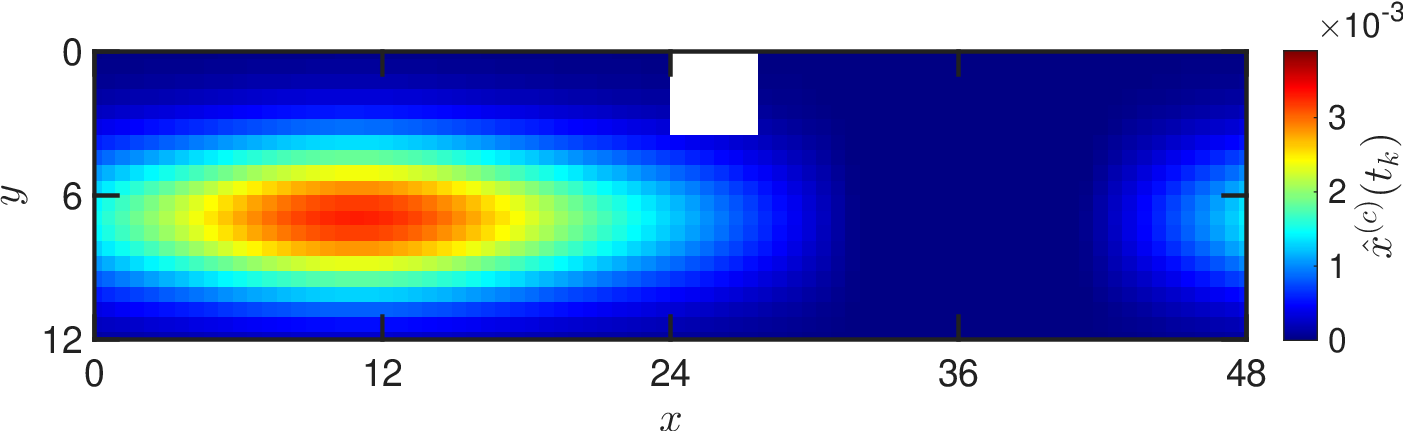}
        \caption{Predicted density at $t_{780} = 195\,\text{s}$}
    \end{subfigure}
    \\
    \begin{subfigure}[b]{0.48\textwidth}
        \includegraphics[width=\textwidth]{Figure_E4g.eps}
        \caption{Ground truth density at $t_{930} =232.5\,\text{s}$}
    \end{subfigure}
    \hfill
    \begin{subfigure}[b]{0.48\textwidth}
        \includegraphics[width=\textwidth]{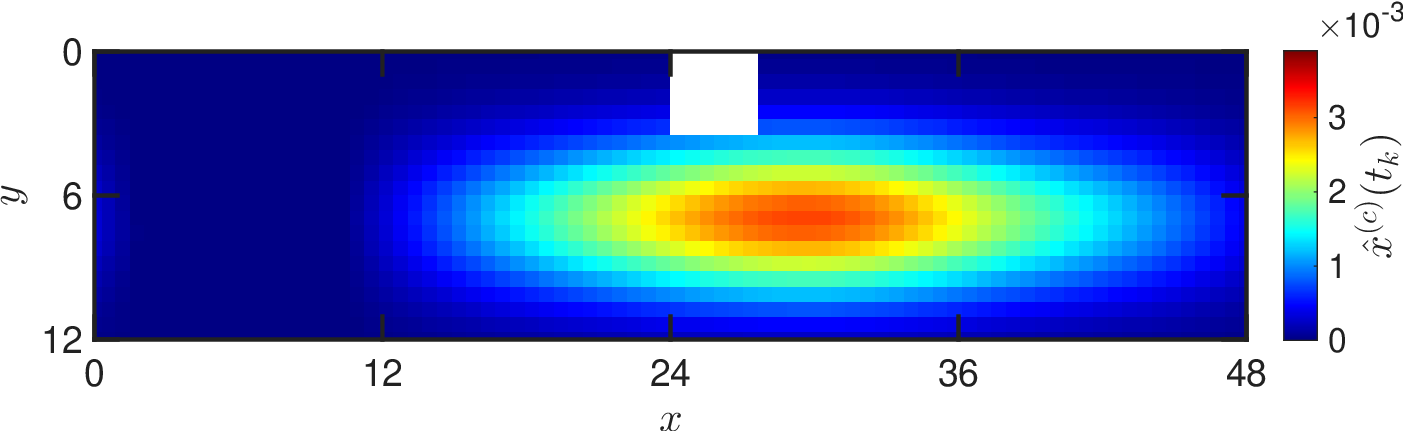}
        \caption{Predicted density at $t_{930} =232.5\,\text{s}$}
    \end{subfigure}
    \caption{Comparison of the ``ground truth" $x^{(c)}(t_k)$ and the predicted $\hat{x}^{(c)}(t_k)$ normalized densities via the closed-loop time-stepper in Eq.~\eqref{eq:lift_ROMev_rest} for the unidirectional flow case, using the MVAR(9) model for an unseen case of the testing set; initialization at the 6th row of Table~\ref{tab:Microscopic_distributions_test}. Panels (a), (c), (e), and (g) show the ``ground truth"  density distribution in $\Omega$, while panels (b), (d), (f), and (h), show the predicted one, for the time steps $t_{250} = 62.5\,\text{s}$, $t_{500} = 125\,\text{s}$, $t_{780} = 195\,\text{s}$, and $t_{930} =232.5\,\text{s}$, respectively.}
    \label{fig:MVARw9den}
\end{figure}

\begin{figure}[htbp]
    \centering
    \begin{subfigure}[b]{0.48\textwidth}
        \includegraphics[width=\textwidth]{Figure_E4a.eps}
        \caption{Ground truth density at $t_{250} = 62.5\,\text{s}$}
    \end{subfigure}
    \hfill
    \begin{subfigure}[b]{0.48\textwidth}
        \includegraphics[width=\textwidth]{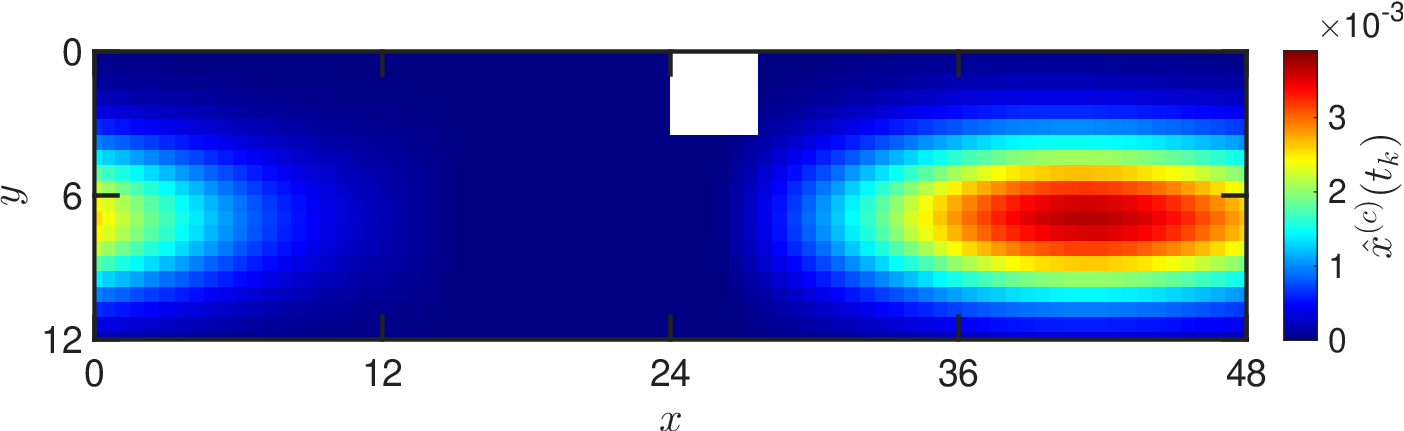}
        \caption{Predicted density at $t_{250} = 62.5\,\text{s}$}
    \end{subfigure}
    \\
    \begin{subfigure}[b]{0.48\textwidth}
        \includegraphics[width=\textwidth]{Figure_E4c.eps}
        \caption{Ground truth density at $t_{500} = 125\,\text{s}$}
    \end{subfigure}
    \hfill
    \begin{subfigure}[b]{0.48\textwidth}
        \includegraphics[width=\textwidth]{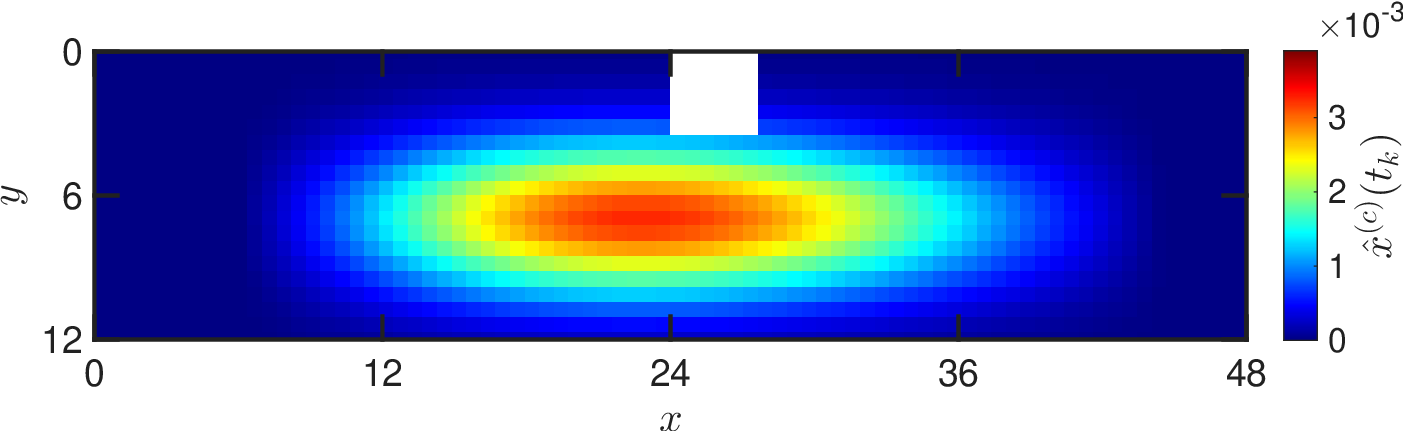}
        \caption{Predicted density at $t_{500} = 125\,\text{s}$}
    \end{subfigure}
    \\
    \begin{subfigure}[b]{0.48\textwidth}
        \includegraphics[width=\textwidth]{Figure_E4e.eps}
        \caption{Ground truth density at $t_{780} = 195\,\text{s}$}
    \end{subfigure}
    \hfill
    \begin{subfigure}[b]{0.48\textwidth}
        \includegraphics[width=\textwidth]{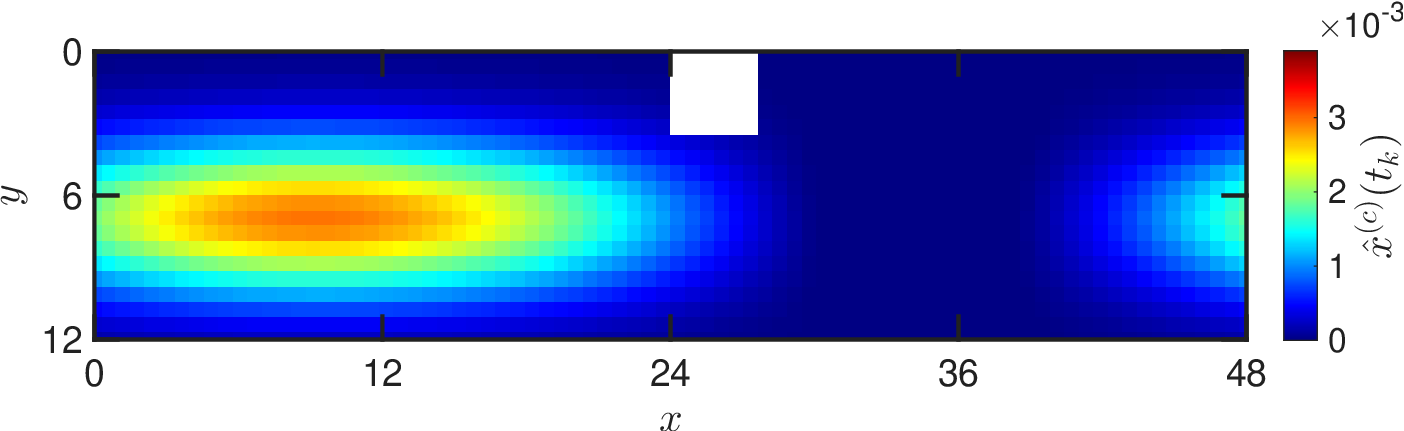}
        \caption{Predicted density at $t_{780} = 195\,\text{s}$}
    \end{subfigure}
    \\
    \begin{subfigure}[b]{0.48\textwidth}
        \includegraphics[width=\textwidth]{Figure_E4g.eps}
        \caption{Ground truth density at $t_{930} =232.5\,\text{s}$}
    \end{subfigure}
    \hfill
    \begin{subfigure}[b]{0.48\textwidth}
        \includegraphics[width=\textwidth]{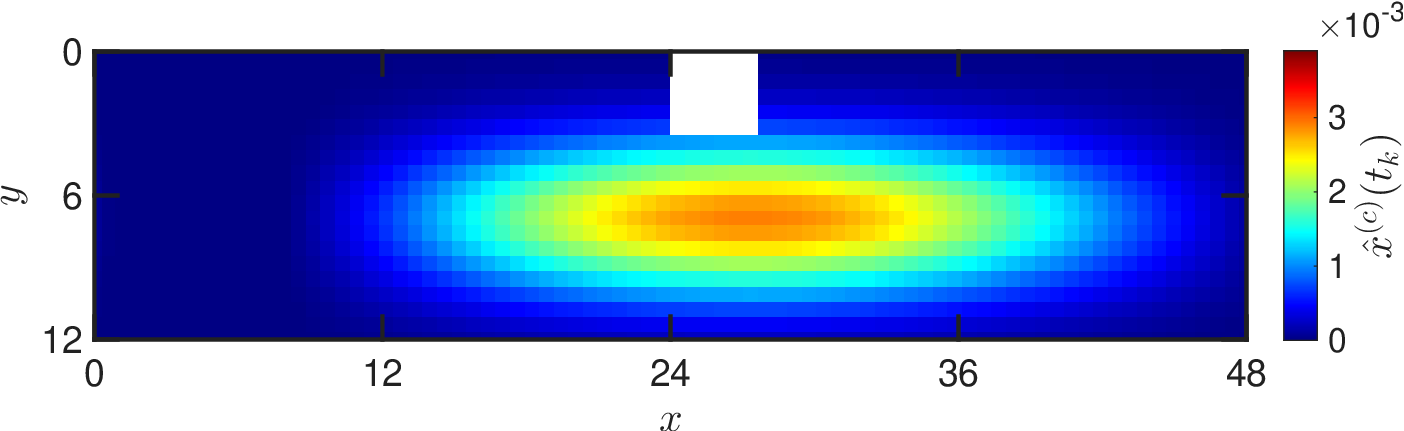}
        \caption{Predicted density at $t_{930} =232.5\,\text{s}$}
    \end{subfigure}
    \caption{Comparison of the ``ground truth" $x^{(c)}(t_k)$ and the predicted $\hat{x}^{(c)}(t_k)$ normalized densities via the closed-loop time-stepper in Eq.~\eqref{eq:lift_ROMev_rest} for the unidirectional flow case, using the LSTM(4) model for an unseen case of the testing set; initialization at the 6th row of Table~\ref{tab:Microscopic_distributions_test}. Panels (a), (c), (e), and (g) show the ``ground truth"  density distribution in $\Omega$, while panels (b), (d), (f), and (h), show the predicted one, for the time steps $t_{250} = 62.5\,\text{s}$, $t_{500} = 125\,\text{s}$, $t_{780} = 195\,\text{s}$, and $t_{930} =232.5\,\text{s}$, respectively.}
    \label{fig:LSTMw4den}
\end{figure}

\begin{figure}[htbp]
    \centering
    \begin{subfigure}[b]{0.48\textwidth}
        \includegraphics[width=\textwidth]{Figure_E4a.eps}
        \caption{Ground truth density at $t_{250} = 62.5\,\text{s}$}
    \end{subfigure}
    \hfill
    \begin{subfigure}[b]{0.48\textwidth}
        \includegraphics[width=\textwidth]{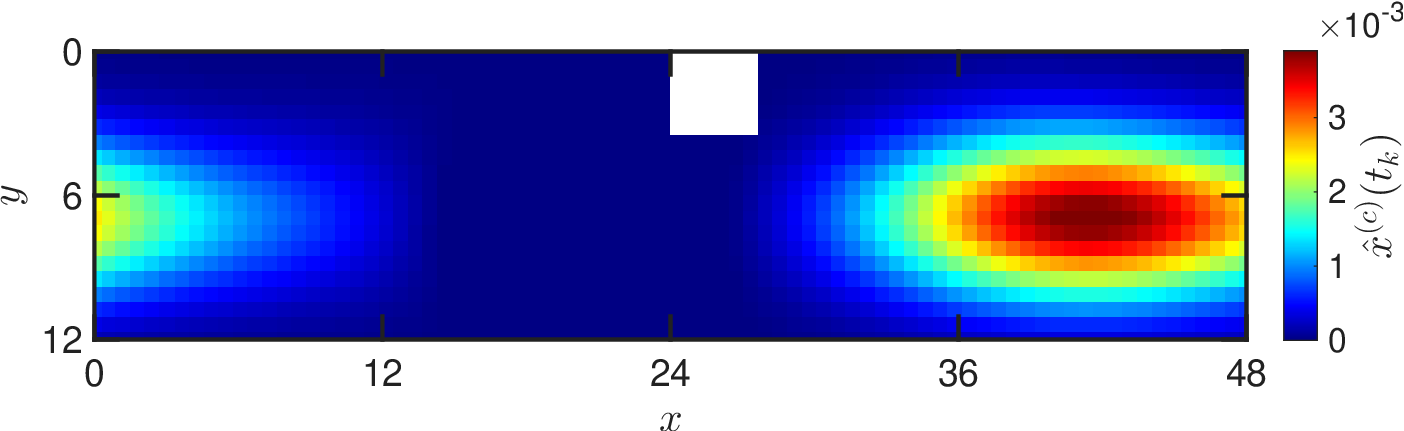}
        \caption{Predicted density at $t_{250} = 62.5\,\text{s}$}
    \end{subfigure}   
    \\
    \begin{subfigure}[b]{0.48\textwidth}
        \includegraphics[width=\textwidth]{Figure_E4c.eps}
        \caption{Ground truth density at $t_{500} = 125\,\text{s}$}
    \end{subfigure}
    \hfill
    \begin{subfigure}[b]{0.48\textwidth}
        \includegraphics[width=\textwidth]{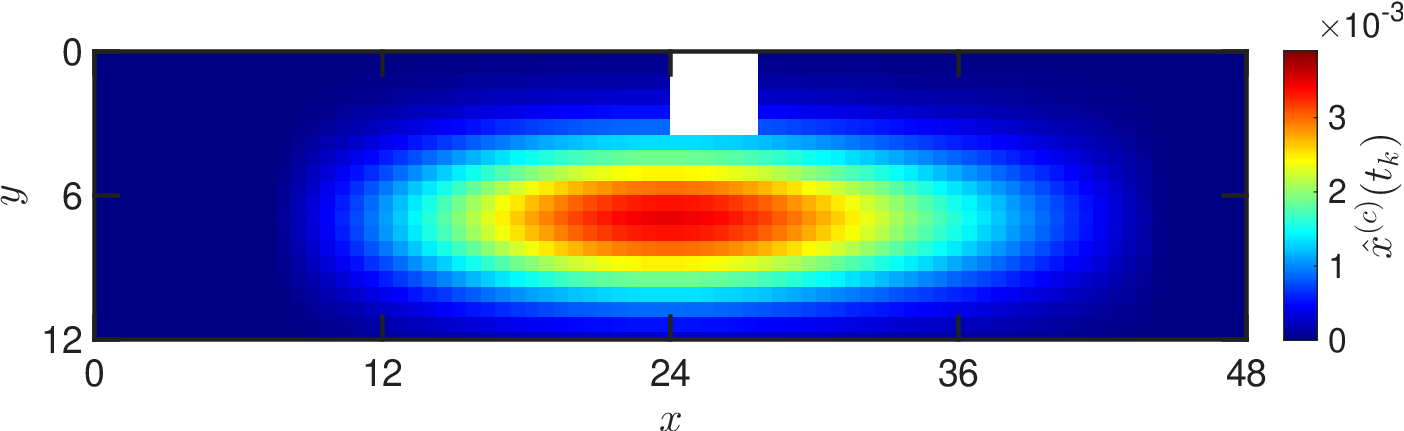}
        \caption{Predicted density at $t_{500} = 125\,\text{s}$}
    \end{subfigure}
    \\
    \begin{subfigure}[b]{0.48\textwidth}
        \includegraphics[width=\textwidth]{Figure_E4e.eps}
        \caption{Ground truth density at $t_{780} = 195\,\text{s}$}
    \end{subfigure}
    \hfill
    \begin{subfigure}[b]{0.48\textwidth}
        \includegraphics[width=\textwidth]{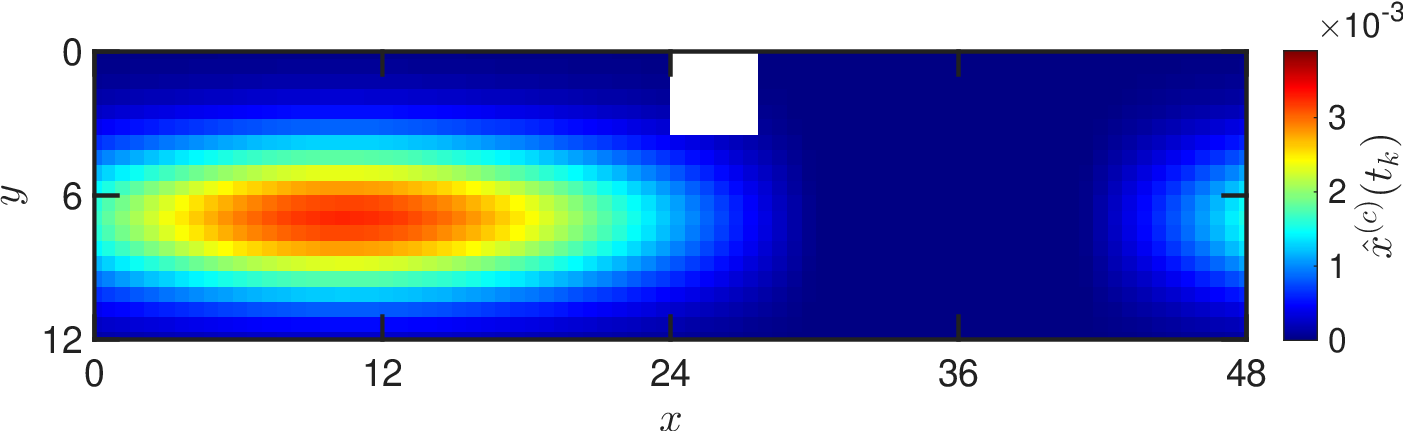}
        \caption{Predicted density at $t_{780} = 195\,\text{s}$}
    \end{subfigure}
    \\
    \begin{subfigure}[b]{0.48\textwidth}
        \includegraphics[width=\textwidth]{Figure_E4g.eps}
        \caption{Ground truth density at $t_{930} =232.5\,\text{s}$}
    \end{subfigure}
    \hfill
    \begin{subfigure}[b]{0.48\textwidth}
        \includegraphics[width=\textwidth]{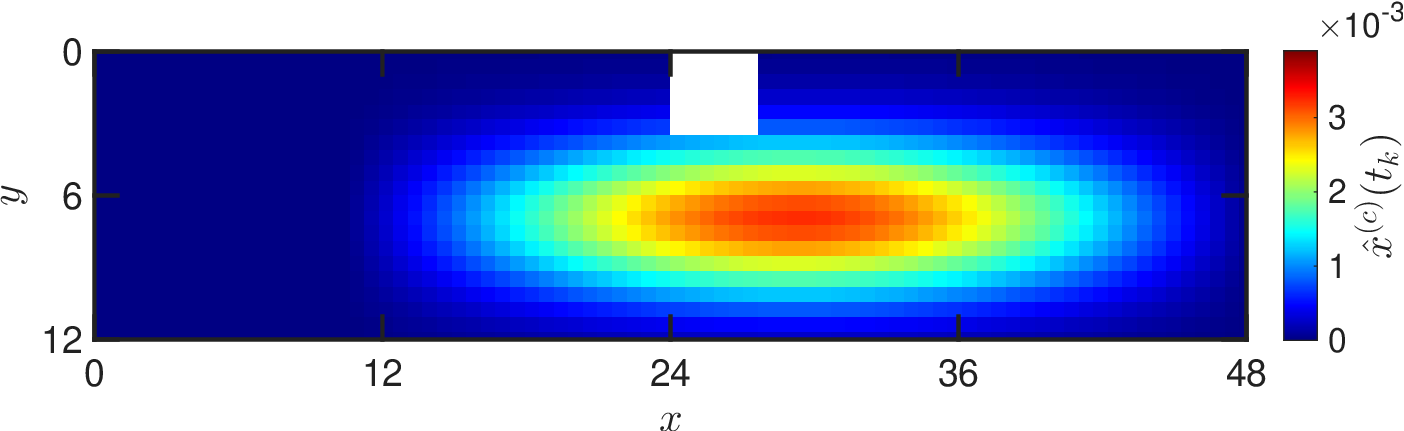}
        \caption{Predicted density at $t_{930} =232.5\,\text{s}$}
    \end{subfigure}
    \caption{Comparison of the ``ground truth" $x^{(c)}(t_k)$ and the predicted $\hat{x}^{(c)}(t_k)$ normalized densities via the closed-loop time-stepper in Eq.~\eqref{eq:lift_ROMev_rest} for the unidirectional flow case, using the LSTM(9) model for an unseen case of the testing set; initialization at the 6th row of Table~\ref{tab:Microscopic_distributions_test}. Panels (a), (c), (e), and (g) show the ``ground truth"  density distribution in $\Omega$, while panels (b), (d), (f), and (h) show the predicted one, for the time steps $t_{250} = 62.5\,\text{s}$, $t_{500} = 125\,\text{s}$, $t_{780} = 195\,\text{s}$, and $t_{930} =232.5\,\text{s}$, respectively.}
    \label{fig:LSTMw9den}
\end{figure}

\begin{figure}[htbp]
    \centering

    \begin{subfigure}[t]{0.48\textwidth}
        \vspace{0pt}\centering
        \includegraphics[width=\textwidth]{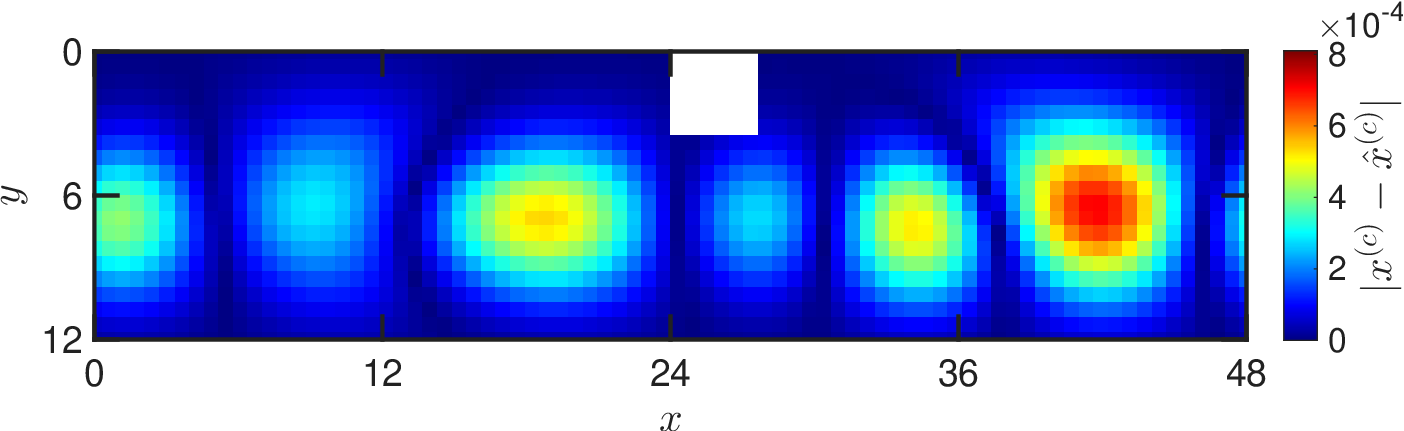}
        \caption{MVAR(4) absolute error at time $t_{250}=62.5\,\mathrm{s}$}
    \end{subfigure}\hfill
    \begin{subfigure}[t]{0.48\textwidth}
        \vspace{0pt}\centering
        \includegraphics[width=\textwidth]{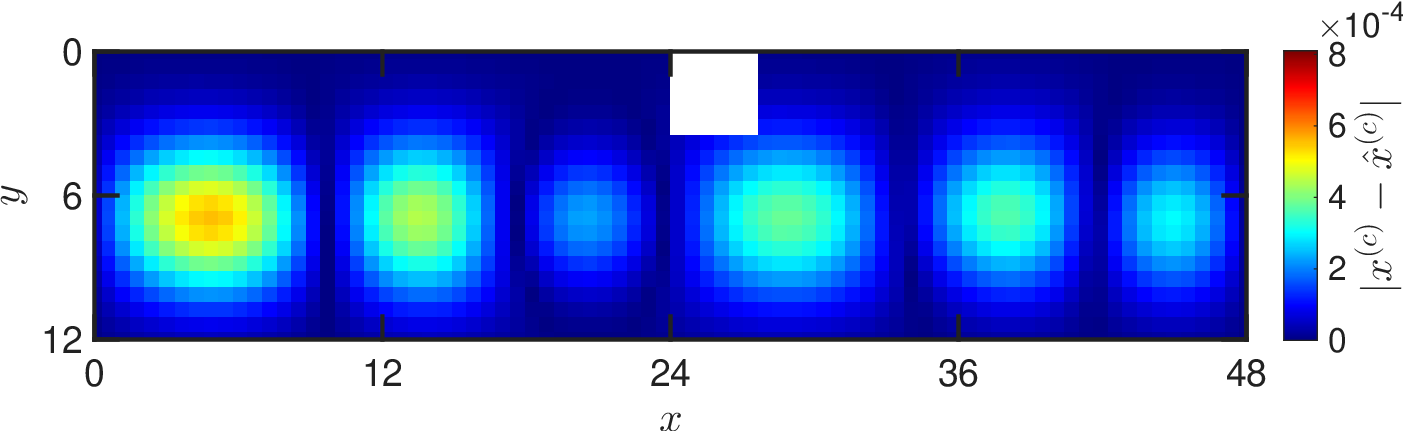}
        \caption{MVAR(4) absolute error at time $t_{930}=232.5\,\mathrm{s}$}
    \end{subfigure}

    \vspace{0.6em}

    \begin{subfigure}[t]{0.48\textwidth}
        \vspace{0pt}\centering
        \includegraphics[width=\textwidth]{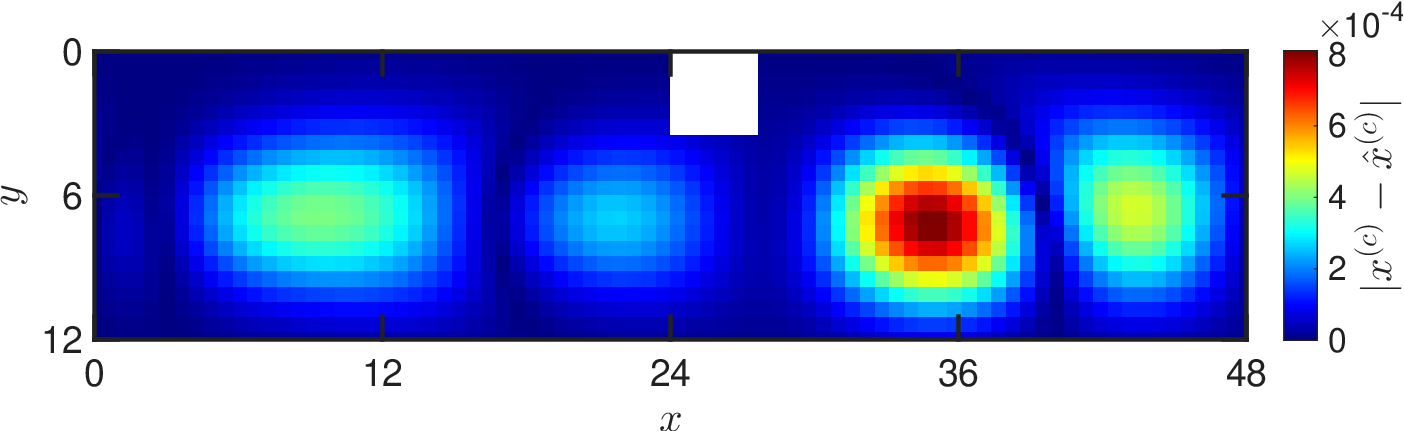}
        \caption{MVAR(9) absolute error at time $t_{250}=62.5\,\mathrm{s}$}
    \end{subfigure}\hfill
    \begin{subfigure}[t]{0.48\textwidth}
        \vspace{0pt}\centering
        \includegraphics[width=\textwidth]{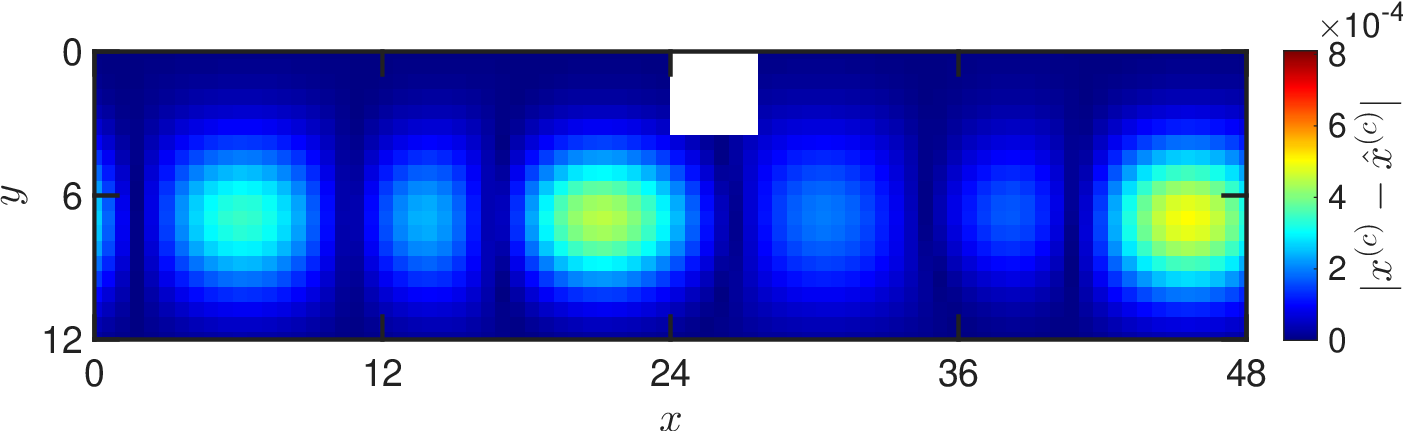}
        \caption{MVAR(9) absolute error at time $t_{930}=232.5\,\mathrm{s}$}
    \end{subfigure}

    \vspace{0.6em}

    \begin{subfigure}[t]{0.48\textwidth}
        \vspace{0pt}\centering
        \includegraphics[width=\textwidth]{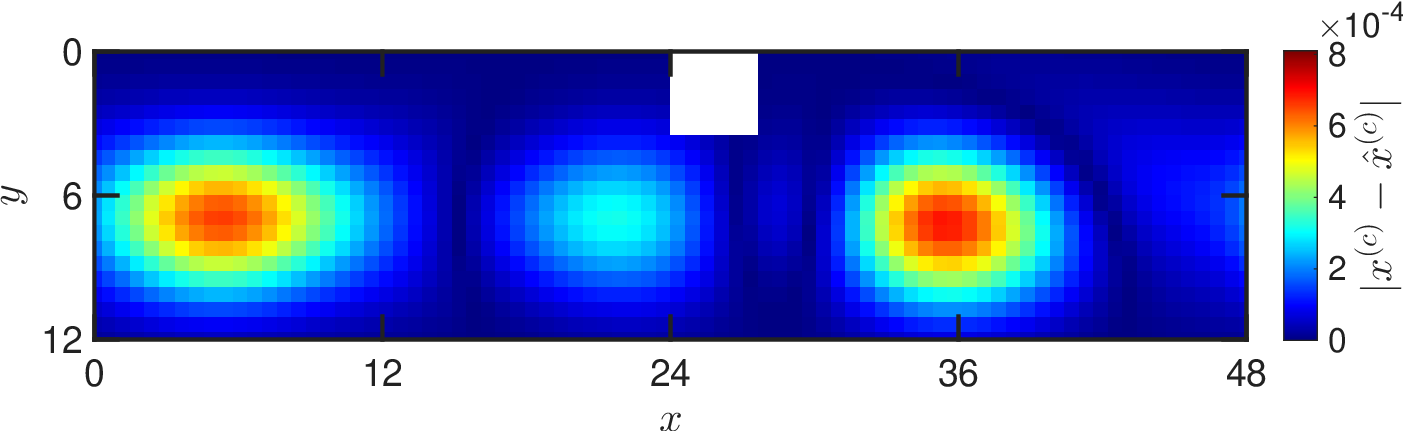}
        \caption{LSTM(4) absolute error at time $t_{250}=62.5\,\mathrm{s}$}
    \end{subfigure}\hfill
    \begin{subfigure}[t]{0.48\textwidth}
        \vspace{0pt}\centering
        \includegraphics[width=\textwidth]{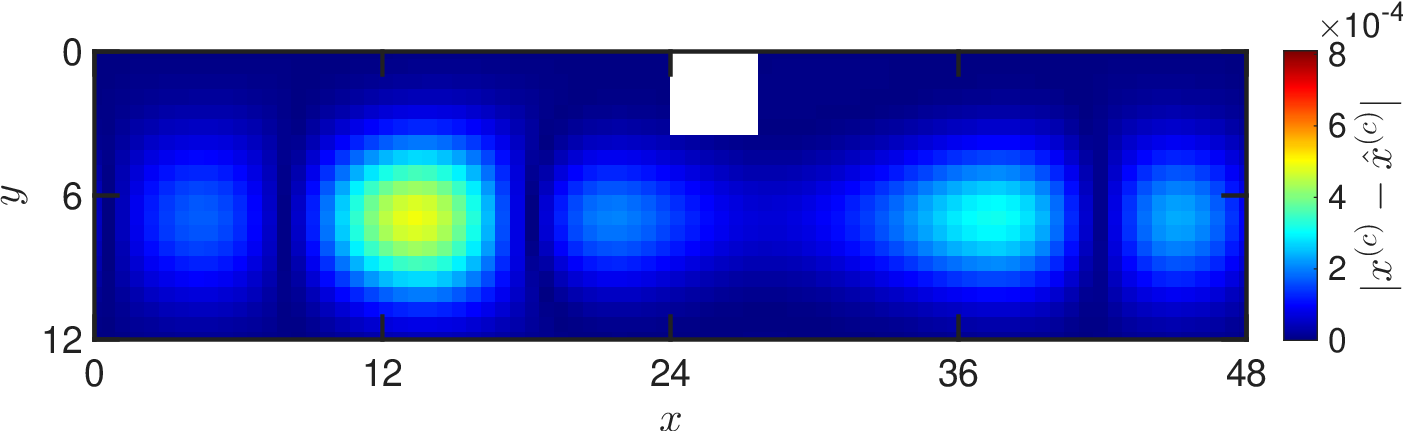}
        \caption{LSTM(4) absolute error at time $t_{930}=232.5\,\mathrm{s}$}
    \end{subfigure}

    \vspace{0.6em}

    \begin{subfigure}[t]{0.48\textwidth}
        \vspace{0pt}\centering
        \includegraphics[width=\textwidth]{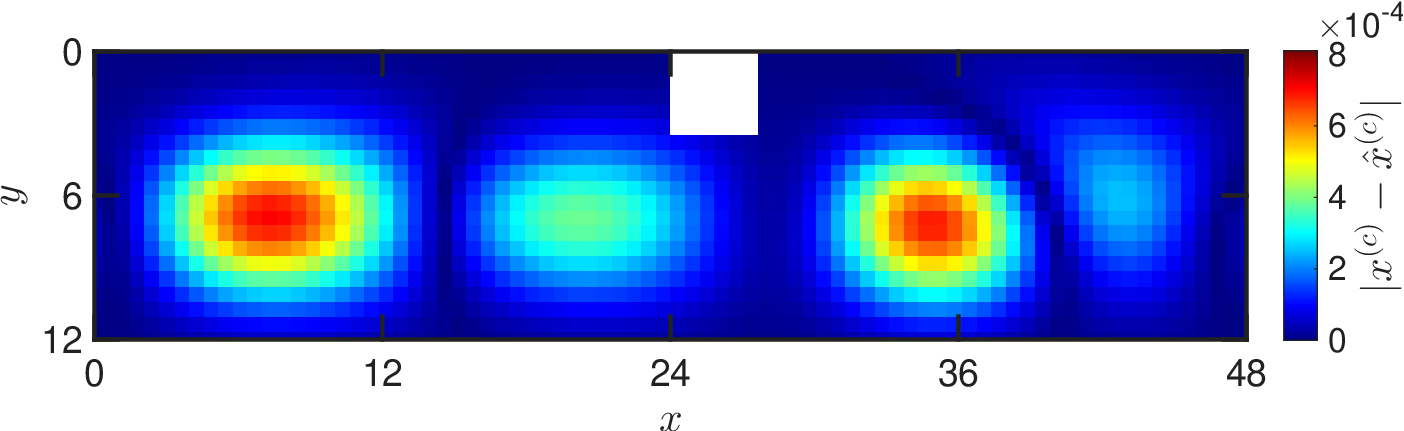}
        \caption{LSTM(9) absolute error at time $t_{250}=62.5\,\mathrm{s}$}
    \end{subfigure}\hfill
    \begin{subfigure}[t]{0.48\textwidth}
        \vspace{0pt}\centering
        \includegraphics[width=\textwidth]{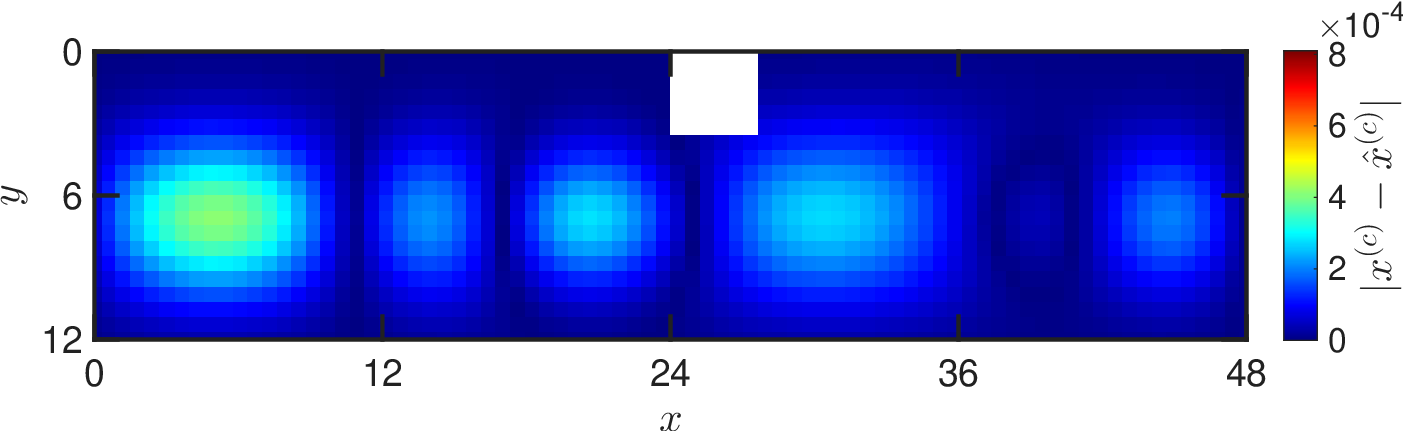}
        \caption{LSTM(9) absolute error at time $t_{930}=232.5\,\mathrm{s}$}
    \end{subfigure}

    \caption{Absolute error between the ``ground truth'' $x^{(c)}(t_k)$ and the predicted $\hat{x}^{(c)}(t_k)$ normalized densities for the unidirectional flow case, corresponding to the predictions in Figs.~\ref{fig:MVARw4den}, \ref{fig:MVARw9den}, \ref{fig:LSTMw4den}, and \ref{fig:LSTMw9den}. Panels (a)--(h) show the spatial distribution of the absolute error $\lvert x^{(c)}(t_k)-\hat{x}^{(c)}(t_k)\rvert$ in $\Omega$ at two indicative time steps $t_{250}=62.5\,\mathrm{s}$ (left column) and $t_{930}=232.5\,\mathrm{s}$ (right column). Panels (a,b) correspond to MVAR(4), (c,d) to MVAR(9), (e,f) to LSTM(4), and (g,h) to LSTM(9) models.}
    \label{fig:Unidirectional_flow_error}
\end{figure}

\clearpage
\newpage
\renewcommand{\theequation}{F.\arabic{equation}}
\renewcommand{\thefigure}{F.\arabic{figure}}
\renewcommand{\thetable}{F.\arabic{table}}
\renewcommand{\thealgorithm}{F.\arabic{algorithm}}
\setcounter{equation}{0}
\setcounter{figure}{0}
\setcounter{table}{0}
\setcounter{algorithm}{0}
\section{Detailed results for the counterflow case} \label{app:CounterFlow}

Here, we provide the detailed results of the proposed framework for the counterflow case discussed in Section~\ref{sb:Counterflow_res}, including baseline errors of the restriction and lifting operators, training of the MVAR and LSTM model, open-loop and closed-loop reconstruction performance. 

Figure~\ref{fig:PODrecon_co} shows the accuracy of the restriction and lifting operators in Eqs.~\eqref{eq:Rest2} and \eqref{eq:Lift2} constructed via the augmented basis in Eq.~\eqref{eq:Aug_Basis}, displaying the evolution of the relative $L_2$ reconstruction error in time per group.

Table~\ref{tab:performance_counter} reports the training results of the MVAR(10) and LSTM(10) models, while Table~\ref{tab:errors_CL_counterflow} summarizes the open- and closed-loop end-to-end performance of the proposed framework.

Figure~\ref{fig:Errors_openloop_co} displays the relative reconstruction errors $e^{2,(c)}_k,\ e^{\infty,(c)}_k$ per group and their 10--90\% percentile ranges for one-step predictions via the open-loop time-stepper (Eq.~\eqref{eq:lift_ROMev_rest_OpenLoop}). Errors are averaged over the $C=20$ cases of different, unseen initial conditions considered, across the time steps $k=1,\ldots,1000$. 

Figures~\ref{fig:Counterflow_densities_MVAR} and \ref{fig:Counterflow_densities_LSTM} compare the ground truth normalized density fields $x^{(l,c)}(t_k)$ and the predicted ones $\hat{x}^{(,c)}(t_k)$ for closed-loop simulations using MVAR(10) and LSTM(10) models, respectively. The test case uses a double gaussian initialization (15th row of Table~\ref{tab:Microscopic_distributions_test}) at four time steps ($k=250,560,780,950$), during which the pedestrian groups do not pass through the corridor boundary. The two interacting populations are shown in red (group~1) and blue (group~2). Panels (a,c,e,g) show the ground truth, while panels (b,d,f,h) present the corresponding closed-loop predictions. The spatial distribution of the resulting absolute errors is shown in Figure~\ref{fig:Counterflow_Error} at time steps $k=250,950$.

\begin{figure}[htbp]
    \centering
    \begin{subfigure}[b]{0.47\textwidth}
        \includegraphics[width=\textwidth]{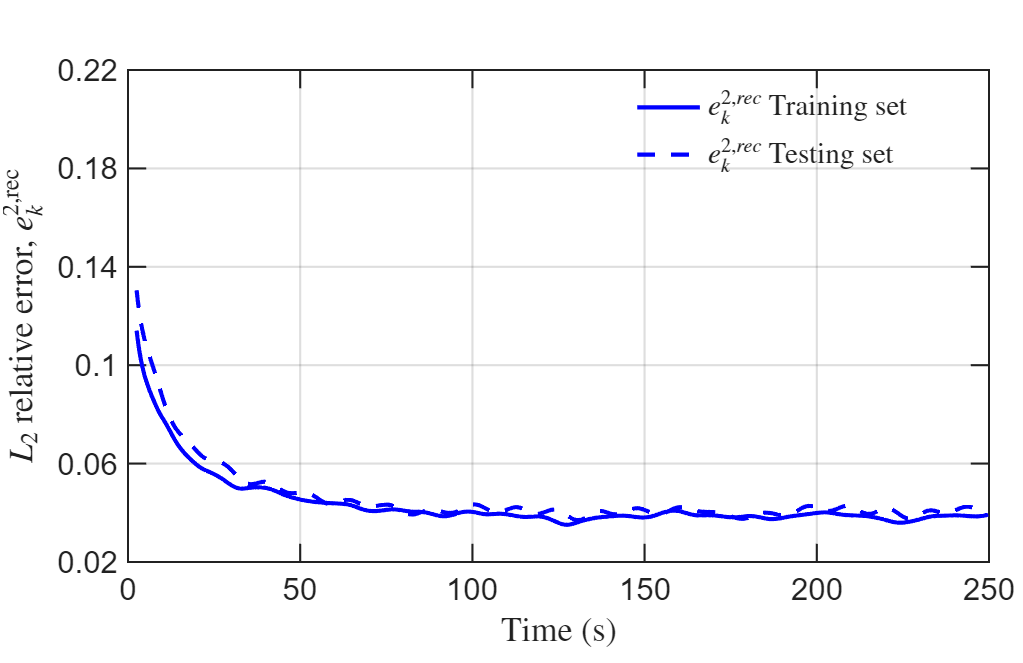}
        \caption{Augmented POD reconstruction error (group~1)}
    \end{subfigure}
    \hfill
    \begin{subfigure}[b]{0.47\textwidth}
        \includegraphics[width=\textwidth]{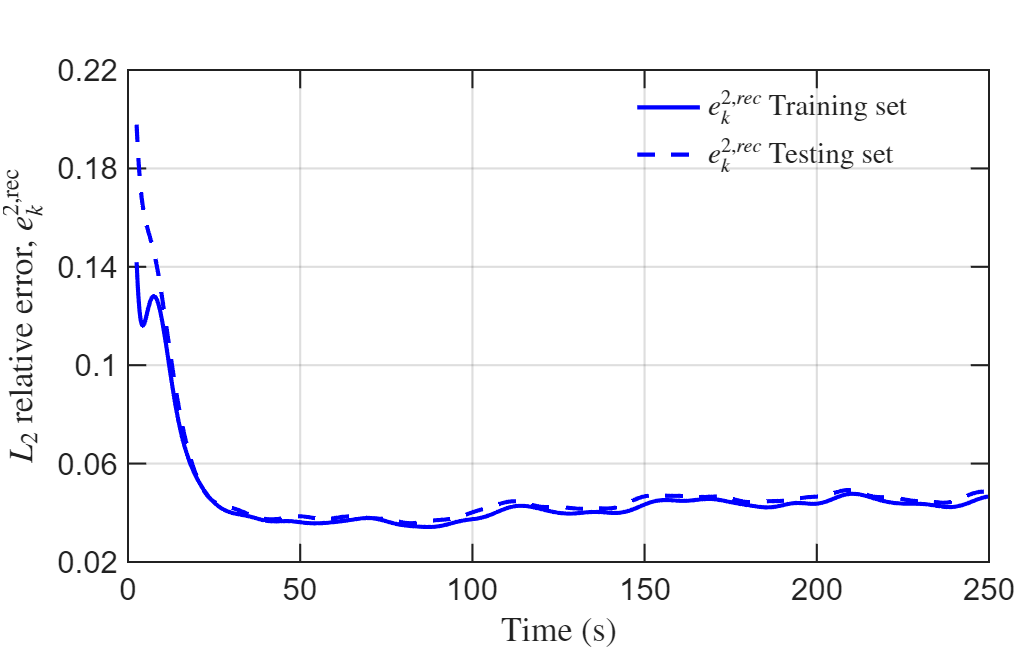}
        \caption{Augmented POD reconstruction error (group~2)}
    \end{subfigure}
    \caption{Accuracy of the restriction and lifting operators in Eqs.~\eqref{eq:Rest2} and \eqref{eq:Lift2} for the counterflow case. Panels (a) and (b) depict the average (over $C=20$ cases with different initial conditions) reconstruction errors $e_k^{2,rec}$ in Eq.~\eqref{eq:PODrecon} for groups 1 and 2, respectively, in time for the training and testing datasets. The augmented basis in Eq.~\eqref{eq:Aug_Basis} is constructed with $d_1=6$, $d_2=8$ POD modes retained for the group-specific matrices $X^{(1)}_{tr},X^{(2)}_{tr}$ and $m=4$ modes retained for cross-covariance matrix $C^{(12)}$.}
    \label{fig:PODrecon_co}
\end{figure}

\begin{table}[htbp]
\centering
\caption{Training results of MVAR(10) and LSTM(10) models at the latent space for the counterflow case. Loss functions $\mathcal{L}_{\text{MVAR}}(\cdot)$ in Eq.~\eqref{eq:mvar_loss} for the MVAR model and $\mathcal{L}_{\text{LSTM}}(\cdot)$ in Eq.~\eqref{eq:lstm_loss} for the LSTM model are reported at the end of training, along with the computational times (in seconds) required for training. For the LSTM, the mean and 10--90\% error percentiles (in parentheses) are shown across the 50 independent training runs with different random initializations. Additionally, the MSE loss function of the MVAR model in Eq.~\eqref{eq:mvar_loss_MSE} is reported for direct comparison to the MSE-based loss $\mathcal{L}_{\text{LSTM}}(\cdot)$.}
\label{tab:performance_counter}
\begin{tabular}{l c c c c}
\toprule
\textbf{Model} 
& $\mathcal{L}_{\text{MVAR}}(\cdot)$ 
& $MSE_{\text{MVAR}}(\cdot)$ 
& $\mathcal{L}_{\text{LSTM}}(\cdot)$ 
& \textbf{Comput. Time (s)} \\
\midrule
MVAR(10) 
& \shortstack{$5.096 \times 10^{-6}$} 
& \shortstack{$1.07 \times 10^{-11}$} 
& \shortstack{-} 
& \shortstack{$0.031$} \\
LSTM(10) 
& \shortstack{-} 
& \shortstack{-} 
& \shortstack{$1.44\ (1.38, 1.71)\times 10^{-7}$} 
& \shortstack{$104\ (84,226)$} \\
\bottomrule
\end{tabular}
\end{table}

\begin{table}[htbp]
\centering
\caption{Open-loop/one-step and closed-loop/recursive prediction errors in the ambient--density profile--space, for the testing set using the trained MVAR and LSTM (best out of 50 training runs) models for the counterflow case. The relative reconstructed $L_1$, $L_2$, and $L_\infty$ errors $e^{1,(c)}_k$, $e^{2,(c)}_k$, and $e^{\infty,(c)}_k$ in Eq.~\eqref{eq:ForcastErr} are reported separately for group~1 and group~2, using the MVAR(10) and LSTM(10) latent dynamics models. Mean values and 10--90\% error percentiles (in parentheses) are shown \emph{over all the} $C=20$ cases and $K=990$ time steps.}
\label{tab:errors_CL_counterflow}
\begin{tabular}{lcccccc}
\toprule
& \multicolumn{3}{c}{\textbf{group 1}} & \multicolumn{3}{c}{\textbf{group 2}} \\
\cmidrule(lr){2-4} \cmidrule(lr){5-7}
&
$L_1$ error, $e^{1,(c)}_k$ & $L_2$ error, $e^{2,(c)}_k$ & $L_\infty$ error, $e^{\infty,(c)}_k$ &
$L_1$ error, $e^{1,(c)}_k$ & $L_2$ error, $e^{2,(c)}_k$ & $L_\infty$ error, $e^{\infty,(c)}_k$ \\
\midrule

\multicolumn{7}{l}{\textbf{Open-Loop}} \\
\midrule
MVAR(10)   &
\shortstack{$0.063$\\$(0.041,\,0.100)$} &
\shortstack{$0.046$\\$(0.027,\,0.076)$} &
\shortstack{$0.046$\\$(0.024,\,0.078)$} &
\shortstack{$0.070$\\$(0.049,\,0.089)$} &
\shortstack{$0.048$\\$(0.034,\,0.063)$} &
\shortstack{$0.051$\\$(0.034,\,0.073)$} \\

LSTM(10)  &
\shortstack{$0.075$\\$(0.049,\,0.109)$} &
\shortstack{$0.057$\\$(0.034,\,0.086)$} &
\shortstack{$0.060$\\$(0.032,\,0.095)$} &
\shortstack{$0.082$\\$(0.061,\,0.101)$} &
\shortstack{$0.056$\\$(0.041,\,0.071)$} &
\shortstack{$0.064$\\$(0.043,\,0.092)$} \\

\midrule
\multicolumn{7}{l}{\textbf{Closed-Loop}} \\
\midrule
MVAR(10) &
\shortstack{$0.108$\\$(0.064,\,0.164)$} &
\shortstack{$0.083$\\$(0.046,\,0.123)$} &
\shortstack{$0.091$\\$(0.046,\,0.151)$} &
\shortstack{$0.118$\\$(0.086,\,0.152)$} &
\shortstack{$0.088$\\$(0.058,\,0.115)$} &
\shortstack{$0.099$\\$(0.060,\,0.144)$} \\
 
LSTM(10)  &
\shortstack{$0.132$\\$(0.090,\,0.201)$} &
\shortstack{$0.104$\\$(0.067,\,0.168)$} &
\shortstack{$0.113$\\$(0.061,\,0.197)$} &
\shortstack{$0.123$\\$(0.095,\,0.157)$} &
\shortstack{$0.095$\\$(0.068,\,0.123)$} &
\shortstack{$0.115$\\$(0.070,\,0.152)$} \\

\bottomrule
\end{tabular}
\end{table}


\begin{figure}[htbp]
    \centering
    \begin{subfigure}[b]{0.47\textwidth}
        \includegraphics[width=\textwidth]{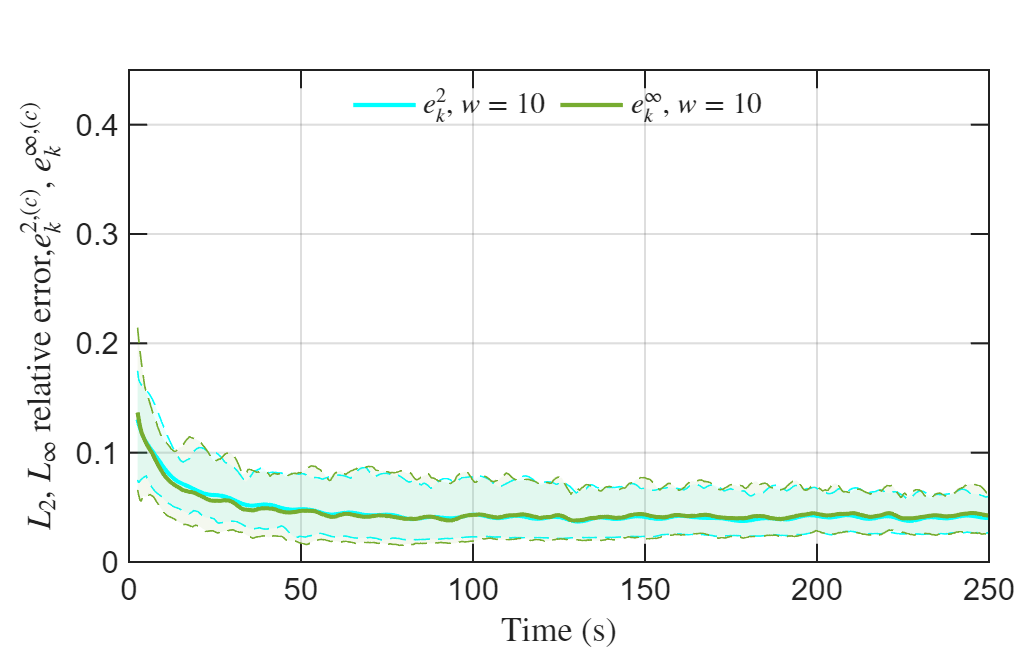}
        \caption{Relative Errors with MVAR(10) (group~1)}
    \end{subfigure}
    \hfill
    \begin{subfigure}[b]{0.47\textwidth}
        \includegraphics[width=\textwidth]{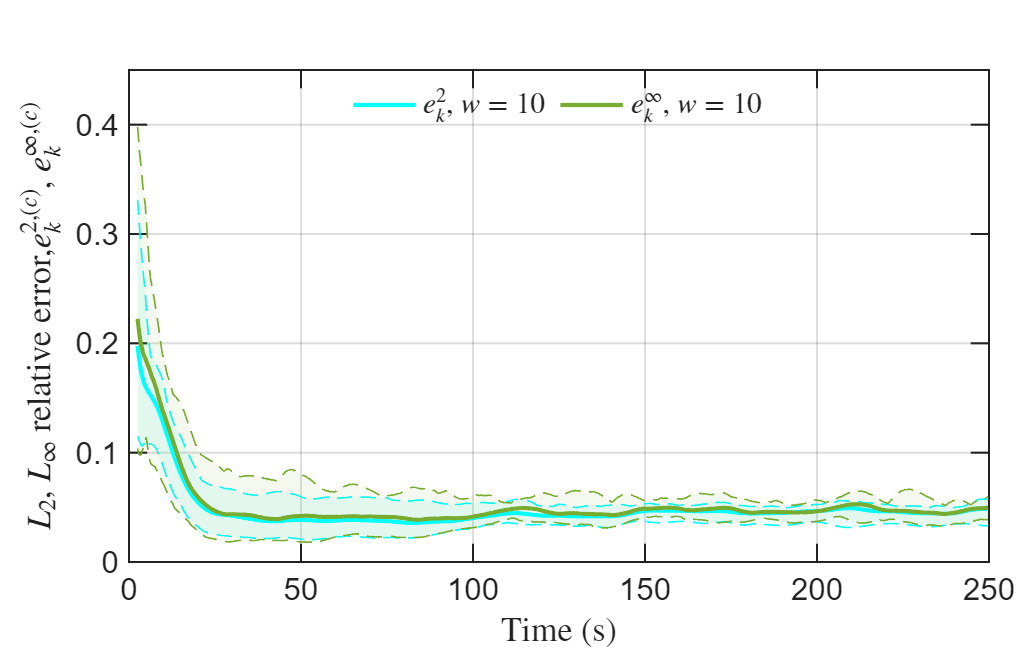}
        \caption{Relative Errors with MVAR(10) (group~2)}
    \end{subfigure}
    
    \vspace{0.3cm}
    
    \begin{subfigure}[b]{0.47\textwidth}
        \includegraphics[width=\textwidth]{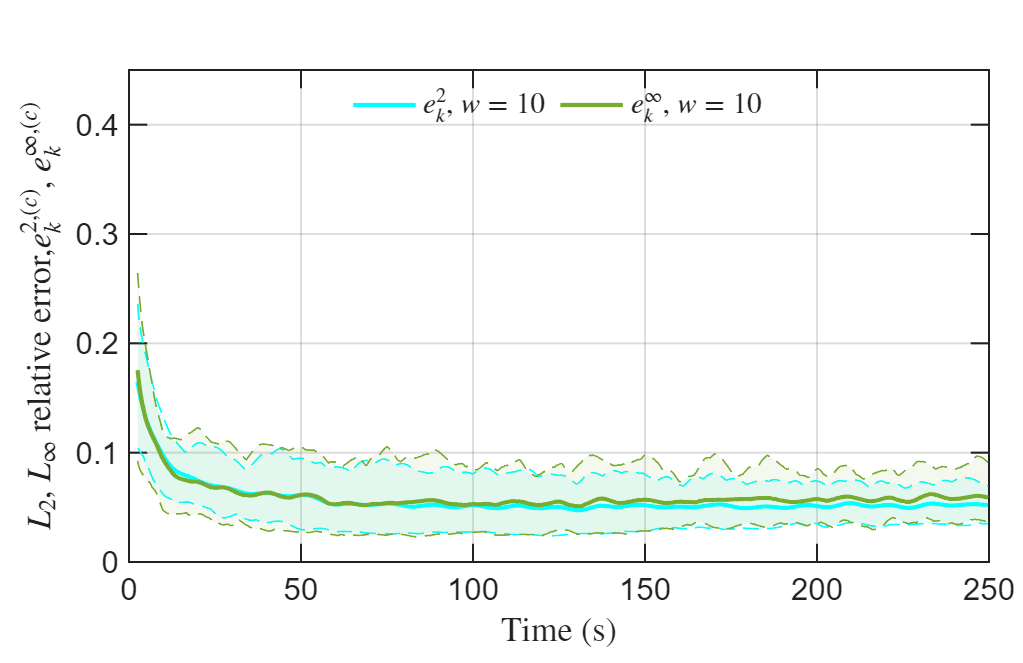}
        \caption{Relative Errors with LSTM(10) (group~1)}
    \end{subfigure}
    \hfill
    \begin{subfigure}[b]{0.47\textwidth}
        \includegraphics[width=\textwidth]{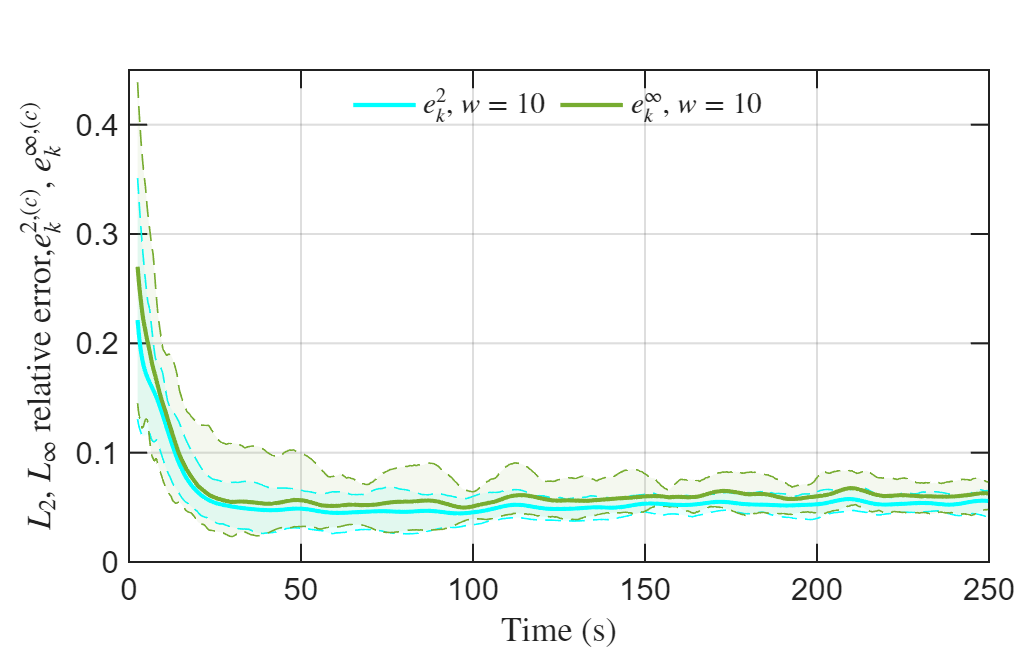}
        \caption{Relative Errors with LSTM(10) (group~2)}
    \end{subfigure}
    
    \caption{One-step (open-loop) prediction errors, in the ambient–-density profile–-space, for the testing set over time, using trained MVAR and LSTM models for the counterflow case. The relative reconstructed $L_2$ (cyan) and $L_\infty$ (green) errors $e^{2,(c)}_k$ and $ e^{\infty,(c)}_k$ Eq.~\eqref{eq:ForcastErr} (using the one-step prediction in Eq.~\eqref{eq:lift_ROMev_rest_OpenLoop}) are shown for the MVAR(10) and LSTM(10) latent dynamics models in panels (a,b) (c,d) per group, respectively. Mean relative errors (solid) and 10--90\% error percentiles (dashed) are shown over $C=20$ cases per time step. For the LSTM models, results correspond to the model achieving the best training performance out of the 50 independent runs.}
    \label{fig:Errors_openloop_co}
\end{figure}

\begin{figure}[htbp]
    \centering
    \begin{subfigure}[t]{0.48\textwidth}
        \vspace{0pt}\centering
        \includegraphics[width=\textwidth]{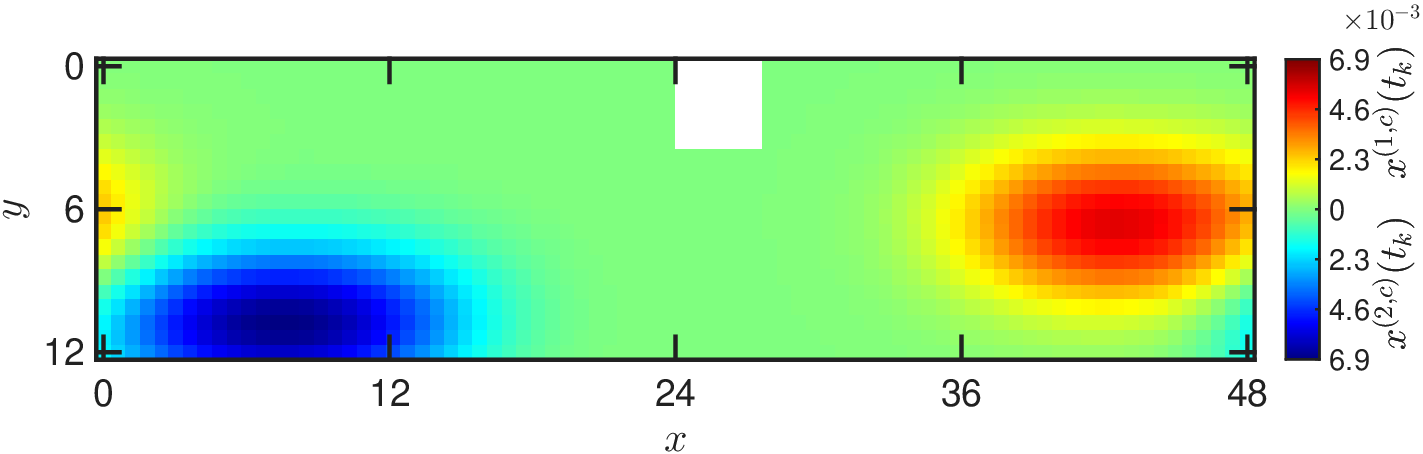}
        \caption{Ground truth density at $t_{250}=62.5\,\mathrm{s}$}
    \end{subfigure}\hfill
    \begin{subfigure}[t]{0.48\textwidth}
        \vspace{0pt}\centering
        \includegraphics[width=\textwidth]{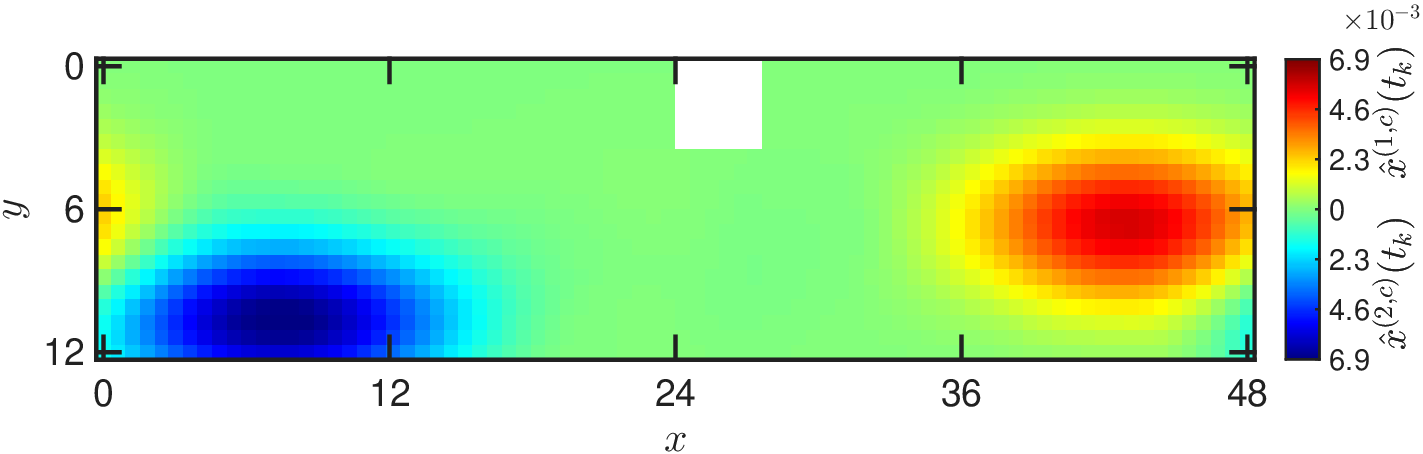}
        \caption{Predicted density at $t_{250}=62.5\,\mathrm{s}$}
    \end{subfigure}

    \vspace{0.6em}

    \begin{subfigure}[t]{0.48\textwidth}
        \vspace{0pt}\centering
        \includegraphics[width=\textwidth]{Figure_F3c}
        \caption{Ground truth density at $t_{560}=140\,\mathrm{s}$}
    \end{subfigure}\hfill
    \begin{subfigure}[t]{0.48\textwidth}
        \vspace{0pt}\centering
        \includegraphics[width=\textwidth]{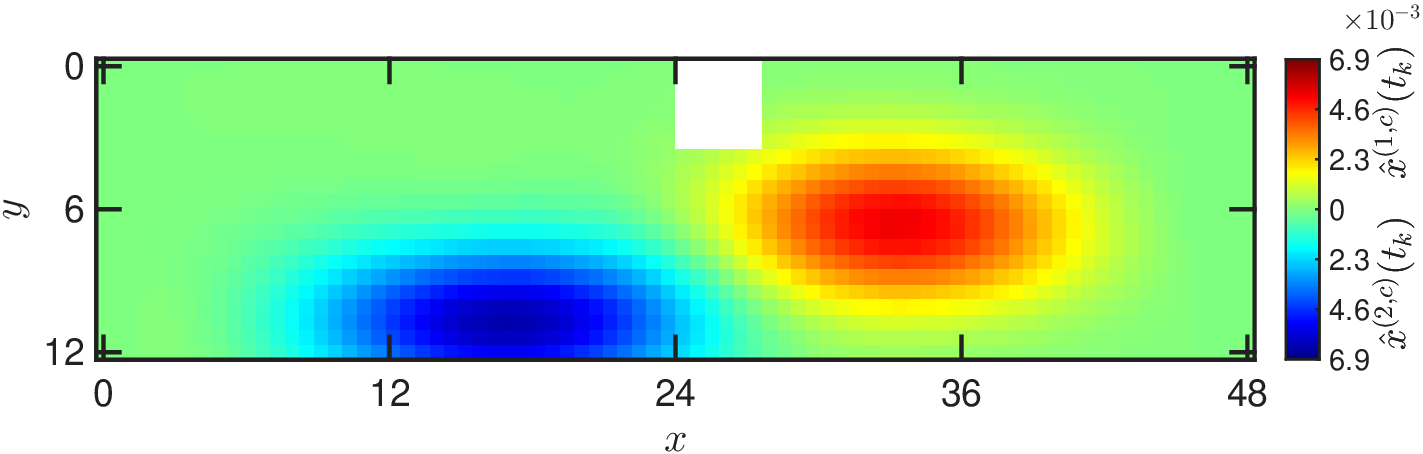}
        \caption{Predicted density at $t_{560}=140\,\mathrm{s}$}
    \end{subfigure}

    \vspace{0.6em}

    \begin{subfigure}[t]{0.48\textwidth}
        \vspace{0pt}\centering
        \includegraphics[width=\textwidth]{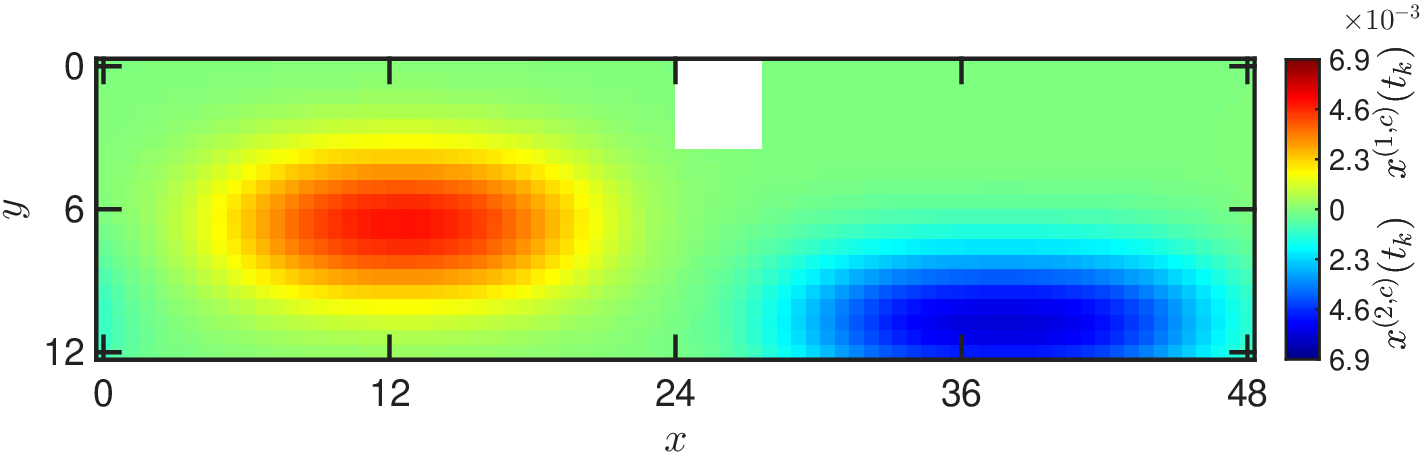}
        \caption{Ground truth density at $t_{780}=195\,\mathrm{s}$}
    \end{subfigure}\hfill
    \begin{subfigure}[t]{0.48\textwidth}
        \vspace{0pt}\centering
        \includegraphics[width=\textwidth]{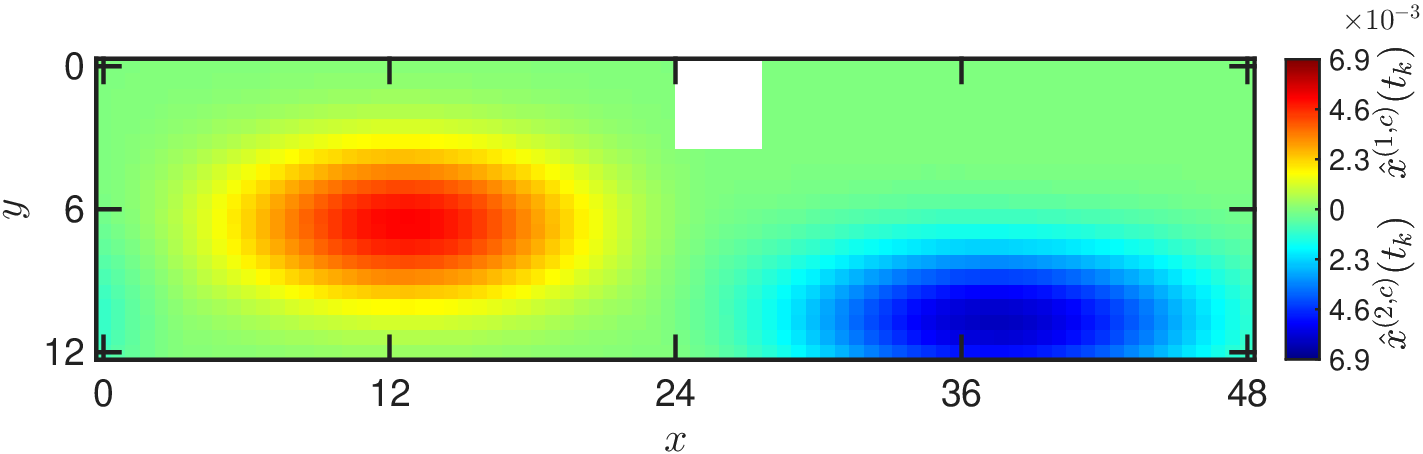}
        \caption{Predicted density at $t_{780}=195\,\mathrm{s}$}
    \end{subfigure}
    \begin{subfigure}[t]{0.48\textwidth}
        \vspace{0pt}\centering
        \includegraphics[width=\textwidth]{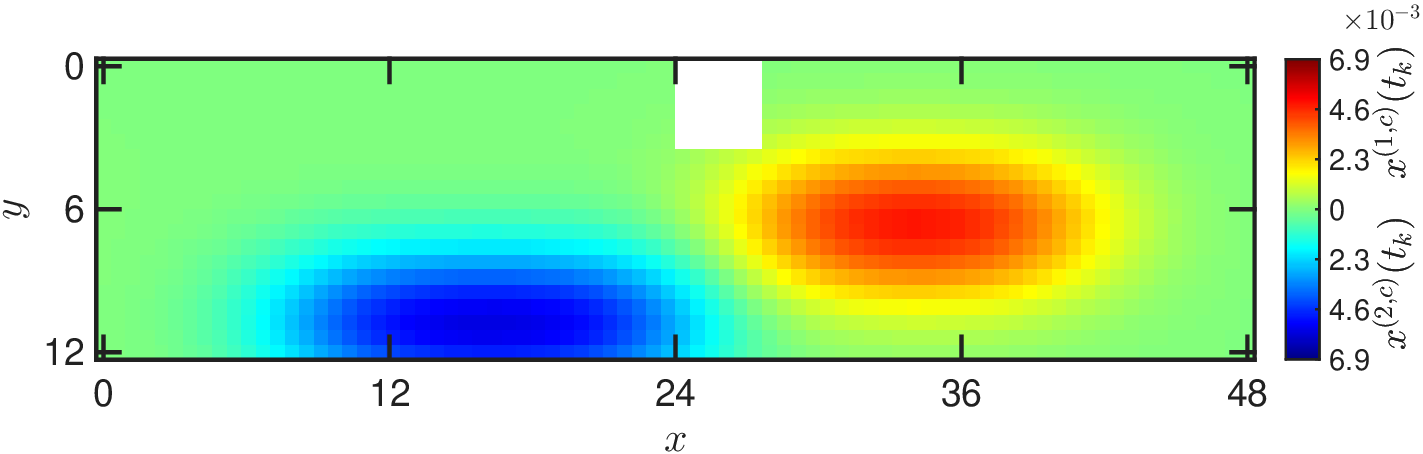}
        \caption{Ground truth density at $t_{950}=237.5\,\mathrm{s}$}
    \end{subfigure}\hfill
    \begin{subfigure}[t]{0.48\textwidth}
        \vspace{0pt}\centering
        \includegraphics[width=\textwidth]{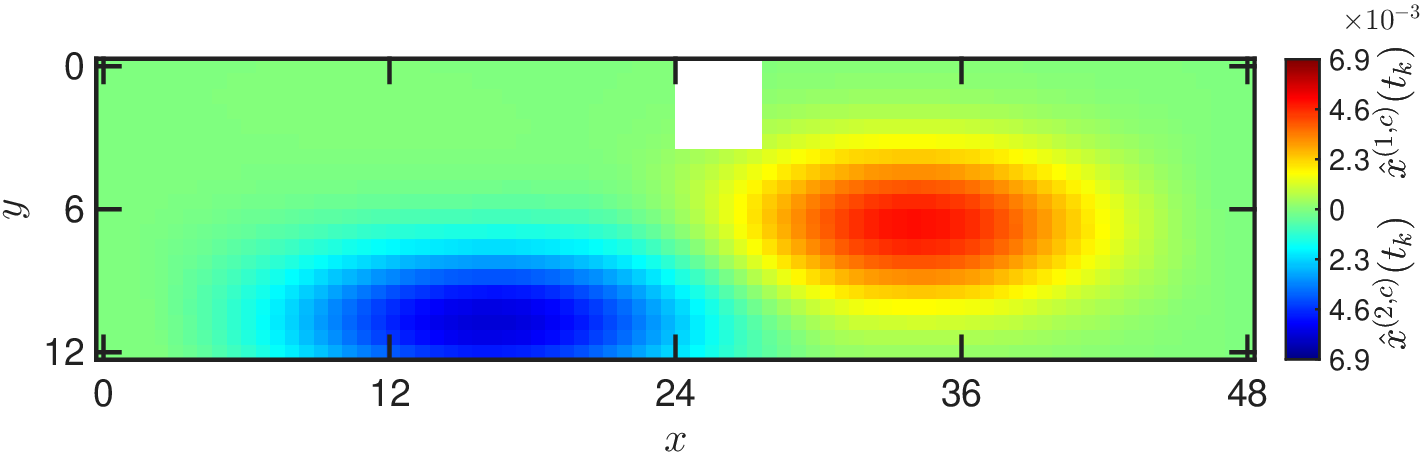}
        \caption{Predicted density at $t_{950}=237.5\,\mathrm{s}$}
    \end{subfigure}
    \caption{Comparison of the ``ground truth'' $x^{(l,c)}(t_k)$ and the predicted $\hat{x}^{(l,c)}(t_k)$ normalized densities via the closed-loop time-stepper in Eq.~\eqref{eq:lift_ROMev_rest} for groups $l=1,2$ in the counterflow case, using the MVAR(10) model for an unseen case of the testing set; initialization at the 15th row of Table~\ref{tab:Microscopic_distributions_test}. Panels (a), (c), (e) and (g) show the ``ground truth'' density distribution in $\Omega$, while panels (b), (d), (f) and (h) show the predicted one, for the timesteps $t_{250}=62.5\,\mathrm{s}$, $t_{560}=140\,\mathrm{s}$, $t_{780}=195\,\mathrm{s}$, and $t_{950}=237.5\,\mathrm{s}$, respectively. The group-specific densities are superimposed with the blue and red lobes corresponding to groups 1 and 2, respectively.}
    \label{fig:Counterflow_densities_MVAR}
\end{figure}

\begin{figure}[htbp]
    \centering

    \begin{subfigure}[t]{0.48\textwidth}
        \vspace{0pt}\centering
        \includegraphics[width=\textwidth]{Figure_F3a.eps}
        \caption{Ground truth density at $t_{250}=62.5\,\mathrm{s}$}
    \end{subfigure}\hfill
    \begin{subfigure}[t]{0.48\textwidth}
        \vspace{0pt}\centering
        \includegraphics[width=\textwidth]{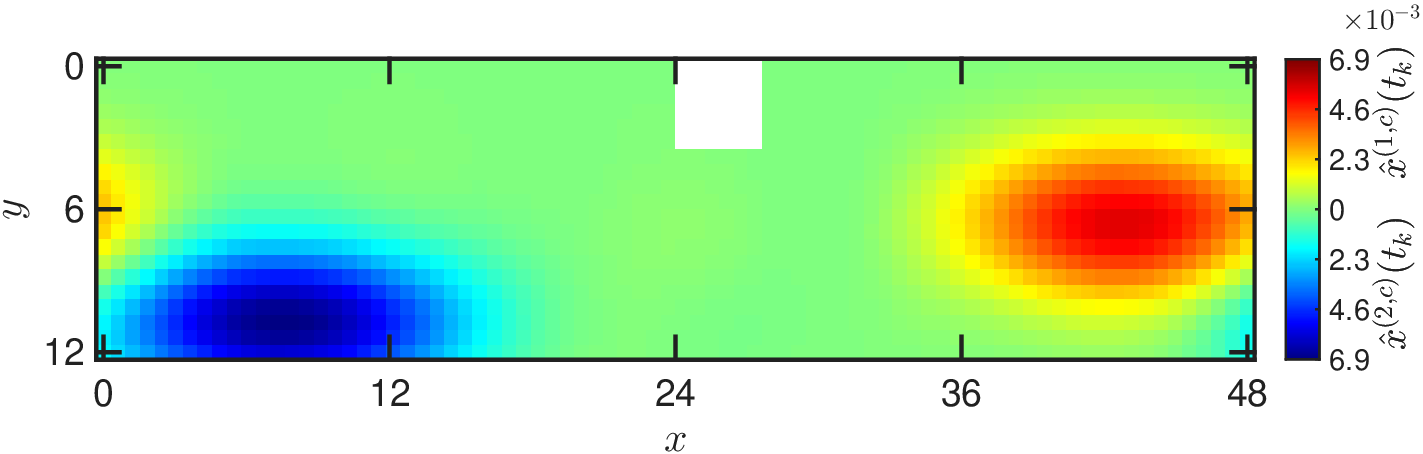}
        \caption{Predicted density at $t_{250}=62.5\,\mathrm{s}$}
    \end{subfigure}
    
    \vspace{0.6em}

    \begin{subfigure}[t]{0.48\textwidth}
        \vspace{0pt}\centering
        \includegraphics[width=\textwidth]{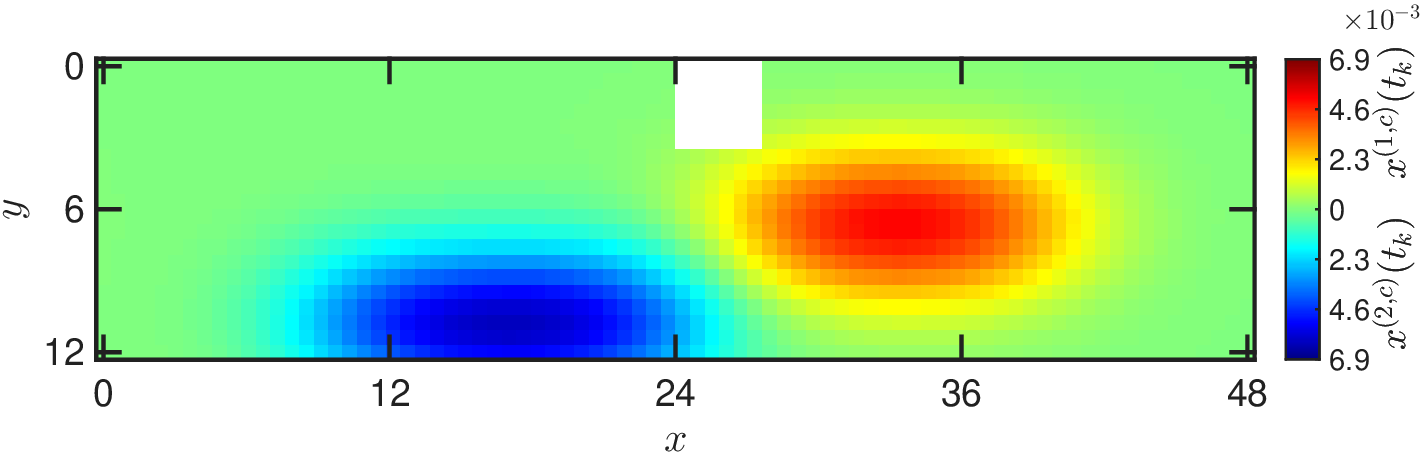}
        \caption{Ground truth density at $t_{560}=140\,\mathrm{s}$}
    \end{subfigure}\hfill
    \begin{subfigure}[t]{0.48\textwidth}
        \vspace{0pt}\centering
        \includegraphics[width=\textwidth]{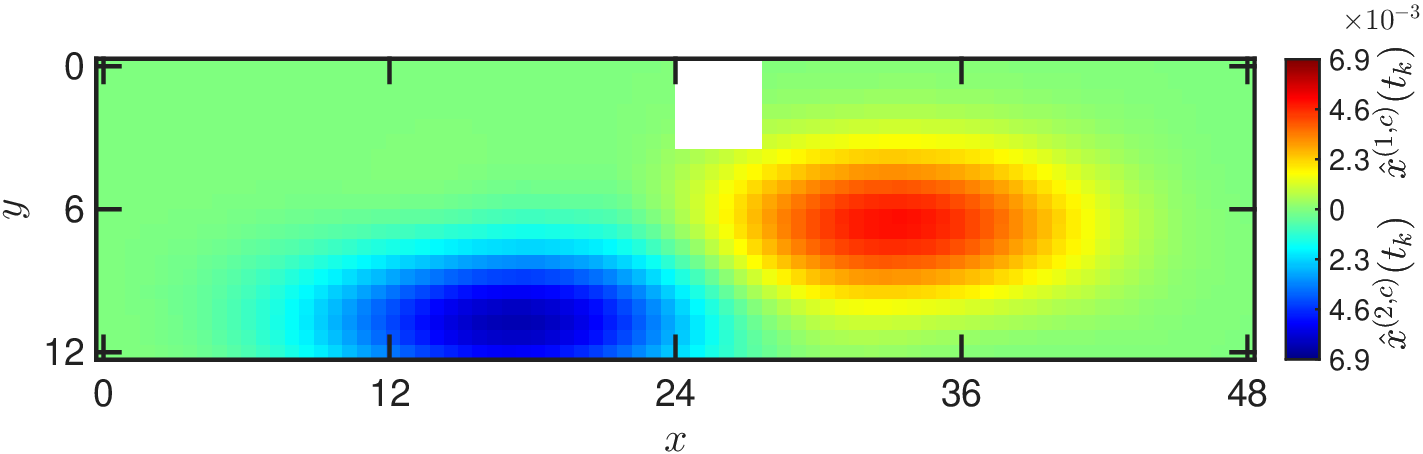}
        \caption{Predicted density at $t_{560}=140\,\mathrm{s}$}
    \end{subfigure}

    \vspace{0.6em}

    \begin{subfigure}[t]{0.48\textwidth}
        \vspace{0pt}\centering
        \includegraphics[width=\textwidth]{Figure_F3e.eps}
        \caption{Ground truth density at $t_{780}=195\,\mathrm{s}$}
    \end{subfigure}\hfill
    \begin{subfigure}[t]{0.48\textwidth}
        \vspace{0pt}\centering
        \includegraphics[width=\textwidth]{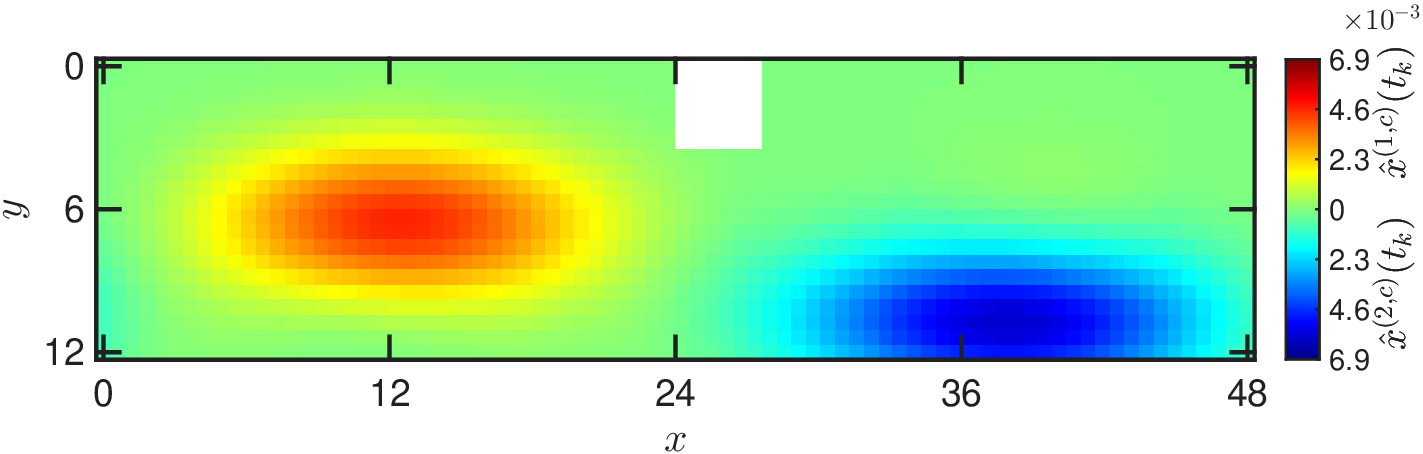}
        \caption{Predicted density at $t_{780}=195\,\mathrm{s}$}
    \end{subfigure}
    \begin{subfigure}[t]{0.48\textwidth}
        \vspace{0pt}\centering
        \includegraphics[width=\textwidth]{Figure_F3g.eps}
        \caption{Ground truth density at $t_{950}=237.5\,\mathrm{s}$}
    \end{subfigure}\hfill
    \begin{subfigure}[t]{0.48\textwidth}
        \vspace{0pt}\centering
        \includegraphics[width=\textwidth]{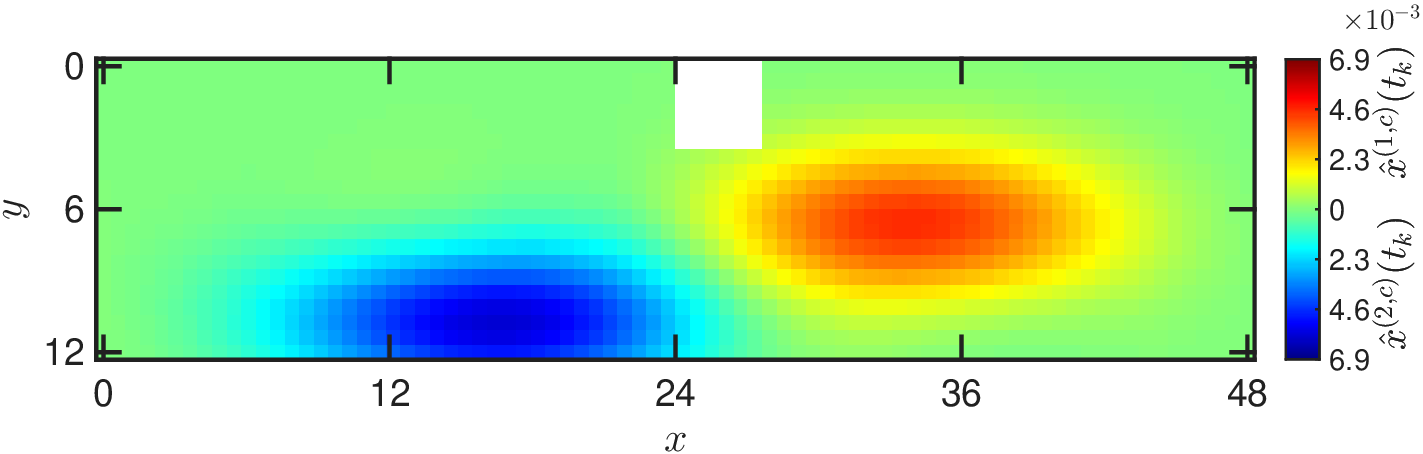}
        \caption{Predicted density at $t_{950}=237.5\,\mathrm{s}$}
    \end{subfigure}
    \caption{Comparison of the ``ground truth'' $x^{(l,c)}(t_k)$ and the predicted $\hat{x}^{(l,c)}(t_k)$ normalized densities via the closed-loop time-stepper in Eq.~\eqref{eq:lift_ROMev_rest} for groups $l=1,2$ in the counterflow case, using the LSTM(10) model for an unseen case of the testing set; initialization at the 15th row of Table~\ref{tab:Microscopic_distributions_test}. Panels (a), (c), (e) and (g) show the ``ground truth'' density distribution in $\Omega$, while panels (b), (d), (f) and (h) show the predicted one, for the timesteps $t_{250}=62.5\,\mathrm{s}$, $t_{560}=140\,\mathrm{s}$, $t_{780}=195\,\mathrm{s}$, and $t_{950}=237.5\,\mathrm{s}$, respectively. The group-specific densities are superimposed with the blue and red lobes corresponding to groups 1 and 2, respectively.}
    \label{fig:Counterflow_densities_LSTM}
\end{figure}

\begin{figure}[htbp]
    \centering
    \begin{subfigure}[t]{0.48\textwidth}
        \vspace{0pt}\centering
        \includegraphics[width=\textwidth]{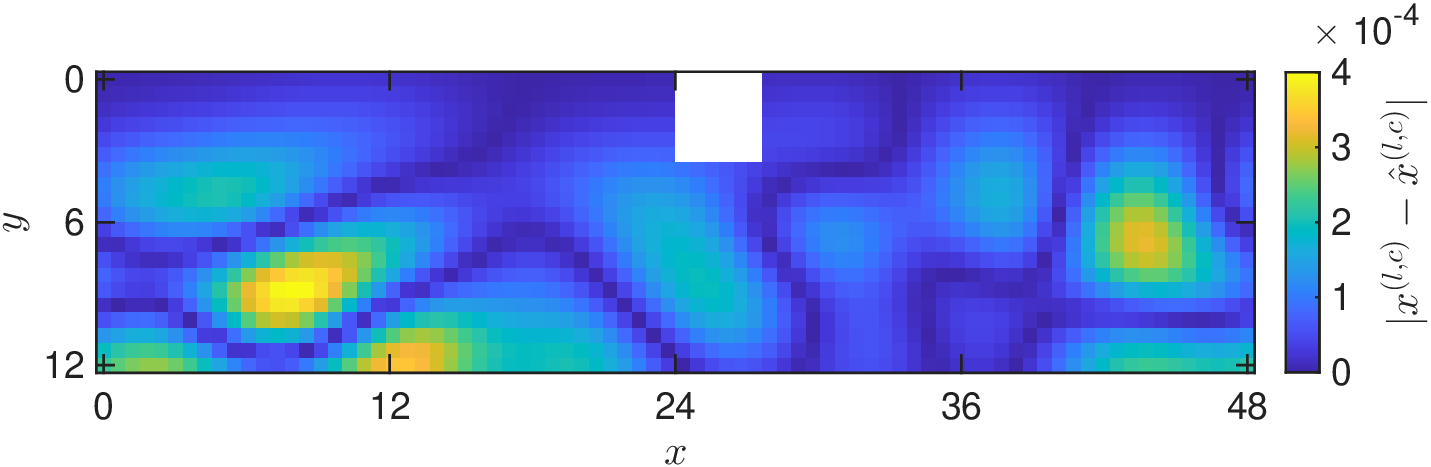}
        \caption{MVAR(10) absolute error at time $t_{250}=62.5\,\mathrm{s}$}
    \end{subfigure}\hfill
    \begin{subfigure}[t]{0.48\textwidth}
        \vspace{0pt}\centering
        \includegraphics[width=\textwidth]{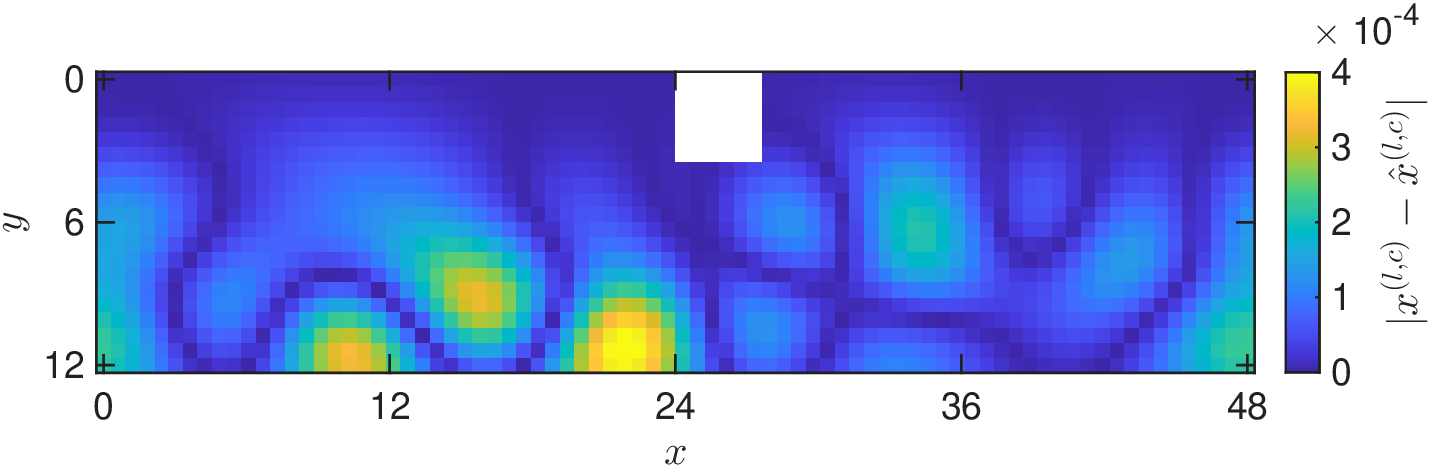}
        \caption{MVAR(10) absolute error at time $t_{950}=237.5\,\mathrm{s}$}
    \end{subfigure}

    \vspace{0.6em}

    \begin{subfigure}[t]{0.48\textwidth}
        \vspace{0pt}\centering
        \includegraphics[width=\textwidth]{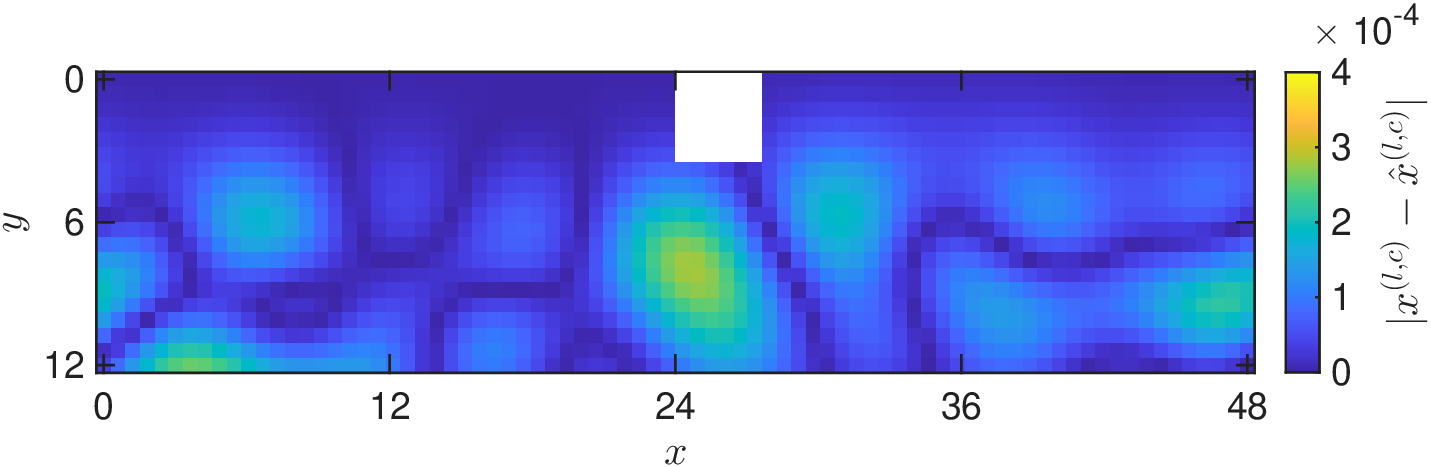}
        \caption{LSTM(10) absolute error at time $t_{250}=62.5\,\mathrm{s}$}
    \end{subfigure}\hfill
    \begin{subfigure}[t]{0.48\textwidth}
        \vspace{0pt}\centering
        \includegraphics[width=\textwidth]{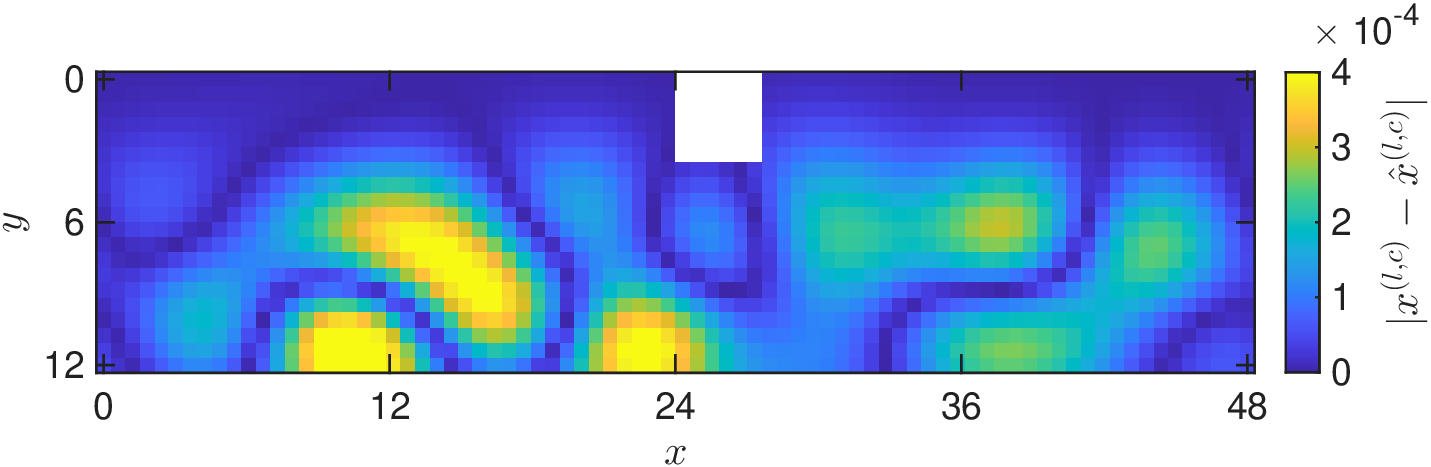}
        \caption{LSTM(10) absolute error at time $t_{950}=237.5\,\mathrm{s}$}
    \end{subfigure}
    \caption{Absolute error between the ``ground truth'' $x^{(l,c)}(t_k)$ and the predicted $\hat{x}^{(l,c)}(t_k)$ normalized densities in the counterflow case, corresponding to the predictions for groups $l=1,2$ in Figs.~\ref{fig:Counterflow_densities_MVAR} and \ref{fig:Counterflow_densities_LSTM}. Panels (a)--(d) show the spatial distribution of the absolute error $\lvert x^{(l,c)}(t_k)-\hat{x}^{(l,c)}(t_k)\rvert$ in $\Omega$ at two indicative time steps $t_{250}=62.5\,\mathrm{s}$ (left column) and $t_{950}=237.5\,\mathrm{s}$ (right column). Panels (a,b) correspond to MVAR(10), and panels (c,f) to LSTM(10) models.}
    \label{fig:Counterflow_Error}
\end{figure}

\end{document}